\pgfplotsset{compat=1.18}
\crefname{thm}{Thm.}{Thm.}
\crefname{cor}{Cor.}{Cor.}
\crefname{lem}{Lem.}{Lem.}
\crefname{prop}{Prop.}{Prop.}
\crefname{defn}{Def.}{Def.}
\theoremstyle{definition}
\newtheorem{rethm@}{Theorem}
\newenvironment{rethm}[2][]{%
    \def\therethm@{\ref{#2}}
    \rethm@[#1]
}
{\endrethm@}
\newtheorem{reprop@}{Proposition}
\newenvironment{reprop}[2][]{%
    \def\thereprop@{\ref{#2}}
    \reprop@[#1]
}
{\endreprop@}
\theoremstyle{definition}
\newtheorem{alem}{Lemma}[section]
\newtheorem{aprop}{Proposition}[section]
\newtheorem{adefn}{Definition}[section]
\crefname{alem}{Lem.}{Lem.}
\crefname{aprop}{Prop.}{Prop.}
\crefname{adefn}{Def.}{Def.}
\definecolor{petroil}{HTML}{24A5AF}
\definecolor{sky}{HTML}{2C91D4}
\definecolor{gold}{HTML}{FF8200}
\definecolor{tomato}{HTML}{E53935}
\definecolor{peas}{HTML}{7CB342}
\definecolor{lightgold}{HTML}{FFE000}
\definecolor{darktomato}{HTML}{B71C1C}
\definecolor{violet}{HTML}{776FB2}
\definecolor{darkviolet}{HTML}{342F58}
\definecolor{bgrey0} {HTML} {78909C}
\definecolor{bgrey1} {HTML} {607D8B}
\definecolor{bgrey2} {HTML} {546E7A}
\definecolor{bgrey3} {HTML} {455A64}
\definecolor{bgrey4} {HTML} {37474F}
\definecolor{bgrey5} {HTML} {263238}
\definecolor{petroil2} {RGB} {36, 165, 175}
\definecolor{petroil4} {RGB} {30, 132, 149}
\definecolor{petroil6} {RGB} {23, 101, 115}
\newcommand{\CP}{\textsc{CP}\xspace}
\newcommand{\DistMult}{\textsc{DistMult}\xspace}
\newcommand{\RESCAL}{\textsc{RESCAL}\xspace}
\newcommand{\ComplEx}{\textsc{ComplEx}\xspace}
\newcommand{\TuckER}{\textsc{TuckER}\xspace}
\newcommand{\OurModelPlain}{GeKC}
\newcommand{\OurModel}{\OurModelPlain\xspace}
\newcommand{\OurModels}{\OurModelPlain s\xspace}
\newcommand{\NNegCP}{\textsc{CP\textsuperscript{+}}\xspace}
\newcommand{\NNegRESCAL}{\textsc{RESCAL\textsuperscript{+}}\xspace}
\newcommand{\NNegComplEx}{\textsc{ComplEx\textsuperscript{+}}\xspace}
\newcommand{\NNegTuckER}{\textsc{TuckER\textsuperscript{+}}\xspace}
\newcommand{\SquaredCP}{\textsc{CP\textsuperscript{2}}\xspace}
\newcommand{\SquaredRESCAL}{\textsc{RESCAL\textsuperscript{2}}\xspace}
\newcommand{\SquaredComplEx}{\textsc{ComplEx\textsuperscript{2}}\xspace}
\newcommand{\SquaredTuckER}{\textsc{TuckER\textsuperscript{2}}\xspace}
\newcommand{\ConstrSquaredComplEx}{\textsc{d-ComplEx\textsuperscript{2}}\xspace}
\renewcommand{\textsf}[1]{{\fontsize{9pt}{9pt}\sffamily #1}}
\newcommand{\tablesize}{\fontsize{8pt}{8pt}\selectfont}
\newcommand{\shortparagraph}[1]{\vspace{-5pt}\paragraph{#1}}
\newcommand{\shortsubsection}[1]{\subsection{#1}}
\newcommand{\shortsection}[1]{\section{#1}}
\title{How to Turn Your Knowledge Graph \\Embeddings 
into 
Generative Models}
\author{
  Lorenzo Loconte\\
  University of Edinburgh, UK\\
  \texttt{l.loconte@sms.ed.ac.uk}\\
  \And
  Nicola Di Mauro\\
  University of Bari, Italy\\
  \texttt{nicola.dimauro@uniba.it}\\
  \AND
  Robert Peharz\\
  TU Graz, Austria\\
  \texttt{robert.peharz@tugraz.at}\\
  \And
  Antonio Vergari\\
  University of Edinburgh, UK\\
  \texttt{avergari@ed.ac.uk}\\
}
\begin{document}

\maketitle

\begin{abstract}
   Some of the most successful knowledge graph embedding (KGE) models for link prediction -- \CP, \RESCAL, \TuckER, \ComplEx\ -- can be interpreted as energy-based models.
   Under this perspective
   they are not amenable for
   exact maximum-likelihood estimation (MLE),
   sampling and struggle to integrate logical constraints.
   This work re-interprets
   the score functions of these KGEs as \textit{circuits} -- constrained computational graphs allowing efficient marginalisation.
   Then, we design two recipes to obtain efficient generative circuit models %
   by either restricting their activations to be non-negative or squaring their outputs. 
   Our interpretation comes
   with little or no loss of performance for link prediction,
   while the circuits framework unlocks exact learning by MLE,
   efficient sampling of new triples,
   and guarantee that logical constraints are satisfied by design.
   Furthermore,
   our models
   scale more gracefully than the original KGEs on
   graphs with millions of entities.
\end{abstract}

\shortsection{Introduction}
\label{sec:introduction}

Knowledge graphs (KGs) are a popular way to represent
structured domain information as directed graphs encoded as collections of triples (\textsf{subject}, \textsf{predicate}, \textsf{object}), where subjects and objects (entities) are nodes in the graph, and predicates their edge labels.
For example, the information that the drug ``\textsf{loxoprofen}'' interacts with the protein ``\textsf{COX2}'' is represented as a triple (\textsf{loxoprofen}, \textsf{interacts}, \textsf{COX2}) in the biomedical KG \textsf{ogbl-biokg} \citep{hu2020ogb}.
As real-world KGs are often incomplete, we are interested in performing reasoning tasks over them while dealing with missing information.
The simplest reasoning task is \textit{link prediction}~\citep{nickel2016kgreview}, i.e.,
querying for the entities that are connected in a KG by an edge labelled with a certain predicate.
For instance, we can retrieve all proteins that the drug ``loxoprofen'' interacts with by asking the query (\textsf{loxoprofen}, \textsf{interacts}, \textsf{?}).

Knowledge graph embedding (KGE) models are state-of-the-art (SOTA) models for link prediction \citep{lacroix2018cp-kbc,ruffinelli2020teach-dog-tricks,chen2021relational-auxiliary-objective} that map entities and predicates to sub-symbolic representations,
which are used to assign a real-valued degree of existence to triples in order to rank them.
For example, the SOTA KGE model \ComplEx~\citep{trouillon2016complex} assigns
(\textsf{loxoprofen}, \textsf{interacts}, \textsf{phosphoric-acid}) and (\textsf{loxoprofen}, \textsf{interacts}, \textsf{COX2}) scores 2.3 and 1.3, 
hence ranking the first higher than the second in our link prediction example.

This simple example, however, also highlights some opportunities that are missed by KGE models.
First, triple scores cannot be directly compared across different queries and across different KGE models, as they can be seen as \textit{negative energies} and not \textit{normalised probabilities over triples}~\citep{lecun2006tutorial-ebm,bordes2013transe,bordes2011se,glorot2013semantic,minervini2016efficient}.
To establish a sound probabilistic interpretation and therefore have \emph{probabilities instead of scores} that can be easily interpreted and compared \citep{zhu2023closer-look-calibration}, we would need to compute the normalisation constant (or partition function), which is impractical for real-world KGs due to their considerable size (see \cref{sec:background}).
Therefore,
learning KGE models by maximum-likelihood estimation (MLE) would be computationally infeasible, which is  the canonical probabilistic way to learn \textit{a generative model over triples}.
A generative model would enable us to sample new triples efficiently, 
e.g., to generate a surrogate KG whose statistics are consistent with the original one or to do data augmentation \citep{chauhan2021probabilistic}. %
Furthermore, traditional KGE models do not provide a principled way to \textit{guarantee the satisfaction of} 
\textit{hard constraints}, which are crucial to ensure trustworthy predictions in safety-critical contexts such as biomedical applications. %
The result is that predictions of these models can blatantly violate simple constraints such as KG schema definitions. 
For instance, the triple that the SOTA \ComplEx ranks higher in our example above violates the semantics of ``\textsf{interacts}'', i.e., such predicate can only hold between drugs (e.g., \textsf{loxoprofen}) and proteins (e.g., \textsf{COX2}) but \textsf{phosphoric-acid} is not a protein.

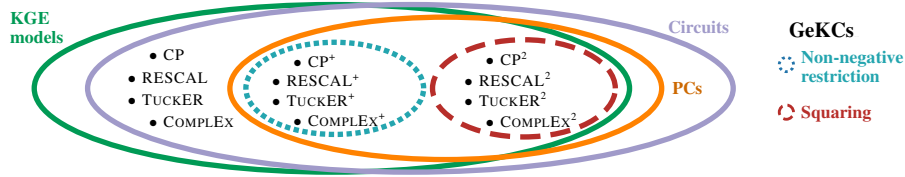
\begin{figure}[!t]
    \centering
    \resizebox{0.9\linewidth}{!}{
    \begin{tikzpicture}[line width=3pt, align=left, anchor=center]
        \node[ellipse, draw, ForestGreen, text width=230pt, text height=60, anchor=north west, xshift=-15pt] (kge-venn) {};
        \node[ellipse, draw, violet!70!white, text width=250pt, text height=60,  anchor=north west, right=30pt of kge-venn.west] (circuits-venn) {};
        \node[ellipse, draw, gold, text width=165pt, text height=50pt, anchor=west, right=80pt of circuits-venn.west] (pcs-venn) {};

        \node[inner sep=2pt, ForestGreen, text width=48, anchor=east, left=10pt of kge-venn.north west] (kge-label)
        {\textbf{KGE}\\\textbf{models}};
        \node[inner sep=2pt, violet!70!white, text width=48, anchor=west, right=12pt of circuits-venn.north east] (circuits-label)
        {\textbf{Circuits}};
        \node[inner sep=2pt, gold!80!black, text width=16pt, anchor=east, right=1pt of pcs-venn.east] (pcs-label)
        {\textbf{PCs}};

        \node[ellipse, draw, dotted, petroil2, text width=64pt, text height=30pt, anchor=west, right=9pt of pcs-venn.west] (pcs-plus-venn) {};
        \node[ellipse, draw, dash pattern=on 4mm off 2mm, tomato!80!black, text width=66pt, text height=32pt, anchor=east, left=8pt of kge-venn.east] (pcs-squared-venn) {};

        \node[inner sep=0pt, text width=60pt, align=left, anchor=north west, below=10pt of pcs-venn.north] (cp) at ($(circuits-venn.north west)!0.2!(pcs-venn.north west)$)
        {\small$\bullet$ \CP};
        \node[inner sep=0pt, text width=60pt, below left=8pt and 12pt of cp.west, anchor=north west] (rescal)
        {\small $\bullet$ \RESCAL};
        \node[inner sep=0pt, text width=60pt, below left=8pt and 0pt of rescal.west, anchor=north west] (tucker)
        {\small$\bullet$ \TuckER};
        \node[inner sep=0pt, text width=60pt, below right=8pt and 12pt of tucker.west, anchor=north west] (complex)
        {\small$\bullet$ \ComplEx};

        \node[inner sep=0pt, text width=60pt, align=left, anchor=north west, below=10pt of pcs-venn.north] (cp-plus) at ($(pcs-venn.north west)!0.22!(kge-venn.north east)$)
        {\small$\bullet$ \NNegCP};
        \node[inner sep=0pt, text width=60pt, below left=6pt and 12pt of cp-plus.west, anchor=north west] (rescal-plus)
        {\small$\bullet$ \NNegRESCAL};
        \node[inner sep=0pt, text width=60pt, below left=6pt and 0pt of rescal-plus.west, anchor=north west] (tucker-plus)
        {\small$\bullet$ \NNegTuckER};
        \node[inner sep=0pt, text width=60pt, below right=6pt and 12pt of tucker-plus.west, anchor=north west] (complex-plus)
        {\small$\bullet$ \NNegComplEx};

        \node[inner sep=0pt, text width=60pt, align=left, anchor=north west, below=12pt of pcs-venn.north] (cp-squared) at ($(pcs-venn.north west)!0.98!(kge-venn.north east)$)
        {\small$\bullet$ \SquaredCP};
        \node[inner sep=0pt, text width=60pt, align=left, below left=6pt and 12pt of cp-squared.west, anchor=north west] (rescal-squared)
        {\small$\bullet$ \SquaredRESCAL};
        \node[inner sep=0pt, text width=60pt, align=left, below left=6pt and 0pt of rescal-squared.west, anchor=north west] (tucker-squared)
        {\small$\bullet$ \SquaredTuckER};
        \node[inner sep=0pt, text width=60pt, align=left, below right=6pt and 12pt of tucker-squared.west, anchor=north west] (complex-squared)
        {\small$\bullet$ \SquaredComplEx};

        \node[inner sep=0pt, ultra thick, circle, dotted, minimum size=8pt, draw=sky!80!black, align=center, above right=10pt and 25pt of circuits-venn.east] (plus-label-bullet) {};
        \node[inner sep=0pt, ultra thick, circle, dash pattern=on 2mm off 1mm, minimum size=8pt, draw=tomato!80!black, align=center, below right=10pt and 25pt of circuits-venn.east] (squared-label-bullet) {};

        \node[inner sep=0pt, text width=20pt, text height=9pt, align=left, above=15pt of plus-label-bullet, anchor=west] (gecks)
        {\large\textbf{\OurModels}{\scalebox{0.015}{feels so clean like a money machine}}};

        \node[petroil2, inner sep=0pt, text width=70pt, text height=9pt, align=left, right=2pt of plus-label-bullet, anchor=west] (plus-label)
        {\textbf{Non-negative restriction}};
        \node[tomato!80!black, inner sep=0pt, text width=80pt, text height=9pt, align=left, right=2pt of squared-label-bullet, anchor=west] (squared-label)
        {\textbf{Squaring}};
    \end{tikzpicture}
    }
    \caption{\textbf{Which KGE models can be used as efficient generative models of triples?} The score functions of popular KGE models such as \ComplEx, \CP, \RESCAL and \TuckER can be easily represented as circuits (lilac). However, to retrieve a valid probabilistic circuit (PC, in orange) that encodes a probability distribution over triples (\OurModels) we need to either \emph{restrict its activations to be non-negative} (in blue, see \cref{sec:non-negative-restriction}) or \emph{square it} (in red, see \cref{sec:non-monotonic-squaring}).
    }
    \label{fig:kge-pcs-venn}
\end{figure}

\shortparagraph{Contributions.}
We show that KGE models that have become a de facto standard, such as \ComplEx and alternatives based on multilinear score functions~\citep{nickel2011rescal,lacroix2018cp-kbc,tucker64extension}, can be represented as structured computational graphs, named \textit{circuits}~\citep{choi2020pc}.
Under this light, \textbf{i)} we propose a different interpretation of these computational graphs and their parameters, to retrieve efficient and yet expressive probabilistic models over triples in a KG, which we name \emph{generative KGE circuits} (\OurModels) (\cref{sec:casting-kge-models-to-pcs}).
We show that \textbf{ii)} not only \OurModels can be efficiently trained by exact MLE, but learning them with widely-used discriminative objectives \citep{joulin2017fastlm,lacroix2018cp-kbc,ruffinelli2020teach-dog-tricks,chen2021relational-auxiliary-objective} also scales far better over large KGs with millions of entities.
In addition, \textbf{iii)} we are able to sample new triples exactly and efficiently from \OurModels, and propose a novel metric to measure their quality (\cref{sec:empirical-evaluation-sampling}).
Furthermore, by leveraging recent theoretical advances in representing circuits \citep{vergari2021compositional}, \textbf{iv)} we guarantee that predictions at test time will never violate logical constraints such as domain schema definitions by design (\cref{sec:injection-logical-constraints}).
Finally, our experimental results show that these advantages come with no or minimal loss in terms of link prediction accuracy. %

\shortsection{From KGs and embedding models\ldots}
\label{sec:background}

\textbf{KGs and embedding models.}
A KG $\calG$ is a directed multi-graph where nodes are entities and edges are labelled with predicates, i.e., elements of two sets $\calE$ and $\calR$, respectively.
We define $\calG$ as a collection of triples $(s,r,o)\subseteq\calE\times\calR\times\calE$, where $s$, $r$, $o$ denote the subject, predicate and object, respectively.
A \emph{KG embedding} (KGE) model maps a triple $(s,r,o)$ to a real scalar via a \emph{score function} $\phi\colon\calE\times\calR\times\calE \to \bbR$.
A common recipe to construct differentiable score functions for many state-of-the-art KGE models \citep{ruffinelli2020teach-dog-tricks} is to (i) map entities and predicates to \textit{embeddings} of rank $d$, i.e., elements of normed vector spaces (e.g., $\bbR^d$), and (ii) combine the embeddings of subject, predicate and object via multilinear maps.
This is the case for KGE models such as \CP~\citep{lacroix2018cp-kbc}, \RESCAL~\citep{nickel2011rescal}, \TuckER~\citep{balazevic2019tucker}, and \ComplEx~\citep{trouillon2016complex} (see \cref{fig:kge-circuits}).
For instance, the score function of \ComplEx\citep{trouillon2016complex}
is defined as $\phi_\ComplEx(s,r,o) = \Re(\langle \ve_s,\vw_r,\overline{\ve_o} \rangle)$
where $\ve_s,\vw_r,\ve_o\in\bbC^d$ are the complex embeddings of subject, predicate and object, $\langle \cdot,\  \cdot,\  \cdot \rangle$ denotes a trilinear product,  $\overline{\ \cdot\ }$ denotes the complex conjugate operator and $\Re(\cdot)$ the real part of complex numbers.

\shortparagraph{Probabilistic loss-derived interpretation.}
KGE models have been traditionally interpreted as \emph{energy-based models} (EBMs) \citep{lecun2006tutorial-ebm,bordes2013transe,bordes2011se,glorot2013semantic,minervini2016efficient}: their score function is assumed to compute the negative energy of a triple.
This interpretation induces a distribution over possible KGs by associating a Bernoulli variable, whose parameter is determined by the score function, to every triple \citep{nickel2016kgreview}.
Learning EBMs under this perspective requires using contrastive objectives \citep{bordes2013transe,nickel2016kgreview,ruffinelli2020teach-dog-tricks},
but several recent works
observed that to achieve SOTA link prediction results one needs only   to predict subjects, objects~\citep{joulin2017fastlm,lacroix2018cp-kbc,ruffinelli2020teach-dog-tricks} or more recently also predicates of triples~\citep{chen2021relational-auxiliary-objective},
i.e., to treat KGEs as \textit{discriminative} classifiers.
Specifically, they are learned by minimising a categorical cross-entropy, e.g., by maximising $\log p(o\mid s,r) = \phi(s,r,o) - \log \sum_{o'\in\calE} \exp \phi(s,r,o')$ for object prediction.
From this perspective, we observe that we can recover an energy-based interpretation if we assume there exist
a joint probability distribution $p$ over three variables $S,R,O$, denoting respectively subjects, predicates and objects.
Written as a Boltzmann distribution, we have that
$p(S=s,R=r,O=o) = (\exp \phi(s,r,o)) / Z$, where $Z=\sum_{s'\in\calE} \sum_{r'\in\calR} \sum_{o'\in\calE} \exp \phi(s',r',o')$ denotes the partition function.
This interpretation is apparently in contrast with the traditional one over possible KGs \citep{nickel2016kgreview}. We reconcile it with ours in \cref{app:interpreting-density-triples}.
Under this view, we can reinterpret and generalise the recently introduced discriminative objectives~\citep{chen2021relational-auxiliary-objective} as a weighted \emph{pseudo-log-likelihood} (PLL)~\citep{varin2011composite}
\begin{equation}
    \label{eq:pll-objective}
    \calL_\text{PLL} := \sum\nolimits_{(s,r,o)\in\calG} \omega_s \log p(s\mid r,o) + \omega_o \log p(o\mid s,r) + \omega_r \log p(r\mid s,o)
\end{equation}
where $\omega_s,\omega_o,\omega_r\in\bbR_+$ can differently weigh each term,
which is a conditional log-probability that can be computed by summing out either $s$, $r$ or $o$,
e.g., to compute $\log p(o\mid s,r)$ above.
Optimisation is usually carried out by mini-batch gradient ascent \citep{ruffinelli2020teach-dog-tricks} and, given a batch of triples $B\subset \calG$, we
have
that exactly computing the PLL objective requires time $\calO(|\calE|\cdot |B|\cdot \cost(\phi))$ and space $\calO(|\calE|\cdot |B|)$ to exploit GPU parallelism \citep{jain2020knowledge},\footnote{For large real-world KGs it is reasonable to assume that $|\calE|\gg|\calR|$.} where $\cost(\phi)$ denotes the complexity of evaluating the $\phi$ once. %

Note that the PLL objective (\cref{eq:pll-objective}) is a traditional proxy for learning \textit{generative} models for which it is infeasible to evaluate the \emph{maximum-likelihood estimation} (MLE) objective\footnote{In general, the PLL is recognised as a proxy for MLE because, under certain assumptions, it is possible to show it can retrieve the MLE solution asymptotically with enough data \citep{hyvrinen2006consistency}.}
\begin{equation}
    \label{eq:mle-objective}
    \calL_\text{MLE} := \sum\nolimits_{(s,r,o)\in\calG} \log p(s,r,o) = - |\calG|\log Z + \sum\nolimits_{(s,r,o)\in\calG} \phi(s,r,o).
\end{equation}
In theory, evaluating $\log p(s,r,o)$ exactly can be done in polynomial time under our three-variable interpretation, as computing $Z$ requires $\calO(|\calE|^2\cdot |\calR|\cdot \cost(\phi))$ time, but in practice
this cost is still prohibitive for real-world KGs.
In fact, it would require summing over $|\calE\times\calR\times\calE|$ evaluations of the score function $\phi$, which for \textsf{FB15k-237} \citep{toutanova2015observed},
 the small fragment  of Freebase \citep{nickel2016kgreview}, translates to \textasciitilde $10^{11}$ evaluations of $\phi$.
This practical bottleneck hinders the generative capabilities of these models and their ability to yield normalised
and interpretable probabilities.
Next, we show how we can reinterpret KGE score functions as to retrieve a generative model over triples
for which computing $Z$ exactly can be done in time $\calO((|\calE|+|\calR|)\cdot \cost(\phi))$, making renormalisation feasible.

\shortsection{\ldots to Circuits\ldots}
\label{sec:from-kge-models-to-circuits}

In this section, we show that popular and successful KGE models 
such as \CP, \RESCAL, \TuckER and \ComplEx (see \cref{fig:kge-circuits} and \cref{sec:background}), can be viewed as structured computational graphs that can, in principle,
enable summing over all possible triples efficiently.
Later, to exploit this efficient summation for marginalisation over triple probabilities, we reinterpret the semantics of these computational graphs as to yield circuits that output valid probabilities.
We start with the needed background about circuits and show that some score functions can be readily represented as circuits.

\begin{defn}[Circuit \citep{choi2020pc,vergari2021compositional}]
    \label{defn:circuit}
    A \emph{circuit} $\phi$ is a parametrized computational graph over variables $\vX$ encoding a function $\phi(\vX)$ and comprising three kinds of computational units: \emph{input}, \emph{product}, and \emph{sum}.
    Each product or sum unit $n$ receives
    as inputs
    the outputs of
    other units, denoted with the set $\inscope(n)$.
    Each unit $n$ encodes a function $\phi_n$ defined as: (i) $l_n(\scope(n))$ if $n$ is an input unit, where $l_n$ is a function over variables $\scope(n)\subseteq\vX$, called its \emph{scope}, (ii) $\prod_{i\in\inscope(n)} \phi_i(\scope(\phi_i))$ if $n$ is a product unit, and (iii) $\sum_{i\in\inscope(n)} \theta_i\phi_i(\scope(\phi_i))$ if $n$ is a sum unit, with $\theta_i\in\bbR$ denoting the weighted sum parameters.
    The scope of a product or sum unit $n$ is the union of the scopes of its inputs, i.e., $\scope(n) = \bigcup_{i\in\inscope(n)} \scope(i)$.
\end{defn}

\cref{fig:kge-circuits} and \cref{fig:tucker-eval} show examples of circuits.
Next, we introduce the two structural properties that  enable efficient summation and \cref{prop:kge-models-as-circuits} certifies that the aforementioned KGEs have these properties.

\begin{defn}[Smoothness and Decomposability]
    \label{defn:smoothness-decomposability}
    A circuit is \emph{smooth} if for every sum unit $n$, its input units depend all on the same variables, i.e, $\forall i,j\in\inscope(n)\colon \scope(i) = \scope(j)$.
    A circuit is \emph{decomposable} if the inputs of every product unit $n$ depend on disjoint sets of variables, i.e, $\forall i,j\in\inscope(n)\ i\neq j\colon \scope(i)\cap\scope(j) = \emptyset$.
\end{defn}

\begin{prop}[Score functions of KGE models as circuits]
    \label{prop:kge-models-as-circuits}
    The computational graphs of the score functions $\phi$ of \CP, \RESCAL, \TuckER and \ComplEx are smooth and decomposable circuits over $\vX=\{S,R,O\}$, 
    whose evaluation cost is $\cost(\phi)\in\Theta(|\phi|)$, where $|\phi|$ denotes the number of edges in the circuit, also called its size.
    For example, the size of the circuit for \CP is  $|\phi_{\CP}|\in\calO(d)$.
\end{prop}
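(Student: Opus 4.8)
The plan is to prove the claim constructively: for each of the four score functions I would write it out as an explicit computational graph, read off the scope of every unit, and then verify smoothness and decomposability directly from \cref{defn:smoothness-decomposability}. The common structure to exploit is that all four scores are \emph{multilinear} in the three embeddings of $s$, $r$ and $o$ (the graphs depicted in \cref{fig:kge-circuits}). Concretely, I would take as \emph{input units} the coordinate look-ups: for each rank index $k$, the map $s\mapsto e_{s,k}$ is a function of $S$ alone, so it is an input unit with $\scope = \{S\}$, and likewise the relation coordinates have $\scope = \{R\}$ and the object coordinates $\scope = \{O\}$. This is the key modelling step -- embedding look-ups become single-variable input functions -- and everything above the inputs is an interconnection of product and sum units.

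Decomposability would then be immediate. Writing $\CP$ as $\phi_{\CP}(s,r,o) = \langle \ve_s, \vw_r, \ve_o\rangle = \sum_{k=1}^d e_{s,k}\, w_{r,k}\, e_{o,k}$, each summand is a product unit whose three inputs have scopes $\{S\}$, $\{R\}$, $\{O\}$, which are pairwise disjoint; the same holds for $\RESCAL$ (as $\sum_{i,j} e_{s,i}\, W_{r,ij}\, e_{o,j}$) and $\TuckER$ (as $\sum_{i,j,k} \mathcal{W}_{ijk}\, e_{s,i}\, w_{r,j}\, e_{o,k}$, where the core-tensor entries are constants absorbed into the sum weights $\theta$), and for $\ComplEx$ after the step described below. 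Thus every product mixes factors that depend on distinct variables, which is exactly decomposability.

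Smoothness would follow by noting that each such product unit already has full scope $\{S,R,O\}$ (the union of $\{S\},\{R\},\{O\}$), so the outermost summation over the rank indices combines units that all share this scope, satisfying the smoothness condition. The one case that needs care is $\ComplEx$: here I would first expand $\Re(\langle \ve_s, \vw_r, \overline{\ve_o}\rangle)$ into real arithmetic, splitting each complex coordinate into its real and imaginary parts. This yields a small fixed number of monomials per rank, each still a product of one subject coordinate, one relation coordinate and one (possibly sign-flipped) object coordinate, so the scope bookkeeping is unchanged and both properties are preserved. I expect this complex-to-real expansion to be the main obstacle, since it is the only place where the one-variable-per-factor pattern is not visually obvious and where a sign or conjugation slip could silently break decomposability.

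Finally, for the cost and size bounds I would use the standard fact that a feedforward evaluation visits each edge exactly once and each sum or product unit performs work proportional to its in-degree, so the total time is $\Theta\bigl(\sum_n |\inscope(n)|\bigr) = \Theta(|\phi|)$, assuming constant-time input look-ups. For the $\CP$ example I would then simply count edges: $d$ product units with three inputs each plus one sum unit with $d$ inputs give $4d$ edges, hence $|\phi_{\CP}|\in\calO(d)$; the analogous counts yield $\calO(d^2)$ for $\RESCAL$, $\calO(d^3)$ for a dense $\TuckER$ core, and $\calO(d)$ for $\ComplEx$, none of which the statement requires but which confirm the construction.
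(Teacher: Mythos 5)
Your proposal is correct and follows essentially the same route as the paper's proof: embedding look-ups as single-variable input units, one product unit per rank-index combination decomposing $\{S,R,O\}$ into $(\{S\},\{R\},\{O\})$, a weighted sum unit on top (with core-tensor entries or signs absorbed into the weights $\theta$, exactly as the paper does for \TuckER and for the real-arithmetic expansion of \ComplEx), and the feed-forward edge-counting argument for $\cost(\phi)\in\Theta(|\phi|)$. The only cosmetic difference is that the paper constructs the circuit once for \TuckER and obtains \CP, \RESCAL and \ComplEx as specialisations of its core tensor, whereas you treat each score function directly; the content of the argument is identical.
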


\begin{figure}[!t]
    \centering
    \begin{subfigure}[t]{0.19\linewidth}
        \centering
        \includegraphics[scale=0.26]{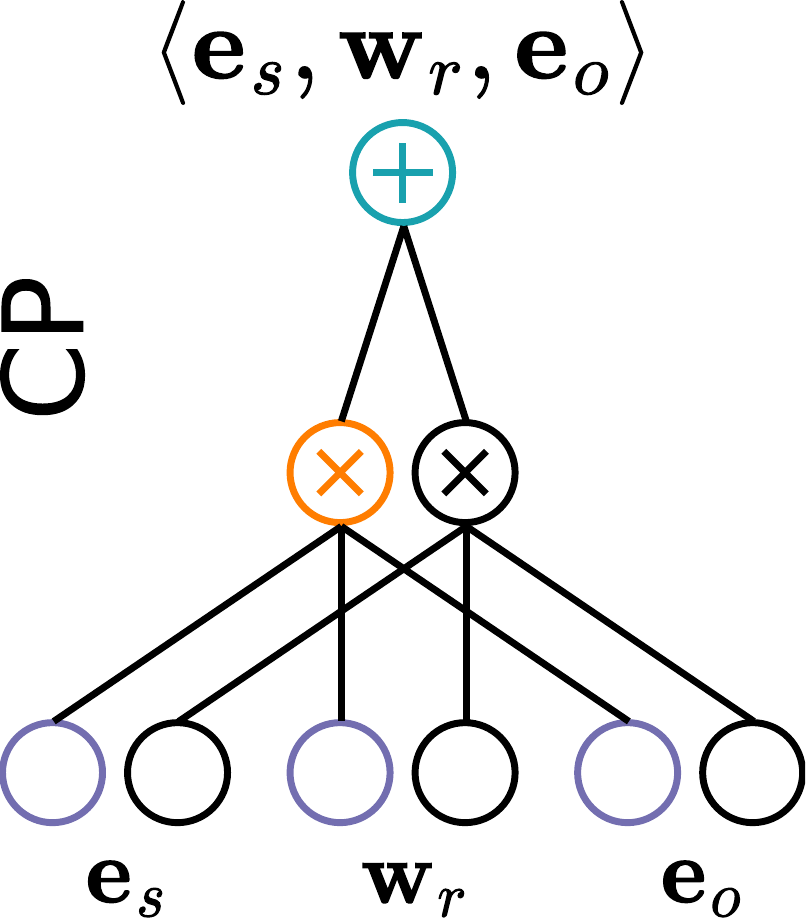}
    \end{subfigure}\hspace{1pt}
    \begin{subfigure}[t]{0.225\linewidth}
        \centering
        \includegraphics[scale=0.26]{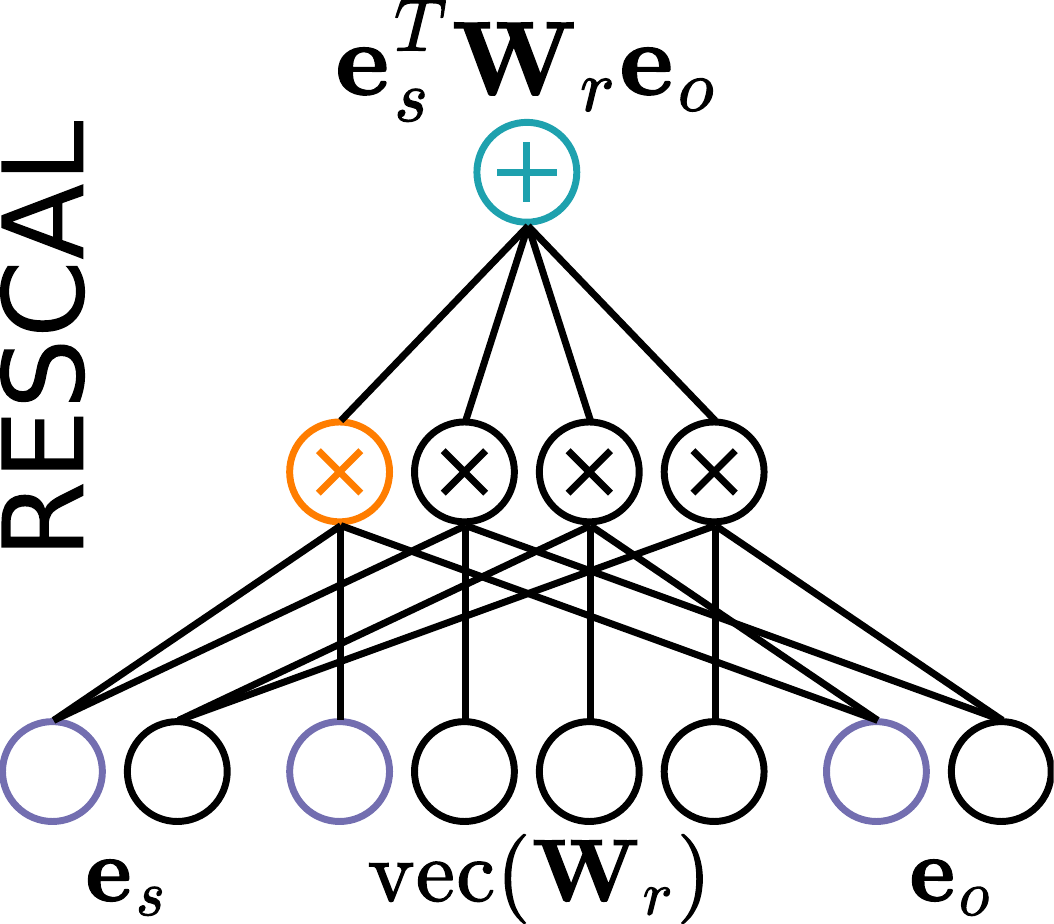}
    \end{subfigure}%
    \begin{subfigure}[t]{0.245\linewidth}
        \centering
        \includegraphics[scale=0.26]{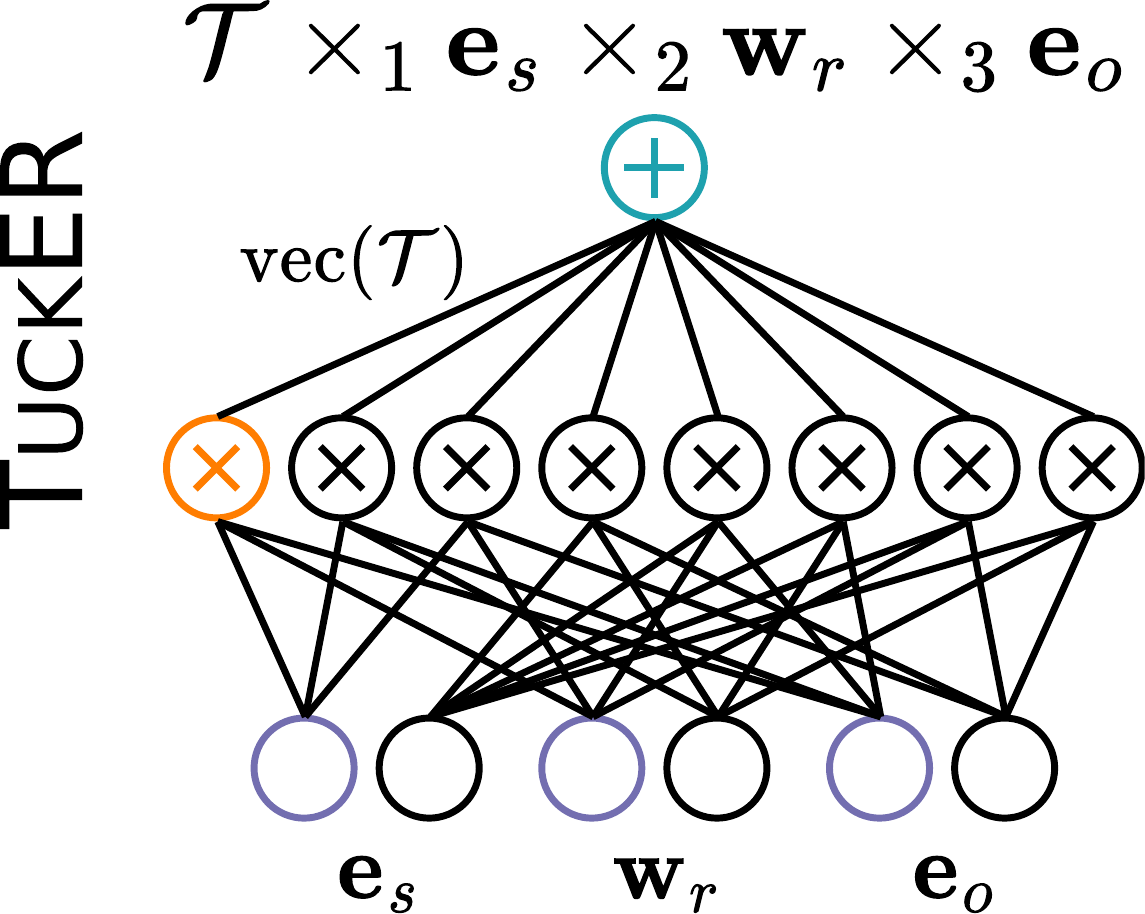}
    \end{subfigure}%
    \begin{subfigure}[t]{0.330\linewidth}
        \centering
        \includegraphics[scale=0.26]{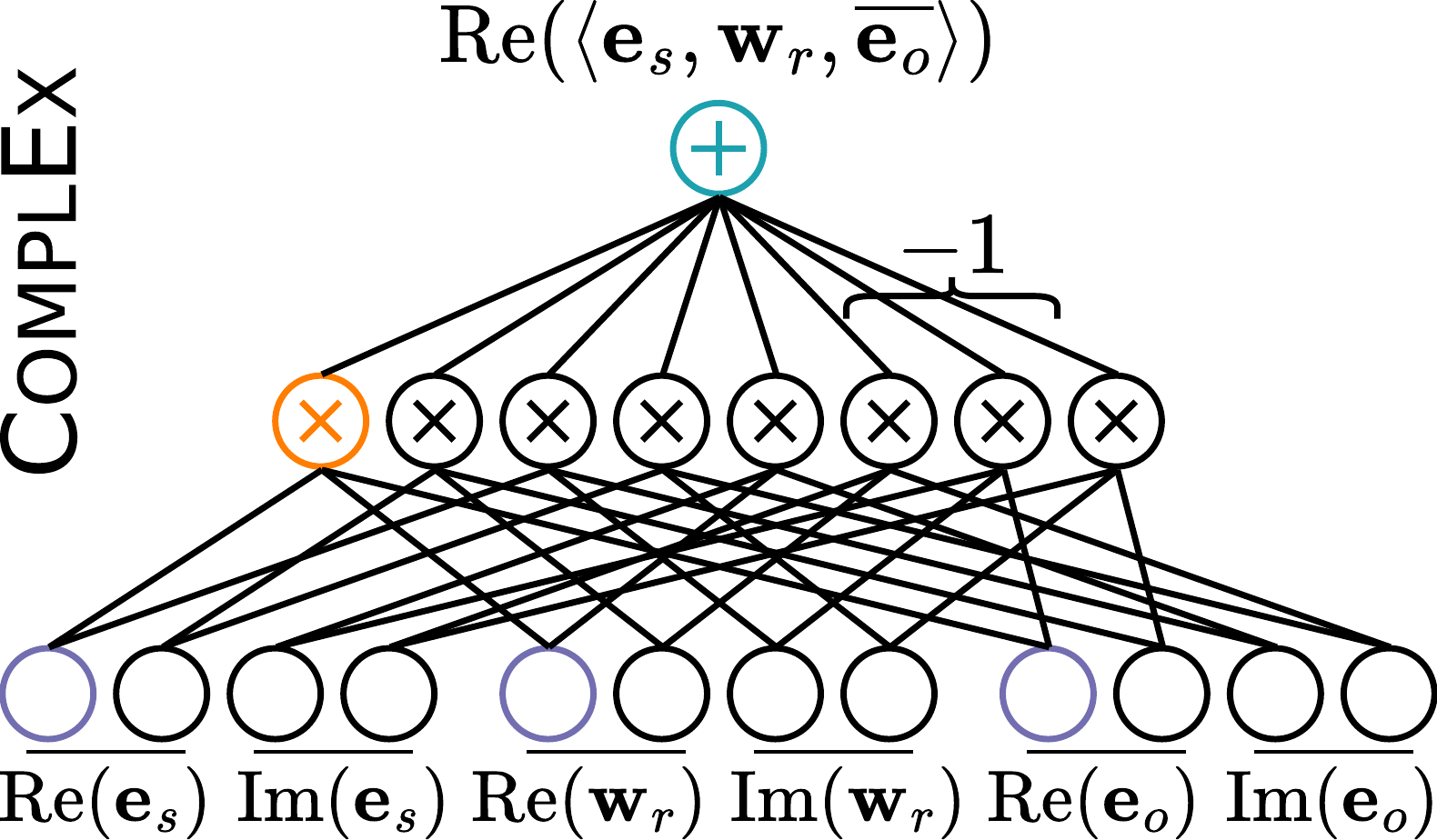}
    \end{subfigure}
    \caption{\textbf{Interpreting the score functions  of \CP, \RESCAL, \TuckER, \ComplEx as circuits} over 2-dimensional embeddings. Input, product and sum units are coloured in purple, orange and blue, respectively. Output sum units are labelled with the score functions, and their parameters are assumed to be $1$, if not specified.
    The detailed construction is presented in \cref{app:kge-as-circuits}.
    Given a triple $(s,r,o)$, the input units map subject $s$, predicate $r$ and object $o$ to their embedding entries.
    Then, the products are evaluated before the weighted sum, which outputs the score of the input triple.}
    \label{fig:kge-circuits}
\end{figure}

\cref{app:kge-as-circuits} reports the complete proof by construction for the score functions of these models
and the circuit sizes,
while \cref{fig:kge-circuits} illustrates them.
Intuitively, since the score functions of the cited KGE models are based on products and sums as operators, they can be represented as circuits where input units map entities and predicates into the corresponding embedding components (similarly to look-up tables).
As the inputs of each sum unit are product units that share the same scope $\{S,R,O\}$ and fully decompose it in their input units, they satisfy smoothness and decomposability (\cref{defn:smoothness-decomposability}).

Smooth and decomposable circuits enable summing over all possible (partial) assignments to $\vX$ by (i) performing summations at input units over values in their domains, and (ii) evaluating the circuit once in a feed-forward way \citep{choi2020pc,vergari2021compositional}.
This re-interpretation of score functions allows to ``push'' summations to the input units of a circuit, greatly reducing complexity, as detailed in the following proposition.

\begin{prop}[Efficient summations]
    \label{prop:efficient-summations}
    Let $\phi$ be a smooth and decomposable circuit over $\vX=\{S,R,O\}$ that encodes the score function of a KGE model.
    The sum $\sum_{s\in\calE} \sum_{r\in\calR} \sum_{o\in\calE} \phi(s,r,o)$ or any other summation over subjects, predicates or objects can be computed in time $\calO((|\calE| + |\calR|) \cdot |\phi|)$.
\end{prop}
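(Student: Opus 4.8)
The plan is to show that a single feed-forward pass suffices once we replace each input unit over a summed-out variable by the sum of its outputs over that variable's domain. Concretely, to compute $\sum_{s \in \calE}\sum_{r \in \calR}\sum_{o \in \calE} \phi(s,r,o)$ I would transform the circuit as follows: every input unit $n$ with $\scope(n) = \{S\}$ or $\scope(n) = \{O\}$ is replaced by the constant $\sum_{x \in \calE} l_n(x)$, every input unit with $\scope(n) = \{R\}$ is replaced by $\sum_{x \in \calR} l_n(x)$, while all product and sum units are left untouched. Evaluating the resulting (now constant) circuit bottom-up then yields the desired total. For partial summations, say summing only over $O$, one instead marginalises only the input units whose scope is $\{O\}$ and leaves the remaining input units as functions of their unsummed variables; the argument below is identical.

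Correctness I would establish by structural induction on the units, proving the invariant that after the replacement each unit $n$ computes $\sum_{\scope(n)} \phi_n$, where $\sum_{\scope(n)}$ abbreviates summation over all joint assignments to the variables of $\scope(n)$ (each subject or object variable ranging over $\calE$, each predicate variable over $\calR$). The base case is exactly the input-unit replacement above. For a product unit $n = \prod_{i \in \inscope(n)} \phi_i$, decomposability (\cref{defn:smoothness-decomposability}) guarantees that the scopes $\scope(i)$ are pairwise disjoint, so the sum over $\scope(n) = \bigcup_{i} \scope(i)$ factorises as $\sum_{\scope(n)} \prod_i \phi_i = \prod_i \sum_{\scope(i)} \phi_i$, which by the inductive hypothesis equals the product of the values already computed at the children. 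For a sum unit $n = \sum_i \theta_i \phi_i$, smoothness guarantees that all children share the same scope, so summation commutes with the weighted sum: $\sum_{\scope(n)} \sum_i \theta_i \phi_i = \sum_i \theta_i \sum_{\scope(n)} \phi_i$, again matching the inductively computed children values. The hard part is precisely this product case: it is decomposability that collapses an a-priori exponential sum over the joint scope into a product of independent, cheap sums, and this is exactly where the efficiency originates.

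For the running time, I would split the cost into the input-unit marginalisation and the subsequent evaluation. Let $k_S$, $k_R$, $k_O$ denote the number of input units with scope $\{S\}$, $\{R\}$, $\{O\}$ respectively; computing the replacement constants costs $|\calE| k_S + |\calR| k_R + |\calE| k_O = |\calE|(k_S + k_O) + |\calR| k_R$ additions. Since each of $k_S + k_O$ and $k_R$ is bounded by the total number of input units, which is at most $|\phi|$, this is at most $(|\calE| + |\calR|)\,|\phi|$. The remaining feed-forward evaluation over the now-constant product and sum units costs $\cost(\phi) \in \Theta(|\phi|)$ by \cref{prop:kge-models-as-circuits}, which is dominated by the previous term. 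Summing the two contributions yields the claimed $\calO((|\calE| + |\calR|)\cdot |\phi|)$ bound, and the identical accounting applies verbatim to any partial summation over a subset of $\{S,R,O\}$.
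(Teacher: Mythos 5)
Your proposal is correct and follows essentially the same route as the paper: push the summations down to the input units (using decomposability to factorise sums over product units and smoothness to commute sums past sum units), then charge $\calO(|\calE|)$ or $\calO(|\calR|)$ per input unit and one feed-forward pass, bounding the number of input units by $|\phi|$. The only difference is that you spell out the structural induction explicitly, whereas the paper delegates that correctness argument to the cited results of \citet{choi2020pc} and \citet{vergari2021compositional}.
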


However, \textit{these sums are in logarithm space}, as we have that $p(s,r,o) \propto \exp \phi(s,r,o)$ (see \cref{sec:background}).
As a consequence, summation in this context \emph{does not} correspond to marginalising variables in probability space.
This drives us to reinterpret the semantics of these circuit structures as to operate directly in probability space, rather than in logarithm space,
i.e., by encoding non-negative functions.

\shortsection{\ldots to Probabilistic Circuits}
\label{sec:casting-kge-models-to-pcs}

We now present how to reinterpret the semantics of the computational graphs of KGE score functions to directly output non-negative values for any input.
That is, we cast them as \emph{probabilistic circuits} (PCs) \citep{choi2020pc,vergari2021compositional,darwiche2002knowledge}.
First, we define our subclass of
PCs that encodes a possibly unnormalised probability distribution over triples in a KG, but allows for efficient marginalisation.

\begin{defn}[Generative KGE circuit]
    \label{def:generative-kge-circuits}
    A \emph{generative KGE circuit} (\OurModel) is a smooth and decomposable PC $\phi_\mathsf{pc}$ over variables $\vX=\{S,R,O\}$ that encodes a probability distribution over triples, i.e., $\phi_\mathsf{pc}(s,r,o) \propto p(s,r,o)$ for any $(s,r,o)\in\calE\times\calR\times\calE$.
\end{defn}

Since our \OurModels are smooth and decomposable (\cref{defn:smoothness-decomposability}) they guarantee the
efficient
computation of
$Z$
in time $\calO((|\calE| + |\calR|) \cdot |\phi_\mathsf{pc}|)$ (\cref{prop:efficient-summations}).
This is in contrast with existing KGE models, for which it would require the evaluation of the whole score function on each possible triple (\cref{sec:background}).
For example, assume a non-negative \CP score function $\phi^+_\CP(s,r,o) = \langle \ve_s,\vw_r,\ve_o \rangle \in\bbR_+$ for some embeddings $\ve_s,\vw_r,\ve_o\in\bbR^d$.
Then, we can compute $Z$ by pushing the outer summations inside the trilinear product, i.e., $Z = \langle \sum_{s\in\calE} \ve_s, \sum_{r\in\calR} \vw_r, \sum_{o\in\calE} \ve_o \rangle$, which can be done in time $\calO((|\calE| + |\calR|) \cdot d)$.
In the following sections, we propose two ways to turn
the computational graphs of \CP, \RESCAL, \TuckER and \ComplEx into \OurModels without
additional space requirements.

\shortsubsection{Non-negative restriction of a score function}
\label{sec:non-negative-restriction}

The most natural way of casting existing KGE models to \OurModels is by constraining the computational units of their circuit structures (\cref{sec:from-kge-models-to-circuits}) to output non-negative values only.
We will refer to this conversion method as the \emph{non-negative restriction} of a model.
To achieve the non-negative restriction of \CP, \RESCAL  and \TuckER we can simply restrict the embedding values and additional parameters in their score functions to be non-negative, as products and sums are operators closed in $\bbR_+$.
Thus, each input unit $n$ over variables $S$ or $O$ (resp. $R$)
in a non-negative \OurModel
encodes a function $l_n$ (\cref{defn:circuit}) modelling an unnormalised categorical distribution over entities (resp. predicates).
For example, each $i$-th entry $e_{si}$ of the embedding $\ve_s\in\bbR_+^d$ associated to an entity $s\in\calE$ becomes a parameter of the $i$-th unnormalised categorical distribution over $S$.
See \cref{fig:interpreting-non-negative} for an example.
We denote the non-negative restriction of these KGEs as \NNegCP, \NNegRESCAL and \NNegTuckER, respectively.

However, deriving \NNegComplEx \citep{trouillon2016complex} by restricting the embedding values of \ComplEx to be non-negative is not sufficient, because its score function includes a subtraction as it operates on complex numbers.
To overcome this, we re-parameterise the imaginary part of each complex embedding to be always greater than or equal to its real part.
\cref{app:realising-monocomplex} details this procedure.
Even if this reparametrisation allows for more flexibility,
imposing non-negativity on \OurModels can restrict their ability to capture intricate interactions over subjects, predicates and objects given a fixed number of learnable parameters \citep{valiant1979negation}.
We empirically confirm this in our experiments 
in \cref{sec:empirical-evaluation}.
Therefore, we propose an alternative way of representing KGEs as PCs via \emph{squaring}.

\shortsubsection{Squaring the score function}
\label{sec:non-monotonic-squaring}

Squaring works by taking the score function of a KGE model 
$\phi$, and multiplying it with itself to obtain $\phi^2=\phi\cdot\phi$.
Note that $\phi^2$ would be guaranteed to be a PC, as it always outputs non-negative values. %
The challenge is to represent the product of two circuits as a smooth and decomposable PC as to guarantee efficient marginalisation (\cref{prop:efficient-summations}).\footnote{In fact, even though $\phi^2$ can be easily built by introducing a product unit over two copies of $\phi$ as sub-circuits, the resulting circuit would be not decomposable (\cref{defn:smoothness-decomposability}), as the sub-circuits are defined on the same scope.}
In general, this task is known to be \#P-hard~\citep{vergari2021compositional}.

However, it can be done efficiently if the two circuits are \emph{compatible} \citep{vergari2021compositional}, as we further detail in \cref{app:tractable-product}.
Intuitively, the circuit representations of the score functions $\phi$ of \CP, \RESCAL, \TuckER and \ComplEx~(see \cref{fig:kge-circuits}) are simple enough that every product unit is defined over the same  scope $\{S,R,O\}$ and fully decomposes it on its input units.
As such, these circuits can be easily multiplied with any other smooth and decomposable circuit, a property also known as \textit{omni-compatibility} \citep{vergari2021compositional}.
This property enables us to build the squared version of these KGE models, which we denote as \SquaredCP, \SquaredRESCAL, \SquaredTuckER and \SquaredComplEx, as PCs that are still smooth and decomposable.
Note that these squared \OurModels do allow for negative parameters, and hence can be more expressive.
The next theorem, instead, guarantees that we can normalize them efficiently.

\begin{thm}[Efficient summations on squared \OurModels]
    \label{thm:marginalisation-squared-circuits}
    Performing summations as stated in \cref{prop:efficient-summations} on \SquaredCP, \SquaredRESCAL, \SquaredTuckER and \SquaredComplEx can be done in time $\calO((|\calE| + |\calR|)\cdot |\phi|^2)$.
\end{thm}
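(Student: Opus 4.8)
\section*{Proof proposal}

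The plan is to reduce the statement to \cref{prop:efficient-summations} by exhibiting, for each of \SquaredCP, \SquaredRESCAL, \SquaredTuckER and \SquaredComplEx, a \emph{single} smooth and decomposable circuit $\phi_{\mathsf{sq}}$ that computes $\phi^2$ and whose size satisfies $|\phi_{\mathsf{sq}}|\in\calO(|\phi|^2)$. Once such a circuit is in hand the running time is essentially free: the argument behind \cref{prop:efficient-summations} uses only smoothness, decomposability over $\vX=\{S,R,O\}$, and the fact that input units range over a single variable, so it applies verbatim to $\phi_{\mathsf{sq}}$, and instantiating it with $|\phi_{\mathsf{sq}}|\in\calO(|\phi|^2)$ yields the claimed bound $\calO((|\calE|+|\calR|)\cdot|\phi|^2)$. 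Hence all the real work is the construction and size analysis of $\phi_{\mathsf{sq}}$.

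To build $\phi_{\mathsf{sq}}$ I would invoke the product construction for \emph{compatible} circuits of \citet{vergari2021compositional}, recalled in \cref{app:tractable-product}. As observed in \cref{sec:non-monotonic-squaring}, each KGE circuit is omni-compatible: every product unit ranges over the full scope $\{S,R,O\}$ and splits it into the singleton sub-scopes $\{S\},\{R\},\{O\}$. An omni-compatible circuit is in particular compatible with itself, so the recursive product of $\phi$ with a second copy of $\phi$ is well defined and returns a smooth, decomposable circuit. Concretely, the recursion turns two sum units $\sum_i\theta_i\phi_i$ and $\sum_j\theta_j\phi_j$ into $\sum_{i,j}\theta_i\theta_j(\phi_i\cdot\phi_j)$, multiplies two product units factor-by-factor over matching sub-scopes, and multiplies two input units over the same variable into a new single-variable input unit encoding their pointwise product (e.g.\ $s\mapsto e_{si}e_{sj}$ for \CP). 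Because the two copies share the same scope hierarchy, the recursion only ever pairs units of equal scope, and the standard size accounting for products of compatible circuits gives $|\phi_{\mathsf{sq}}|\in\calO(|\phi|\cdot|\phi|)=\calO(|\phi|^2)$.

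It then remains to confirm the structural properties and the applicability of \cref{prop:efficient-summations}. Smoothness is preserved because multiplying two sum units whose inputs share a scope yields a sum unit whose inputs again share that scope; decomposability is preserved because each new product unit inherits the common, fully factorised partition of $\{S,R,O\}$. The new input units are still functions of a single variable, so summing any one of them over $\calE$ or $\calR$ costs $\calO(|\calE|)$ or $\calO(|\calR|)$, exactly the per-input cost assumed by \cref{prop:efficient-summations}; with $\calO(|\phi|^2)$ such units and an $\calO(|\phi|^2)$-size feed-forward pass over the inner units, the bound follows.

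The step I expect to be the main obstacle is the size analysis for \SquaredComplEx. Before squaring, the real-part and complex-conjugation operations in $\phi_\ComplEx$ must be fully unfolded into a real-valued smooth and decomposable circuit with signed parameters (as drawn in \cref{fig:kge-circuits}), so that $\phi^2=\phi\cdot\phi$ is a genuine real product to which the compatible-product construction applies. I would verify that this unfolding keeps every product unit omni-compatible (full scope, singleton factors), so that the pairing of the two copies stays one-to-one and the blow-up remains quadratic rather than higher-order. The other three models reduce to the same template and differ only in the constant hidden by $\calO(\cdot)$, e.g.\ $|\phi|\in\calO(d)$ for \CP and \ComplEx versus the larger per-relation factors of \RESCAL and \TuckER.
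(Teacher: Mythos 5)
Your proposal is correct and takes essentially the same route as the paper's proof: establish omni-compatibility (hence self-compatibility) of the KGE circuits, invoke the tractable product construction (\cref{prop:tractable-product}) to obtain $\phi^2$ as a smooth and decomposable circuit of size $\calO(|\phi|^2)$, and then conclude by \cref{prop:efficient-summations}. The \ComplEx concern you flag is already resolved upstream by \cref{prop:kge-models-as-circuits}, which represents $\phi_\ComplEx$ as a real-valued smooth and decomposable circuit with signed sum weights of the form $\sum_i \theta_i \prod_k l_{ik}(X_k)$, so the squaring argument applies to it verbatim.
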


\begin{figure}[!t]
    \centering
    \includegraphics[width=.9\textwidth]{./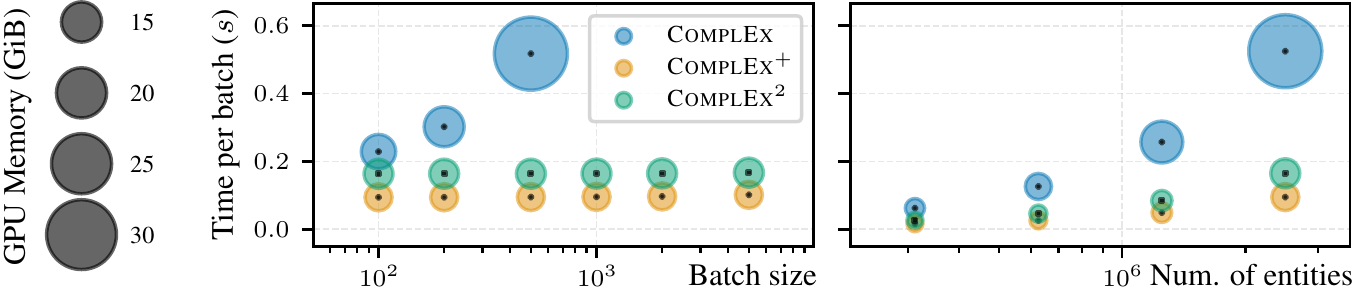}
    \caption{\textbf{\OurModels scale better.} Time (in seconds) and peak GPU memory (in GiB as bubble sizes) required for computing the PLL objective and back-propagating through it for a single batch on \textsf{ogbl-wikikg2}, by increasing the batch size and number of entities. See \cref{app:training-benchmark-details} for details.}
    \label{fig:pll-scaling}
\end{figure}

For instance, the partition function $Z$ of \SquaredCP with embedding size $d$ would require $\calO((|\calE| + |\calR|)\cdot d^2)$ operations to be computed, while a simple feed-forward pass for a batch of $|B|$ triples is still $\calO(|B|\cdot d)$.
While in this case marginalisation requires an increase in complexity that is quadratic in $d$, it is still faster than the brute force approach to compute $Z$ (see \cref{sec:background} and \cref{app:partition-function-squared-circuits}).

\paragraph{Quickly distilling KGEs to squared \OurModels.}
Consider a squared \OurModel obtained by initialising its parameters with those of its energy-based KGE counterpart.
If the score function of the original KGE model \emph{always} assigns non-negative scores to triples, then 
the ``distilled'' squared \OurModel will output the \textit{same exact ranking of the original model for the answers to any link prediction queries}. 
Although the premise of the non-negativity of the scores might not be guaranteed,  we observe that, in practice, learned KGE models do assign positive scores to all or most of the triples of common KGs (see \cref{app:distribution-scores}).
Therefore, we can use this observation to
either instantly distil a \OurModel or provide a good heuristic to initialise its parameters and fine-tune them (by MLE or PLL maximisation).
In both cases, the result is that we can convert the original KGE model into a \OurModel that
provides comparable probabilities, enable efficient marginalisation, sampling, and the integration of logical constraints with little or no loss of performance for link prediction (\cref{sec:empirical-evaluation-kbc}).

\shortsubsection{On the Training Efficiency of \OurModels}
\label{sec:training-efficiency}

\OurModels also offer an unexpected opportunity to better scale the computation of the PLL objective (\cref{eq:pll-objective}) on very large knowledge graphs.
This is because computing the PLL for a batch of $|B|$ triples with \OurModels obtained via non-negative restriction and by squaring (\cref{sec:non-negative-restriction,sec:non-monotonic-squaring}) does not require storing a matrix of size $\calO(|\calE|\cdot |B|)$ to fully exploit GPU parallelism \citep{jain2020knowledge}.
For instance, in \cref{app:complexity-pll-objective} we show that computing the PLL for \CP \citep{lacroix2018cp-kbc} with embedding size $d$ requires time $\calO(|\calE|\cdot |B|\cdot d)$ and additional space $\calO(|\calE|\cdot |B|)$.
On the other hand, for \SquaredCP (resp. \NNegCP) it requires time $\calO((|\calE| + |B|)\cdot d^2)$ (resp. $\calO((|\calE| + |B|)\cdot d)$) and space $\calO(|B|\cdot d)$.
\cref{tab:summary-complexity} summarises similar reduced complexities for other instances of \OurModels, such as \NNegComplEx and \SquaredComplEx.
The reduced time and memory requirements with \OurModels allow us to use larger batch sizes and better scale to large knowledge graphs.
\cref{fig:pll-scaling} clearly highlights this when measuring the time and GPU memory required to train these models on a KG with millions of entities such as \textsf{ogbl-wikikg2}~\citep{hu2020ogb}.

\shortsubsection{Sampling new triples with \OurModels}
\label{sec:sampling}

\OurModels only allowing non-negative parameters, such as \NNegCP, \NNegRESCAL and \NNegTuckER, support \emph{ancestral sampling} as sum units can be interpreted as marginalised discrete latent variables, similarly to the latent variable interpretation in mixture models \citep{poon2011sum,peharz2017latent-spn} (see \cref{app:sampling-procedures} for details).
This is however not possible in general for \NNegComplEx and \OurModels obtained by squaring, as 
negative parameters break this interpretation.
Luckily, as these circuits still support efficient marginalisation (\cref{prop:efficient-summations,thm:marginalisation-squared-circuits}) and hence also conditioning, we can perform \emph{inverse transform sampling}.
That is, to generate a triple $(s,r,o)$, we can sample in an autoregressive fashion, e.g., first $s \sim p(S)$, then $r \sim p(R\mid s)$
and $o \sim p(O\mid s,r)$, 
 hence requiring only three marginalization steps.

\shortsection{Injection of Logical Constraints}
\label{sec:injection-logical-constraints}

Converting KGE models to PCs provides the opportunity to ``embed'' logical constraints in the neural link predictor such that (i) predictions are always guaranteed to satisfy the constraints at test time, and (ii) training can still be done by efficient MLE (or PLL).
This is in stark contrast with previous approaches for KGEs, which relax the constraints or enforce them only at training time (see \cref{sec:related-works}).
Consider, as an example, the problem of integrating the logical constraints induced by a schema of a KG, i.e., enforcing that triples 
not satisfying a \textit{domain constraint} have probability zero. 
\begin{defn}[Domain constraint]
    \label{defn:domain-constraint}
    Given a predicate $r\in\calR$ and $\kappa_S(r),\kappa_O(r)\subseteq\calE$
    the sets of all subjects and objects that are semantically coherent with respect to $r$, a \emph{domain constraint} is a propositional logic formula defined as
    \begin{equation}
        \label{eq:domain-constraint-logic-formula}
        K_r \equiv S \in \kappa_S(r) \land R = r \land O \in \kappa_O(r) \equiv (\lor_{u\in\kappa_S(r)} S = u) \land R = r \land (\lor_{v\in\kappa_O(r)} O = v).
    \end{equation}    
\end{defn}
\vspace{-5pt}
Given $\calR=\{r_1,\ldots,r_m\}$ a set of predicates, the disjunction $K \equiv K_{r_1} \lor \ldots \lor K_{r_m}$ encodes all the domain constraints that are defined in a KG.
An input triple $(s,r,o)$ satisfies $K$, written as $(s,r,o)\models K$, if 
$s\in\kappa_S(r)$ and $o\in\kappa_O(r)$.
To design \OurModels such as their predictions always satisfy logical constraints (which might not be necessarily domain constraints), we follow  \citet{ahmed2022semantic} and define a score function to represent a probability distribution $p_K$ that assigns probability mass only to triples that satisfy the constraint $K$, i.e.,
$\phi_\mathsf{pc}(s,r,o) \cdot c_K(s,r,o) \propto p_K(s,r,o)$.
Here, $\phi_\mathsf{pc}$ is a \OurModel 
and $c_K(s,r,o) = \Ind{(s,r,o)\models K}$ is an indicator function that ensures that zero mass is assigned to triples violating $K$.
In words, we are ``cutting'' the support of $\phi_{\mathsf{pc}}$, as illustrated in \cref{fig:constraint-injection-example}.

Computing $p_K(s,r,o)$ exactly but naively would require computing a new partition function $Z_K=\sum_{s'\in\calE} \sum_{r'\in\calR} \sum_{o'\in\calE} (\phi_\mathsf{pc}(s',r',o') \cdot c_K(s',r',o'))$, which is impractical
as previously discussed
(\cref{sec:background}).
Instead, we compile $c_K$ as a smooth and decomposable circuit, sometimes called a constraint or logical circuit \citep{darwiche2002knowledge,ahmed2022semantic}, by leveraging compilers from the \emph{knowledge compilation} literature ~\citep{darwiche2011sdd,oztok2015top-down-sdds}.
In a nutshell, $c_K$ is another circuit over variables $S,R,O$ that outputs 1 if an input triple satisfies the encoded logical constraint $K$ and 0 otherwise.
See \cref{defn:constraint-circuit} for a formal definition of such circuits.
Then, similarly to what we have showed for computing squared circuits that enable efficient marginalisation (\cref{sec:non-monotonic-squaring}), the satisfaction of compatibility between a \OurModel $\phi_\mathsf{pc}$ and a constraint circuit $c_K$ enable us to compute $Z_K$ efficiently, as certified by the following theorem.

\begin{figure}[!t]
    \centering
    \includegraphics[width=0.95\linewidth]{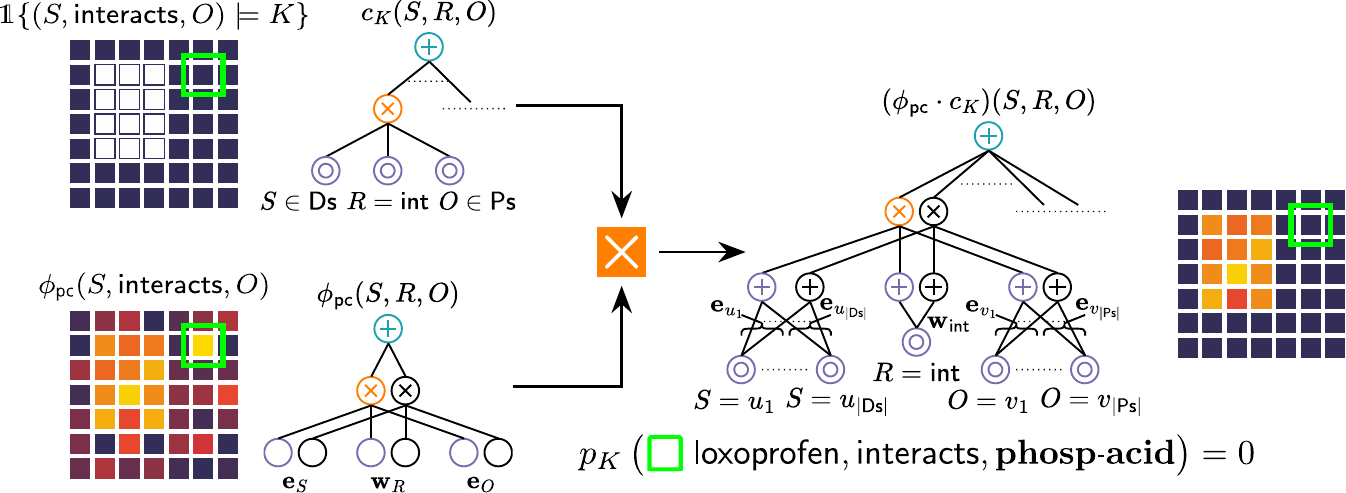}

    \caption{\textbf{Injection of domain constraints.} Given a circuit $c_K$ encoding domain constraints and a \OurModel $\phi_\mathsf{pc}$, the probability assigned by the product circuit $\phi_\mathsf{pc} \cdot c_K$ to the inconsistent triple showed in \cref{sec:introduction} is 0, and a positive probability is assigned to consistent triples only, e.g., for the \textsf{interacts} predicate those involving drugs (\textsf{Ds}) as subjects and proteins (\textsf{Ps}) as objects. Best viewed in colours.}
    \label{fig:constraint-injection-example}
    \vspace{-7pt}
\end{figure}
\begin{thm}[Tractable integration of constraints in \OurModels]
    \label{thm:tractable-integration-knowledge}
    Let $c_K$ be a constraint circuit encoding a logical constraint $K$ over variables $\{S,R,O\}$.
    Then exactly computing
    the partition function $Z_K$
    of
    the product
    $\phi_\mathsf{pc}(s,r,o) \cdot c_K(s,r,o) \propto p_K(s,r,o)$
    for any \OurModel $\phi_\mathsf{pc}$ derived from \CP, \RESCAL, \TuckER or \ComplEx (\cref{sec:casting-kge-models-to-pcs})
    can be done in time $\calO((|\calE| + |\calR|)\cdot |\phi_\mathsf{pc}|\cdot |c_K|)$.
\end{thm}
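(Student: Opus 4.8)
The plan is to reduce the computation of $Z_K$ to a single application of the efficient-summation routine of \cref{prop:efficient-summations}, applied not to $\phi_\mathsf{pc}$ itself but to the product circuit $\phi_\mathsf{pc}\cdot c_K$. Concretely, I would first argue that $\phi_\mathsf{pc}\cdot c_K$ admits a representation as a single smooth and decomposable circuit over $\{S,R,O\}$, then bound its size by $\calO(|\phi_\mathsf{pc}|\cdot |c_K|)$; the claimed time bound then follows mechanically.

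First I would recall the structural property that makes the \OurModels derived from \CP, \RESCAL, \TuckER and \ComplEx special: as observed in \cref{sec:non-monotonic-squaring}, every product unit in their circuit representation is defined over the full scope $\{S,R,O\}$ and fully decomposes it into the singletons $\{S\}$, $\{R\}$, $\{O\}$. This is exactly the \emph{omni-compatibility} property of \citet{vergari2021compositional}, and it is preserved by both of our construction recipes in \cref{sec:casting-kge-models-to-pcs}: the non-negative restriction leaves the circuit structure untouched, while squaring (itself a product of an omni-compatible circuit with itself) again yields product units decomposing $\{S,R,O\}$ into the same singleton partition. Hence any \OurModel $\phi_\mathsf{pc}$ derived from these four models is omni-compatible, and in particular compatible with the smooth and decomposable constraint circuit $c_K$ (which, by \cref{defn:constraint-circuit}, is also over $\{S,R,O\}$).

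Next I would invoke the tractable product (multiplication) algorithm for compatible circuits of \citet{vergari2021compositional} (see \cref{app:tractable-product}): given two compatible smooth and decomposable circuits, it returns a single smooth and decomposable circuit computing their product, of size at most the product of the two input sizes. Applying it to $\phi_\mathsf{pc}$ and $c_K$ yields a smooth and decomposable circuit $\psi$ over $\{S,R,O\}$ with $\psi(s,r,o)=\phi_\mathsf{pc}(s,r,o)\cdot c_K(s,r,o)$ and $|\psi|\in\calO(|\phi_\mathsf{pc}|\cdot |c_K|)$. Because $\psi$ is smooth and decomposable over exactly $\{S,R,O\}$, I would then note that the summation argument behind \cref{prop:efficient-summations} uses \emph{only} smoothness and decomposability (and not any KGE-specific structure), so it applies verbatim to $\psi$. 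Summing $\psi$ over all subjects, predicates and objects is precisely $Z_K=\sum_{s,r,o}\phi_\mathsf{pc}(s,r,o)\, c_K(s,r,o)$, costing $\calO((|\calE|+|\calR|)\cdot |\psi|)=\calO((|\calE|+|\calR|)\cdot |\phi_\mathsf{pc}|\cdot |c_K|)$, as claimed.

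I expect the main obstacle to be the size bound $|\psi|\in\calO(|\phi_\mathsf{pc}|\cdot |c_K|)$, i.e.\ ensuring the product does not blow up beyond the product of sizes. The crucial leverage is omni-compatibility: because the \OurModel already decomposes $\{S,R,O\}$ into singletons, when the product algorithm recursively pairs product units it never needs to further split the scope of $c_K$, so each unit of $c_K$ is matched against each unit of $\phi_\mathsf{pc}$ a constant number of times, yielding a quadratic rather than exponential growth. Verifying this matching bound for each of the four score-function circuits, and checking that the per-variable input units in $\psi$ remain within $\calO(|\phi_\mathsf{pc}|\cdot |c_K|)$ so that the summation cost is indeed $\calO((|\calE|+|\calR|)\cdot |\psi|)$, is the one place where the KGE-specific omni-compatible structure, rather than generic compatibility, does the real work.
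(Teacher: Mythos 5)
Your proposal is correct and follows essentially the same route as the paper's proof: establish omni-compatibility of the \OurModel circuits (the paper's \cref{lem:omni-compatibility-kge-gecks}), deduce compatibility with the smooth and decomposable constraint circuit $c_K$, apply the tractable product construction (\cref{prop:tractable-product}) to obtain a smooth and decomposable circuit of size $\calO(|\phi_\mathsf{pc}|\cdot|c_K|)$, and then invoke \cref{prop:efficient-summations} on that product circuit to compute $Z_K$ in the claimed time. Your closing discussion of why the product does not blow up is a reasonable elaboration of what the cited product algorithm guarantees, but it introduces no step beyond the paper's argument.
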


In \cref{prop:domain-constraints-circuit} we show that the compilation of domain constraints $K$ (\cref{defn:domain-constraint}) is straightforward and results in a constraint circuit $c_K$ having compact size.
For example, the size of the constraint circuit encoding the domain constraints of \textsf{ogbl-biokg} is approximately $|c_K| = 307\cdot 10^3$.
To put this number in perspective, the size of the circuit for \ComplEx with embedding size $1000$ is the much larger $375\cdot 10^6$.
Since $|c_K|$ is much smaller, by the same argument on the efficiency of \OurModels obtained via squaring (\cref{sec:training-efficiency}) it results that the integration of logical constraints adds a negligible overhead.

\shortsection{Related Work}
\label{sec:related-works}

\paragraph{SOTA KGEs and current limitations.}
A plethora of ways to represent and learn KGEs has been proposed, see \citep{hogan2021knowledge} for a review. 
KGEs such as \CP and \ComplEx are still the de facto go-to choices in many applications \citep{bonner2022understanding,ruffinelli2020teach-dog-tricks,chen2021relational-auxiliary-objective}.
Works performing density estimation in embedding space~\citep{xiao2016transg,chen2021probabilistic} can sample embeddings, but to sample triples one would need to train a decoder.
Several works try to modify training for KGEs as to introduce a penalty for triples that do not satisfy given logical constraints 
\citep{chang2014typed,krompass2015tc-learning-kg,hubert2023enhancing,minervini2016schema-latent-factors,ding2018improving,guo2020knowledge}, or casting it as a min-max game \citep{minervini2017adversarial}.  
Unlike our \OurModels (\cref{sec:injection-logical-constraints}),
none of these approaches
guarantee that test-time predictions satisfy the constraints.
Moreover, several heuristics have been proposed to calibrate the probabilistic predictions of KGE models ex-post \citep{tabacof2019probability,zhu2023closer-look-calibration}. 
As showed in \citep{tresp2021brain}, the triple distribution can be modelled autoregressively as $p(S,R,O) = p(S)\cdot p(O\mid S)\cdot p(R\mid S,O)$ where each conditional distribution is encoded by a neural network.
However, differently from our \OurModels, 
integrating constraints exactly or computing \textit{any} marginal (thus conditional) probability is inefficient.
KGE models based on non-negative tensor decompositions \citep{tresp2015learning} are equivalent to \OurModels obtained by non-negative restriction (\cref{sec:non-negative-restriction}), but are generally trained by minimizing different non-probabilistic losses.

\shortparagraph{Circuits.} 
Circuits provide a unifying framework for several tractable probabilistic models such as sum-product networks (SPNs) and hidden Markov models, which are smooth and decomposable PCs~\citep{choi2020pc}, as well as compact logical representations \citep{darwiche2002knowledge,ahmed2022semantic}.
See \citep{vergari2019tractable,choi2020pc,darwiche2009modeling} for an overview.
PCs with negative parameters are also called non-monotonic \citep{shpilka2010arithmetic}, but are surprisingly not as well investigated as their monotonic counterparts, i.e., PCs with only non-negative parameters, at least from the learning perspective.
Similarly to our construction for \NNegComplEx (\cref{app:realising-monocomplex}),
\citet{dennis2016twinspn} constrains the output of the non-monotonic sub-circuits of a larger PC to be less than their monotonic counterparts.
Squaring a circuit has been investigating for tractably computing several divergences \citep{vergari2021compositional} and is related to the Born-rule of quantum mechanics \citep{novikov2021ttde}.

\shortparagraph{Circuits for relational data.}
Logical circuits to compile formulas in first-order logic (FOL) \citep{fierens2015inference} have been used to reason over relational data, e.g. via exchangeability \citep{van2011lifted,niepert2014tractability}.
Other formalisms such as tractable Markov Logic~\citep{webb2013tractable}, probabilistic logic bases \citep{niepert2015learning}, relational SPNs~\citep{nath2015rspn,nath2015learning} and generative clausal networks \citep{ventola2022generative} use underlying circuit-like structures to represent probabilistic models over a tractable fragment of FOL formulas.
These works assume that every atom in a grounded formula is associated to a random variable, also called the possible world semantics in probabilistic logic programs \citep{sato1997prism} and databases (PDBs)~\citep{dalvi2007efficient}.
In
this semantics, TractOR \citep{friedman2020symbolic-pdb} casts answering complex queries over KGEs as to performing inference in PDBs.
Differently from these works, our \OurModels are models defined over only three variables (\cref{sec:background}).
In \cref{app:interpreting-density-triples} we reconcile these two semantics by interpreting the probability of a triple to be proportional to that of all KGs containing it.

\begin{table}[!t]
    \tablesize
    \setlength{\tabcolsep}{5pt}
    \begin{minipage}{0.4\linewidth}
        \caption{\textbf{\OurModels are competitive with their energy-based counterparts.} Best average test MRRs of \CP, \ComplEx and \OurModels trained with the PLL and MLE objectives (\cref{eq:pll-objective,eq:mle-objective}).
        For standard deviations and training times see \cref{tab:best-results-kbc-extended}.}
    \label{tab:best-results-kbc}
    \end{minipage}
    \hspace{2em}
    \begin{minipage}{0.575\linewidth}
    \begin{tabular}{l
            cc
            cc
            cc}
        \toprule
        \multirow{2}{*}{\textbf{Model}}
        & \multicolumn{2}{c}{\textsf{\tablesize FB15k-237}}
        & \multicolumn{2}{c}{\textsf{\tablesize WN18RR}}
        & \multicolumn{2}{c}{\textsf{\tablesize ogbl-biokg}} \\
        \cmidrule(lr){2-3}
        \cmidrule(lr){4-5}
        \cmidrule(lr){6-7}
        & PLL & MLE
        & PLL & MLE
        & PLL & MLE \\
        \midrule
        \CP                 
            & 0.310 &   ---
            & \bfseries 0.105 & ---
            & 0.831 &   --- \\
        \NNegCP
            & 0.237 & 0.230 %
            & 0.027 & 0.026 %
            & 0.496 & 0.501\\
        \SquaredCP          
            & \bfseries 0.315 &  0.282 
            & \bfseries 0.104 &  0.091 
            & \bfseries 0.848 &  0.829 \\
        \midrule
        \ComplEx            
            & \bfseries 0.342 & ---
            & \bfseries 0.471 & ---
            & 0.829 & --- \\
        \NNegComplEx        
            & 0.214 & 0.205 %
            & 0.030 & 0.029 %
            & 0.503 & 0.516 \\
        \SquaredComplEx     
            & 0.334 & 0.300
            & 0.420 & 0.391
            & \bfseries 0.858 & 0.840 \\
        \bottomrule
    \end{tabular}
    \end{minipage}
\end{table}

\shortsection{Empirical Evaluation}
\label{sec:empirical-evaluation}

We aim to answer the following research questions:
\textbf{RQ1)} are \OurModels competitive with commonly used KGEs for link prediction? 
\textbf{RQ2)} Does integrating domain constraints in \OurModels benefit training and prediction?;
\textbf{RQ3)} how good are the triples sampled from \OurModels?

\shortsubsection{Link Prediction (\textbf{RQ1})}
\label{sec:empirical-evaluation-kbc}

\paragraph{Experimental setting.}
We evaluate \OurModels on standard KG benchmarks for link prediction\footnote{Code is available at \url{https://github.com/april-tools/gekcs}.}: \textsf{FB15k-237} \citep{toutanova2015observed}, \textsf{WN18RR} \citep{dettmers2018conv2d-kge} and \textsf{ogbl-biokg}  \citep{hu2020ogb}, whose statistics can be found in \cref{app:datasets-statistics}.
As usual \citep{nickel2016kgreview,ruffinelli2020teach-dog-tricks,pezeshkpour2020revisiting-evaluation-kbc}, we assess the models for 
predicting objects (queries $(s,r,?)$) and subjects (queries $(?,r,o)$), and report their \emph{mean reciprocal rank} (MRR) and \emph{fraction of hits at $k$} (Hits@$k$) (see \cref{app:metrics}).
We remark that our aim in this Section is \textit{not to score the new state-of-the-art link prediction performance on these benchmarks}.
Instead, we aim to rigorously assess how close \OurModels can be to commonly used and reasonably tuned KGE models.
We focus on  \CP and \ComplEx as they currently are the go-to models of choice for link prediction \citep{lacroix2018cp-kbc,ruffinelli2020teach-dog-tricks,chen2021relational-auxiliary-objective}.
We  compare them against our \OurModels \NNegCP, \NNegComplEx, \SquaredCP and \SquaredComplEx (\cref{sec:casting-kge-models-to-pcs}).
\cref{app:experimental-setting} collects all the details about the model hyperparameters and training for reproducibility.

\shortparagraph{Link prediction results.}
\cref{tab:best-results-kbc} reports the MRR and times for all benchmarks and models when trained by PLL or MLE.
First, \SquaredCP and \SquaredComplEx achieve competitive scores when compared to \CP and \ComplEx.
Moreover, \SquaredCP (resp. \SquaredComplEx) always outperforms \NNegCP (resp. \NNegComplEx), thus providing empirical evidence that negative embedding values are crucial for model expressiveness.
Concerning times,
\cref{tab:best-results-kbc-extended} shows that
squared \OurModels
can train much faster on large KGs
(see \cref{sec:training-efficiency}):
\SquaredCP and \SquaredComplEx require less than half the training time of \CP and \ComplEx on \textsf{ogbl-biokg}, while also unexpectedly scoring the current
SOTA
MRR on it.\footnote{Across non-ensemble methods and according to the \href{https://ogb.stanford.edu/docs/leader\_linkprop/\#ogbl-biokg}{OGB leaderboard}, updated at the time of this writing.}
We experiment also on the much larger \textsf{ogbl-wikikg2} KG \citep{hu2020ogb}, comprising millions of entities. 
Even more remarkably, we are able to score an MRR of 0.572 after just \textasciitilde 3 hours with \SquaredComplEx trained by PLL with a batch size of $10^4$ and embedding size $d=100$.
To put this in context, we were able to score 0.562 with the best configuration of \ComplEx \textit{but after \textasciitilde 3 days}, as we could not fit in memory more than a batch size 500.\footnote{A smaller ($d=50$) and highly tuned version of \ComplEx  achieves 0.639 MRR but still after days \citep{chen2021relational-auxiliary-objective}.}
The same trends are shown for the Hits@$k$ (\cref{tab:best-results-kbc-hits}) and likelihood (\cref{tab:average-ll}) metrics.
\shortparagraph{Distilling \OurModels.}
\cref{tab:parameters-distillation} reports the results achieved by \SquaredCP and \SquaredComplEx initialised with the parameters of learned \CP and \ComplEx (see \cref{sec:non-monotonic-squaring}) and confirms we can quickly turn an EBM into \OurModel, thus inheriting all the perks of being a tractable generative model.

\begin{figure}[!t]
    \tablesize
    \centering
    \setlength{\tabcolsep}{5pt}
    \raisebox{56pt}{
    \begin{tabular}{l@{\ }rrrrr}
        \toprule
        \multirow{2}{*}{\textbf{Model}}        & \multirow{2}{*}{$k$} & \multicolumn{4}{c}{\textbf{Embedding size}} \\
                                               &     &    10  &    50 &    200 &   1000 \\
        \midrule
        \multirow{3}{*}{\ComplEx}              &   1 &  99.68 & 99.90 & 99.93 &  99.94 \\
                                               &  20 &  99.81 & 99.79 & 99.85 &  99.91 \\
                                               & 100 &  99.60 & 99.44 & 99.60 &  99.77 \\
        \midrule
        \multirow{3}{*}{\SquaredComplEx}       &   1 &  82.50 & 94.22 & 99.30 &  99.50 \\
                                               &  20 &  86.50 & 96.70 & 99.42 &  99.64 \\
                                               & 100 &  90.66 & 97.71 & 99.23 &  98.78 \\
        \midrule
        \multirow{3}{*}{\ConstrSquaredComplEx} &   1 & \bf 100.00 & \bf 100.00 & \bf 100.00 & \bf 100.00 \\
                                               &  20 & \bf 100.00 & \bf 100.00 & \bf 100.00 & \bf 100.00 \\
                                               & 100 & \bf 100.00 & \bf 100.00 & \bf 100.00 & \bf 100.00 \\
        \bottomrule
    \end{tabular}
    }
    \hspace{30pt}
    \raisebox{5pt}{
    \includegraphics[scale=.925]{./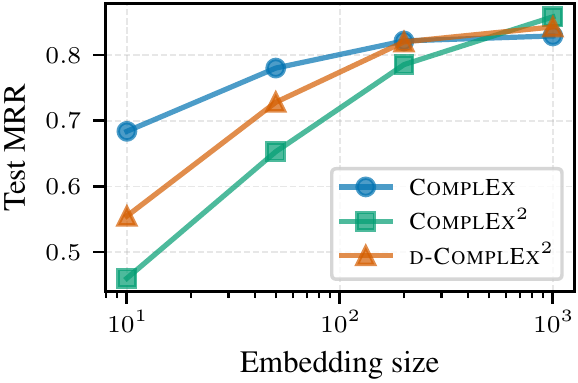}
    }
    \caption{\textbf{\OurModels with domain constraints guarantee domain-consistent predictions.}
    Semantic consistency scores (Sem@$k$) \citep{hubert2022new-strategies} on \textsf{ogbl-biokg} achieved by \ComplEx, \SquaredComplEx and its integration with domain constraints (\ConstrSquaredComplEx) (left), and MRRs computed on test queries (right). %
    \ComplEx infers 200+ triples violating constraints as the highest scoring completions ($k=1$).
    }
    \label{fig:rank-ablation-domain-constraints}
\end{figure}

\shortparagraph{Calibration study.}
We also measure how well calibrated the predictions of the models in \cref{tab:best-results-kbc} are, which is essential to ensure trustworthiness in critical tasks.
For example, given a perfectly calibrated model, for all the triples predicted with a probability of 80\%, exactly 80\% of them would actually exist \citep{zhu2023closer-look-calibration}.
On all KGs but \textsf{WN18RR}, \OurModels achieve lower empirical calibration errors~\citep{guo2017calibration} and better calibrated curves than their counterparts, as we report in \cref{app:calibration-diagrams}.
The worse performance of all models on \textsf{WN18RR} can be explained by the distribution shift that exists between its training and test split, which we better confirm in \cref{sec:empirical-evaluation-sampling}.

\shortsubsection{Integrating Domain Constraints (\textbf{RQ2})}
\label{sec:empirical-evaluation-dc}

We focus on \textsf{ogbl-biokg} \citep{hu2020ogb}, as it contains the domain metadata for each entity (i.e., disease, drug, function, protein, or side effect).
Given the entity domains allowed for each predicate,
we formulate
domain constraints as in \cref{defn:domain-constraint}.
First, we want to estimate how likely are the models to predict triples that do not satisfy the domain constraints.
We focus on \ComplEx and \SquaredComplEx, as they have been shown to achieve the best results in \cref{sec:empirical-evaluation-kbc} and introduce
\ConstrSquaredComplEx as the constraints-aware version of 
\SquaredComplEx (\cref{sec:injection-logical-constraints}).
For each test query $(s,r,?)$ (resp. $(?,r,o)$), we compute the  Sem@$k$ score \citep{hubert2022new-strategies} as the average percentage of triples in the first $k$ positions of the rankings of potential object (resp. subject) completions that satisfy the domain constraints (see \cref{app:metrics}).

\cref{fig:rank-ablation-domain-constraints} 
highlights how both \ComplEx and \SquaredComplEx systematically predict object (or subject) completions that violate domain constraints even for large embedding sizes.
For instance,
a Sem@1 score of 99\% (resp. 99.9\%) means that \textasciitilde3200 (resp. \textasciitilde320) predicted test triples violate domain constraints.
While for \ComplEx and \SquaredComplEx there is no theoretical guarantee of consistent predictions with respect to the domain constraints,
\ConstrSquaredComplEx
always guarantee consistent predictions \textit{by design}.
Furthermore,  we observe a significant improvement in terms of MRR when
integrating constraints for smaller embedding sizes, as reported in 
\cref{fig:rank-ablation-domain-constraints}.

\shortsubsection{Quality of sampled triples (\textbf{RQ3})}
\label{sec:empirical-evaluation-sampling}

Inspired by the literature on evaluating deep generative models for images, we propose a metric akin to the \emph{kernel Inception distance} \citep{binkowski2018demystifying-mmd} to evaluate the quality of the triples we can sample with \OurModels.

\begin{defn}[Kernel triple distance (KTD)]
    \label{defn:kernel-triple-distance}
    Given $\bbP,\bbQ$ two probability distributions over triples,
    and a positive definite kernel $k\colon\bbR^h\times\bbR^h\to\bbR$,
    we define the \emph{kernel triple distance} $\operatorname{KTD}(\bbP,\bbQ)$ as the squared \emph{maximum mean discrepancy} \citep{gretton2012mmd} between  triple latent representations obtained via a map
    $\psi\colon\calE\times\calR\times\calE\to\bbR^h$ that projects triples to an $h$-dimensional embedding, i.e., %
    \begin{equation*}
        \operatorname{KTD}(\bbP,\bbQ) = \bbE_{x,x'\sim \bbP} [k(\psi(x),\psi(x'))] + \bbE_{y,y'\sim \bbQ} [k(\psi(y),\psi(y'))] - 2\cdot\bbE_{{x\sim \bbP, y\sim \bbQ}} [k(\psi(x),\psi(y))].
    \end{equation*}
\end{defn}

\begin{table}[!t]
\begin{minipage}{.36\textwidth}
        \caption{\textbf{\OurModels trained by MLE generate new
        likely triples.} Empirical KTD scores between test triples and triples generated by baselines and \OurModels trained with the PLL objective or by MLE (\cref{eq:pll-objective,eq:mle-objective}). Lower is better. For standard deviations see \cref{tab:ktd-samples-extended}.
    }
    \label{tab:ktd-samples}
    \end{minipage}\hfill\begin{minipage}{.6\textwidth}
    \tablesize
    \centering
\begin{tabular}{lcccccc}
        \toprule
        \textbf{Model}
        & \multicolumn{2}{c}{\textsf{\tablesize FB15k-237}}
        & \multicolumn{2}{c}{\textsf{\tablesize WN18RR}}
        & \multicolumn{2}{c}{\textsf{\tablesize ogbl-biokg}} \\
        \cmidrule(lr){2-3} \cmidrule(lr){4-5} \cmidrule(lr){6-7}
        Training set
            & \multicolumn{2}{c}{0.055}
            & \multicolumn{2}{c}{0.260}
            & \multicolumn{2}{c}{0.029} \\
        Uniform        
            & \multicolumn{2}{c}{0.589}
            & \multicolumn{2}{c}{0.766}
            & \multicolumn{2}{c}{1.822} \\
        NNMFAug        
            & \multicolumn{2}{c}{0.414}
            & \multicolumn{2}{c}{0.607}
            & \multicolumn{2}{c}{0.518} \\
        \midrule
        & PLL & MLE
        & PLL & MLE
        & PLL & MLE \\
        \cmidrule(lr){2-3} \cmidrule(lr){4-5} \cmidrule(lr){6-7}
        \NNegCP        
            & 0.404 & 0.433
            & 0.633 & \bfseries 0.578
            & 0.966 & 0.738 \\
        \SquaredCP     
            & 0.253 & \bfseries 0.070
            & 0.768 & 0.768
            & 0.039 & \bfseries 0.017 \\
        \cmidrule(lr){1-7}
        \NNegComplEx
            & 0.336 & 0.323
            & 0.456 & 0.478
            & 0.175 & 0.097 \\
        \SquaredComplEx
            & 0.326 & \bfseries 0.102
            & 0.338 & \bfseries 0.278
            & 0.104 & \bfseries 0.034 \\
        \bottomrule
    \end{tabular}
\end{minipage}

\end{table}

An empirical estimate of the KTD score  close to zero indicates that there is little difference between the two triple distributions $\bbP$ and $\bbQ$ (see \cref{app:empirical-ktd-score}).
For images, $\psi$  is typically chosen as the last embedding of a SOTA neural classifier.
We choose $\psi$ to be the $L_2$-normed outputs of the product units of a circuit \citep{vergari2018sum,vergari2019visualizing}, specifically the
SOTA
\ComplEx
learned by \citet{chen2021relational-auxiliary-objective} with $h=4000$.
We choose $k$ as the polynomial kernel $k(\vx,\vy) = (\vx^\top\vy + 1)^3$, following \citet{binkowski2018demystifying-mmd}.

\cref{tab:ktd-samples} shows the empirical KTD scores computed between the test triples and the generated ones,
and \cref{fig:tsne-triple-embeddings} visualises triple embeddings.
We employ two baselines: a uniform probability distribution over all possible triples and NNMFAug~\citep{chauhan2021probabilistic}, the only work to address triple sampling to the best of our knowledge.
We also report the KTD scores for training triples as an empirical lower bound.
Squared \OurModels achieve lower KTD scores with respect to the ones obtained by non-negative restriction, confirming again a better estimation of the joint distribution.
In addition, they achieve far lower KTD scores 
than all competitors when learning by MLE (\cref{eq:mle-objective}), which justifies its usage as an objective. %
Lastly, we confirm the distribution shift on \textsf{WN18RR}: training set KTD scores are far from zero, but
even in this challenging scenario, \SquaredComplEx scores KTD values that are closer to the training ones.

\section{Conclusions and Future Work}
\label{sec:conclusion}

We proposed to re-interpret the representation and learning of widely used KGE models such as \CP, \RESCAL, \TuckER and \ComplEx, as
generative models, overcoming some of the classical limitation of their usual EBM interpretation (see \cref{sec:introduction,sec:background}).
\OurModel-variants for other KGE models whose scores are multilinear maps can be readily devised in the same way.
Moreover, we conjecture that other KGE models defining score functions having a distance-based semantics such as TransE~\citep{bordes2013transe} and RotatE~\citep{sun2019rotate} can be reinterpreted to be \OurModels as well.
Our \OurModels open up a number of interesting future directions.
First, we plan to investigate how the enhanced efficiency and calibration of \OurModels can help in complex reasoning tasks beyond link prediction~\citep{arakelyan2021complex-qa-tnorms}.
Second, we can leverage the rich literature on learning the structure of circuits \citep{vergari2015simplifying,vergari2019tractable} to devise smaller and sparser KGE circuit architectures that better capture the triple distribution or sporting structural properties that can make reasoning tasks other than marginalisation efficient \citep{di2017fast,dang2020strudel,ventola2022generative}.

\begin{ack}

We would like to acknowledge Iain Murray for his thoughtful feedback and suggestions on a draft of this work.
In addition, we thank Xiong Bo and Ricky Zhu for pointing out related works in the knowledge graphs literature.
AV was supported by the "UNREAL: Unified Reasoning Layer for Trustworthy ML" project (EP/Y023838/1) selected by the ERC and funded by UKRI EPSRC.
RP was supported by the Graz Center for Machine Learning (GraML).
\end{ack}

\bibliography{references}

\begin{thebibliography}{79}
\providecommand{\natexlab}[1]{#1}
\providecommand{\url}[1]{\texttt{#1}}
\expandafter\ifx\csname urlstyle\endcsname\relax
  \providecommand{\doi}[1]{doi: #1}\else
  \providecommand{\doi}{doi: \begingroup \urlstyle{rm}\Url}\fi

\bibitem[Ahmed et~al.(2022)Ahmed, Teso, Chang, Van~den Broeck, and
  Vergari]{ahmed2022semantic}
Kareem Ahmed, Stefano Teso, Kai-Wei Chang, Guy Van~den Broeck, and Antonio
  Vergari.
\newblock Semantic probabilistic layers for neuro-symbolic learning.
\newblock In \emph{Advances in Neural Information Processing Systems 35
  (NeurIPS)}, volume~35, pages 29944--29959. Curran Associates, Inc., 2022.

\bibitem[Arakelyan et~al.(2021)Arakelyan, Daza, Minervini, and
  Cochez]{arakelyan2021complex-qa-tnorms}
Erik Arakelyan, Daniel Daza, Pasquale Minervini, and Michael Cochez.
\newblock Complex query answering with neural link predictors.
\newblock In \emph{{ICLR}}. OpenReview.net, 2021.

\bibitem[Balazevic et~al.(2019)Balazevic, Allen, and
  Hospedales]{balazevic2019tucker}
Ivana Balazevic, Carl Allen, and Timothy~M. Hospedales.
\newblock Tucker: Tensor factorization for knowledge graph completion.
\newblock In \emph{{EMNLP-IJCNLP}}, pages 5184--5193. Association for
  Computational Linguistics, 2019.

\bibitem[Binkowski et~al.(2018)Binkowski, Sutherland, Arbel, and
  Gretton]{binkowski2018demystifying-mmd}
Mikolaj Binkowski, Danica~J. Sutherland, Michael Arbel, and Arthur Gretton.
\newblock Demystifying {MMD} gans.
\newblock In \emph{{ICLR} (Poster)}. OpenReview.net, 2018.

\bibitem[Bonner et~al.(2022)Bonner, Barrett, Ye, Swiers, Engkvist, Hoyt, and
  Hamilton]{bonner2022understanding}
Stephen Bonner, Ian~P Barrett, Cheng Ye, Rowan Swiers, Ola Engkvist,
  Charles~Tapley Hoyt, and William~L Hamilton.
\newblock Understanding the performance of knowledge graph embeddings in drug
  discovery.
\newblock \emph{Artificial Intelligence in the Life Sciences}, 2:\penalty0
  100036, 2022.

\bibitem[Bordes et~al.(2011)Bordes, Weston, Collobert, and
  Bengio]{bordes2011se}
Antoine Bordes, Jason Weston, Ronan Collobert, and Yoshua Bengio.
\newblock Learning structured embeddings of knowledge bases.
\newblock \emph{Proceedings of the AAAI Conference on Artificial Intelligence},
  2011.

\bibitem[Bordes et~al.(2013)Bordes, Usunier, Garc{\'{\i}}a{-}Dur{\'{a}}n,
  Weston, and Yakhnenko]{bordes2013transe}
Antoine Bordes, Nicolas Usunier, Alberto Garc{\'{\i}}a{-}Dur{\'{a}}n, Jason
  Weston, and Oksana Yakhnenko.
\newblock Translating embeddings for modeling multi-relational data.
\newblock In \emph{Neural Information Processing Systems ({NIPS})}, pages
  2787--2795, 2013.

\bibitem[Chang et~al.(2014)Chang, Yih, Yang, and Meek]{chang2014typed}
Kai{-}Wei Chang, Wen{-}tau Yih, Bishan Yang, and Christopher Meek.
\newblock Typed tensor decomposition of knowledge bases for relation
  extraction.
\newblock In \emph{{EMNLP}}, pages 1568--1579. {ACL}, 2014.

\bibitem[Chauhan et~al.(2021)Chauhan, Gupta, and
  Minervini]{chauhan2021probabilistic}
Jatin Chauhan, Priyanshu Gupta, and Pasquale Minervini.
\newblock A probabilistic framework for knowledge graph data augmentation.
\newblock \emph{arXiv preprint arXiv:2110.13205}, 2021.

\bibitem[Chavira and Darwiche(2008)]{chavira2008wmc}
Mark Chavira and Adnan Darwiche.
\newblock On probabilistic inference by weighted model counting.
\newblock \emph{Artificial Intelligence.}, 172\penalty0 (6-7):\penalty0
  772--799, 2008.

\bibitem[Chen et~al.(2021{\natexlab{a}})Chen, Boratko, Chen, Dasgupta, Li, and
  McCallum]{chen2021probabilistic}
Xuelu Chen, Michael Boratko, Muhao Chen, Shib~Sankar Dasgupta, Xiang~Lorraine
  Li, and Andrew McCallum.
\newblock Probabilistic box embeddings for uncertain knowledge graph reasoning.
\newblock \emph{arXiv preprint arXiv:2104.04597}, 2021{\natexlab{a}}.

\bibitem[Chen et~al.(2021{\natexlab{b}})Chen, Minervini, Riedel, and
  Stenetorp]{chen2021relational-auxiliary-objective}
Yihong Chen, Pasquale Minervini, Sebastian Riedel, and Pontus Stenetorp.
\newblock Relation prediction as an auxiliary training objective for improving
  multi-relational graph representations.
\newblock \emph{CoRR}, abs/2110.02834, 2021{\natexlab{b}}.

\bibitem[Choi et~al.(2020)Choi, Vergari, and Van~den Broeck]{choi2020pc}
YooJung Choi, Antonio Vergari, and Guy Van~den Broeck.
\newblock Probabilistic circuits: A unifying framework for tractable
  probabilistic modeling.
\newblock 2020.

\bibitem[Dalvi and Suciu(2007)]{dalvi2007efficient}
Nilesh Dalvi and Dan Suciu.
\newblock Efficient query evaluation on probabilistic databases.
\newblock \emph{The VLDB Journal}, 16\penalty0 (4):\penalty0 523--544, 2007.

\bibitem[Dalvi and Suciu(2012)]{dalvi2012dichotomy-ucq}
Nilesh~N. Dalvi and Dan Suciu.
\newblock The dichotomy of probabilistic inference for unions of conjunctive
  queries.
\newblock \emph{{ACM}}, 59\penalty0 (6):\penalty0 30:1--30:87, 2012.

\bibitem[Dang et~al.(2020)Dang, Vergari, and Broeck]{dang2020strudel}
Meihua Dang, Antonio Vergari, and Guy Broeck.
\newblock Strudel: Learning structured-decomposable probabilistic circuits.
\newblock In \emph{International Conference on Probabilistic Graphical Models},
  pages 137--148. PMLR, 2020.

\bibitem[Darwiche(2009)]{darwiche2009modeling}
Adnan Darwiche.
\newblock \emph{Modeling and {R}easoning with {B}ayesian {N}etworks}.
\newblock Cambridge University Press, 2009.

\bibitem[Darwiche(2011)]{darwiche2011sdd}
Adnan Darwiche.
\newblock {SDD}: A new canonical representation of propositional knowledge
  bases.
\newblock In \emph{Twenty-Second International Joint Conference on Artificial
  Intelligence}, 2011.

\bibitem[Darwiche and Marquis(2002)]{darwiche2002knowledge}
Adnan Darwiche and Pierre Marquis.
\newblock A knowledge compilation map.
\newblock \emph{Journal of Artificial Intelligence Research (JAIR)},
  17:\penalty0 229--264, 2002.

\bibitem[Dennis(2016)]{dennis2016twinspn}
Aaron~W. Dennis.
\newblock \emph{Algorithms for Learning the Structure of Monotone and
  Nonmonotone Sum-Product Networks}.
\newblock PhD thesis, Brigham Young University, 2016.

\bibitem[Dettmers et~al.(2018)Dettmers, Minervini, Stenetorp, and
  Riedel]{dettmers2018conv2d-kge}
Tim Dettmers, Pasquale Minervini, Pontus Stenetorp, and Sebastian Riedel.
\newblock Convolutional 2d knowledge graph embeddings.
\newblock In \emph{{AAAI}}, pages 1811--1818. {AAAI} Press, 2018.

\bibitem[Di~Mauro et~al.(2017)Di~Mauro, Vergari, Basile, and
  Esposito]{di2017fast}
Nicola Di~Mauro, Antonio Vergari, Teresa M.~A. Basile, and Floriana Esposito.
\newblock Fast and accurate density estimation with extremely randomized cutset
  networks.
\newblock In \emph{Machine Learning and Knowledge Discovery in Databases: {ECML
  PKDD}}, pages 203--219. Springer, 2017.

\bibitem[Ding et~al.(2018)Ding, Wang, Wang, and Guo]{ding2018improving}
Boyang Ding, Quan Wang, Bin Wang, and Li~Guo.
\newblock Improving knowledge graph embedding using simple constraints.
\newblock In \emph{Annual Meeting of the Association for Computational
  Linguistics}, 2018.

\bibitem[Fenton(1960)]{fenton1960lognormal}
Leslie~H. Fenton.
\newblock The sum of log-normal probability distributions in scatter
  transmission systems.
\newblock \emph{IEEE Transactions on Communications}, 8:\penalty0 57--67, 1960.

\bibitem[Fierens et~al.(2015)Fierens, Van~den Broeck, Renkens, Shterionov,
  Gutmann, Thon, Janssens, and De~Raedt]{fierens2015inference}
Daan Fierens, Guy Van~den Broeck, Joris Renkens, Dimitar Shterionov, Bernd
  Gutmann, Ingo Thon, Gerda Janssens, and Luc De~Raedt.
\newblock Inference and learning in probabilistic logic programs using weighted
  boolean formulas.
\newblock \emph{Theory and Practice of Logic Programming}, 15\penalty0
  (3):\penalty0 358--401, 2015.

\bibitem[Friedman and {Van den Broeck}(2020)]{friedman2020symbolic-pdb}
Tal Friedman and Guy {Van den Broeck}.
\newblock Symbolic querying of vector spaces: Probabilistic databases meets
  relational embeddings.
\newblock In \emph{{UAI}}, volume 124, pages 1268--1277. {AUAI} Press, 2020.

\bibitem[Glorot et~al.(2013)Glorot, Bordes, Weston, and
  Bengio]{glorot2013semantic}
Xavier Glorot, Antoine Bordes, Jason Weston, and Yoshua Bengio.
\newblock A semantic matching energy function for learning with
  multi-relational data.
\newblock \emph{Machine Learning}, 94:\penalty0 233--259, 2013.

\bibitem[Gretton et~al.(2012)Gretton, Borgwardt, Rasch, Sch{\"{o}}lkopf, and
  Smola]{gretton2012mmd}
Arthur Gretton, Karsten~M. Borgwardt, Malte~J. Rasch, Bernhard Sch{\"{o}}lkopf,
  and Alexander~J. Smola.
\newblock A kernel two-sample test.
\newblock \emph{J. Mach. Learn. Res.}, 13:\penalty0 723--773, 2012.

\bibitem[Guo et~al.(2017)Guo, Pleiss, Sun, and Weinberger]{guo2017calibration}
Chuan Guo, Geoff Pleiss, Yu~Sun, and Kilian~Q Weinberger.
\newblock On calibration of modern neural networks.
\newblock In \emph{International conference on machine learning}, pages
  1321--1330. PMLR, 2017.

\bibitem[Guo et~al.(2020)Guo, Li, Hui, Meng, Ma, Liu, Wang, Zhai, and
  Zhang]{guo2020knowledge}
Shu Guo, Lin Li, Zhen Hui, Lingshuai Meng, Bingnan Ma, Wei Liu, Lihong Wang,
  Haibin Zhai, and Hong Zhang.
\newblock Knowledge graph embedding preserving soft logical regularity.
\newblock \emph{Proceedings of the 29th ACM International Conference on
  Information \& Knowledge Management}, 2020.

\bibitem[Hogan et~al.(2021)Hogan, Blomqvist, Cochez, d’Amato, Melo,
  Gutierrez, Kirrane, Gayo, Navigli, Neumaier, et~al.]{hogan2021knowledge}
Aidan Hogan, Eva Blomqvist, Michael Cochez, Claudia d’Amato, Gerard~de Melo,
  Claudio Gutierrez, Sabrina Kirrane, Jos{\'e} Emilio~Labra Gayo, Roberto
  Navigli, Sebastian Neumaier, et~al.
\newblock Knowledge graphs.
\newblock \emph{ACM Computing Surveys (CSUR)}, 54\penalty0 (4):\penalty0 1--37,
  2021.

\bibitem[Hu et~al.(2020)Hu, Fey, Zitnik, Dong, Ren, Liu, Catasta, and
  Leskovec]{hu2020ogb}
Weihua Hu, Matthias Fey, Marinka Zitnik, Yuxiao Dong, Hongyu Ren, Bowen Liu,
  Michele Catasta, and Jure Leskovec.
\newblock Open graph benchmark: Datasets for machine learning on graphs.
\newblock \emph{arXiv preprint arXiv:2005.00687}, 2020.

\bibitem[Hubert et~al.(2022)Hubert, Monnin, Brun, and
  Monticolo]{hubert2022new-strategies}
Nicolas Hubert, Pierre Monnin, Armelle Brun, and Davy Monticolo.
\newblock New strategies for learning knowledge graph embeddings: The
  recommendation case.
\newblock In \emph{{EKAW}}, volume 13514 of \emph{Lecture Notes in Computer
  Science}, pages 66--80. Springer, 2022.

\bibitem[Hubert et~al.(2023)Hubert, Monnin, Brun, and
  Monticolo]{hubert2023enhancing}
Nicolas Hubert, Pierre Monnin, Armelle Brun, and Davy Monticolo.
\newblock Enhancing knowledge graph embedding models with semantic-driven loss
  functions, 2023.

\bibitem[Hyv{\"a}rinen(2006)]{hyvrinen2006consistency}
Aapo Hyv{\"a}rinen.
\newblock Consistency of pseudolikelihood estimation of fully visible boltzmann
  machines.
\newblock \emph{Neural Computation}, 18:\penalty0 2283--2292, 2006.

\bibitem[Jain et~al.(2020)Jain, Rathi, Mausam, and
  Chakrabarti]{jain2020knowledge}
Prachi Jain, Sushant Rathi, Mausam, and Soumen Chakrabarti.
\newblock Knowledge base completion: Baseline strikes back (again).
\newblock \emph{ArXiv}, abs/2005.00804, 2020.

\bibitem[Joulin et~al.(2017)Joulin, Grave, Bojanowski, Nickel, and
  Mikolov]{joulin2017fastlm}
Armand Joulin, Edouard Grave, Piotr Bojanowski, Maximilian Nickel, and Tomas
  Mikolov.
\newblock Fast linear model for knowledge graph embeddings.
\newblock \emph{ArXiv}, abs/1710.10881, 2017.

\bibitem[Kingma and Ba(2015)]{kingma2014adam}
Diederik~P. Kingma and Jimmy Ba.
\newblock Adam: {A} method for stochastic optimization.
\newblock In \emph{3rd International Conference on Learning Representations
  ({ICLR})}, 2015.

\bibitem[Krompa{\ss} et~al.(2015)Krompa{\ss}, Baier, and
  Tresp]{krompass2015tc-learning-kg}
Denis Krompa{\ss}, Stephan Baier, and Volker Tresp.
\newblock Type-constrained representation learning in knowledge graphs.
\newblock In \emph{{ISWC} {(1)}}, volume 9366 of \emph{Lecture Notes in
  Computer Science}, pages 640--655. Springer, 2015.

\bibitem[Lacroix et~al.(2018)Lacroix, Usunier, and
  Obozinski]{lacroix2018cp-kbc}
Timoth{\'{e}}e Lacroix, Nicolas Usunier, and Guillaume Obozinski.
\newblock Canonical tensor decomposition for knowledge base completion.
\newblock In \emph{{ICML}}, volume~80 of \emph{Proceedings of Machine Learning
  Research}, pages 2869--2878. {PMLR}, 2018.

\bibitem[LeCun et~al.(2006)LeCun, Chopra, Hadsell, Ranzato, and
  Huang]{lecun2006tutorial-ebm}
Yann LeCun, Sumit Chopra, Raia Hadsell, Marc’Aurelio Ranzato, and Fujie
  Huang.
\newblock A tutorial on energy-based learning.
\newblock \emph{Predicting Structured Data}, 2006.

\bibitem[Minervini et~al.(2016{\natexlab{a}})Minervini, d'Amato, Fanizzi, and
  Esposito]{minervini2016schema-latent-factors}
Pasquale Minervini, Claudia d'Amato, Nicola Fanizzi, and Floriana Esposito.
\newblock Leveraging the schema in latent factor models for knowledge graph
  completion.
\newblock In \emph{{SAC}}, pages 327--332. {ACM}, 2016{\natexlab{a}}.

\bibitem[Minervini et~al.(2016{\natexlab{b}})Minervini, d’Amato, and
  Fanizzi]{minervini2016efficient}
Pasquale Minervini, Claudia d’Amato, and Nicola Fanizzi.
\newblock Efficient energy-based embedding models for link prediction in
  knowledge graphs.
\newblock \emph{Journal of Intelligent Information Systems}, 47:\penalty0
  91--109, 2016{\natexlab{b}}.

\bibitem[Minervini et~al.(2017)Minervini, Demeester, Rockt{\"a}schel, and
  Riedel]{minervini2017adversarial}
Pasquale Minervini, Thomas Demeester, Tim Rockt{\"a}schel, and Sebastian
  Riedel.
\newblock Adversarial sets for regularising neural link predictors.
\newblock \emph{arXiv preprint arXiv:1707.07596}, 2017.

\bibitem[Nath and Domingos(2015{\natexlab{a}})]{nath2015learning}
Aniruddh Nath and Pedro Domingos.
\newblock Learning relational sum-product networks.
\newblock In \emph{Proceedings of the AAAI Conference on Artificial
  Intelligence}, volume~29, 2015{\natexlab{a}}.

\bibitem[Nath and Domingos(2015{\natexlab{b}})]{nath2015rspn}
Aniruddh Nath and Pedro~M. Domingos.
\newblock Learning relational sum-product networks.
\newblock In \emph{AAAI}, pages 2878--2886. AAAI Press, 2015{\natexlab{b}}.

\bibitem[Nickel et~al.(2011)Nickel, Tresp, and Kriegel]{nickel2011rescal}
Maximilian Nickel, Volker Tresp, and Hans{-}Peter Kriegel.
\newblock A three-way model for collective learning on multi-relational data.
\newblock In \emph{{ICML}}, pages 809--816. Omnipress, 2011.

\bibitem[Nickel et~al.(2016)Nickel, Murphy, Tresp, and
  Gabrilovich]{nickel2016kgreview}
Maximilian Nickel, Kevin Murphy, Volker Tresp, and Evgeniy Gabrilovich.
\newblock A review of relational machine learning for knowledge graphs.
\newblock \emph{{IEEE}}, 104\penalty0 (1):\penalty0 11--33, 2016.

\bibitem[Niepert and den Broeck(2014)]{niepert2014tractability}
Mathias Niepert and Guy~Van den Broeck.
\newblock Tractability through exchangeability: {A} new perspective on
  efficient probabilistic inference.
\newblock In \emph{Proceedings of the {AAAI} Conference on Artificial
  Intelligence}, pages 2467--2475. {AAAI} Press, 2014.

\bibitem[Niepert and Domingos(2015)]{niepert2015learning}
Mathias Niepert and Pedro~M Domingos.
\newblock Learning and inference in tractable probabilistic knowledge bases.
\newblock In \emph{UAI}, pages 632--641, 2015.

\bibitem[Novikov et~al.(2021)Novikov, Panov, and Oseledets]{novikov2021ttde}
Georgii~S. Novikov, Maxim~E. Panov, and Ivan~V. Oseledets.
\newblock Tensor-train density estimation.
\newblock In \emph{37th Conference on Uncertainty in Artificial Intelligence
  (UAI)}, volume 161 of \emph{Proceedings of Machine Learning Research}, pages
  1321--1331. PMLR, 2021.

\bibitem[Oztok and Darwiche(2015)]{oztok2015top-down-sdds}
Umut Oztok and Adnan Darwiche.
\newblock A top-down compiler for sentential decision diagrams.
\newblock In \emph{{IJCAI}}, pages 3141--3148. {AAAI} Press, 2015.

\bibitem[Peharz et~al.(2017)Peharz, Gens, Pernkopf, and
  Domingos]{peharz2017latent-spn}
Robert Peharz, Robert Gens, Franz Pernkopf, and Pedro~M. Domingos.
\newblock On the latent variable interpretation in sum-product networks.
\newblock \emph{IEEE Transactions on Pattern Analalysis and Machine
  Intelligence}, 39\penalty0 (10):\penalty0 2030--2044, 2017.

\bibitem[Pezeshkpour et~al.(2020)Pezeshkpour, Tian, and
  Singh]{pezeshkpour2020revisiting-evaluation-kbc}
Pouya Pezeshkpour, Yifan Tian, and Sameer Singh.
\newblock Revisiting evaluation of knowledge base completion models.
\newblock In \emph{{AKBC}}, 2020.

\bibitem[Poon and Domingos(2011)]{poon2011sum}
Hoifung Poon and Pedro Domingos.
\newblock Sum-product networks: A new deep architecture.
\newblock In \emph{{IEEE} International Conference on Computer Vision Workshops
  (ICCV Workshops)}, pages 689--690. IEEE, 2011.

\bibitem[Ruffinelli et~al.(2020)Ruffinelli, Broscheit, and
  Gemulla]{ruffinelli2020teach-dog-tricks}
Daniel Ruffinelli, Samuel Broscheit, and Rainer Gemulla.
\newblock You {CAN} teach an old dog new tricks! on training knowledge graph
  embeddings.
\newblock In \emph{{ICLR}}. OpenReview.net, 2020.

\bibitem[Sato and Kameya(1997)]{sato1997prism}
Taisuke Sato and Yoshitaka Kameya.
\newblock Prism: a language for symbolic-statistical modeling.
\newblock In \emph{IJCAI}, volume~97, pages 1330--1339. Citeseer, 1997.

\bibitem[Shpilka et~al.(2010)Shpilka, Yehudayoff,
  et~al.]{shpilka2010arithmetic}
Amir Shpilka, Amir Yehudayoff, et~al.
\newblock Arithmetic circuits: A survey of recent results and open questions.
\newblock \emph{Foundations and Trends{\textregistered} in Theoretical Computer
  Science}, 5\penalty0 (3--4):\penalty0 207--388, 2010.

\bibitem[Socher et~al.(2013)Socher, Chen, Manning, and Ng]{socher2013reasoning}
Richard Socher, Danqi Chen, Christopher~D Manning, and Andrew Ng.
\newblock Reasoning with neural tensor networks for knowledge base completion.
\newblock In \emph{Advances in Neural Information Processing Systems 26}, 2013.

\bibitem[Sun et~al.(2019)Sun, Deng, Nie, and Tang]{sun2019rotate}
Zhiqing Sun, Zhi{-}Hong Deng, Jian{-}Yun Nie, and Jian Tang.
\newblock {RotatE}: Knowledge graph embedding by relational rotation in complex
  space.
\newblock In \emph{{ICLR} (Poster)}. OpenReview.net, 2019.

\bibitem[Tabacof and Costabello(2019)]{tabacof2019probability}
Pedro Tabacof and Luca Costabello.
\newblock Probability calibration for knowledge graph embedding models.
\newblock \emph{ArXiv}, abs/1912.10000, 2019.

\bibitem[Toutanova and Chen(2015)]{toutanova2015observed}
Kristina Toutanova and Danqi Chen.
\newblock Observed versus latent features for knowledge base and text
  inference.
\newblock In \emph{3rd Workshop on Continuous Vector Space Models and Their
  Compositionality}. ACL, 2015.

\bibitem[Tresp et~al.(2015)Tresp, Esteban, Yang, Baier, and
  Krompass]{tresp2015learning}
Volker Tresp, Crist{\'o}bal Esteban, Yinchong Yang, Stephan Baier, and Denis
  Krompass.
\newblock Learning with memory embeddings.
\newblock \emph{ArXiv}, abs/1511.07972, 2015.

\bibitem[Tresp et~al.(2021)Tresp, Sharifzadeh, Li, Konopatzki, and
  Ma]{tresp2021brain}
Volker Tresp, Sahand Sharifzadeh, Hang Li, Dario Konopatzki, and Yunpu Ma.
\newblock The tensor brain: A unified theory of perception, memory, and
  semantic decoding.
\newblock \emph{Neural Computation}, 35:\penalty0 156--227, 2021.

\bibitem[Trouillon et~al.(2016)Trouillon, Welbl, Riedel, Gaussier, and
  Bouchard]{trouillon2016complex}
Th{\'{e}}o Trouillon, Johannes Welbl, Sebastian Riedel, {\'{E}}ric Gaussier,
  and Guillaume Bouchard.
\newblock Complex embeddings for simple link prediction.
\newblock In \emph{{ICML}}, volume~48 of \emph{{JMLR} Workshop and Conference
  Proceedings}, pages 2071--2080. JMLR.org, 2016.

\bibitem[Tucker(1964)]{tucker64extension}
L.~R. Tucker.
\newblock {T}he extension of factor analysis to three-dimensional matrices.
\newblock In \emph{{C}ontributions to mathematical psychology.}, pages
  110--127. Holt, Rinehart and Winston, 1964.

\bibitem[Valiant(1979)]{valiant1979negation}
Leslie~G. Valiant.
\newblock Negation can be exponentially powerful.
\newblock In \emph{11th Annual ACM Symposium on Theory of Computing}, pages
  189--196, 1979.

\bibitem[{Van den Broeck} et~al.(2011){Van den Broeck}, Taghipour, Meert,
  Davis, and Raedt]{van2011lifted}
Guy {Van den Broeck}, Nima Taghipour, Wannes Meert, Jesse Davis, and Luc~De
  Raedt.
\newblock Lifted probabilistic inference by first-order knowledge compilation.
\newblock In \emph{IJCAI}, pages 2178--2185. IJCAI/AAAI, 2011.

\bibitem[van~der Maaten and Hinton(2008)]{vandermaaten2008tsne}
Laurens van~der Maaten and Geoffrey Hinton.
\newblock Visualizing data using t-sne.
\newblock \emph{Journal of Machine Learning Research}, 9\penalty0
  (86):\penalty0 2579--2605, 2008.

\bibitem[Varin et~al.(2011)Varin, Reid, and Firth]{varin2011composite}
Cristiano Varin, Nancy Reid, and David Firth.
\newblock An overview of composite likelihood methods.
\newblock \emph{Statistica Sinica}, 21, 01 2011.

\bibitem[Ventola et~al.(2022)Ventola, Dhami, and
  Kersting]{ventola2022generative}
Fabrizio Ventola, Devendra~Singh Dhami, and Kristian Kersting.
\newblock Generative clausal networks: Relational decision trees as
  probabilistic circuits.
\newblock In \emph{Inductive Logic Programming: 30th International Conference,
  ILP 2021, Virtual Event, October 25--27, 2021, Proceedings}, pages 251--265.
  Springer, 2022.

\bibitem[Vergari et~al.(2015)Vergari, Di~Mauro, and
  Esposito]{vergari2015simplifying}
Antonio Vergari, Nicola Di~Mauro, and Floriana Esposito.
\newblock Simplifying, regularizing and strengthening sum-product network
  structure learning.
\newblock In \emph{Machine Learning and Knowledge Discovery in Databases:
  European Conference, ECML PKDD 2015, Porto, Portugal, September 7-11, 2015,
  Proceedings, Part II 15}, pages 343--358. Springer, 2015.

\bibitem[Vergari et~al.(2018)Vergari, Peharz, Di~Mauro, Molina, Kersting, and
  Esposito]{vergari2018sum}
Antonio Vergari, Robert Peharz, Nicola Di~Mauro, Alejandro Molina, Kristian
  Kersting, and Floriana Esposito.
\newblock Sum-product autoencoding: Encoding and decoding representations using
  sum-product networks.
\newblock In \emph{Proceedings of the AAAI Conference on Artificial
  Intelligence}, volume~32, 2018.

\bibitem[Vergari et~al.(2019{\natexlab{a}})Vergari, Di~Mauro, and
  Esposito]{vergari2019visualizing}
Antonio Vergari, Nicola Di~Mauro, and Floriana Esposito.
\newblock Visualizing and understanding sum-product networks.
\newblock \emph{Machine Learning}, 108\penalty0 (4):\penalty0 551--573,
  2019{\natexlab{a}}.

\bibitem[Vergari et~al.(2019{\natexlab{b}})Vergari, Di~Mauro, and Van~den
  Broeck]{vergari2019tractable}
Antonio Vergari, Nicola Di~Mauro, and Guy Van~den Broeck.
\newblock Tractable probabilistic models: {R}epresentations, algorithms,
  learning, and applications.
\newblock \emph{Tutorial at the 35th Conference on Uncertainty in Artificial
  Intelligence (UAI)}, 2019{\natexlab{b}}.

\bibitem[Vergari et~al.(2021)Vergari, Choi, Liu, Teso, and Van~den
  Broeck]{vergari2021compositional}
Antonio Vergari, YooJung Choi, Anji Liu, Stefano Teso, and Guy Van~den Broeck.
\newblock A compositional atlas of tractable circuit operations for
  probabilistic inference.
\newblock In \emph{Advances in Neural Information Processing Systems 34
  (NeurIPS)}, pages 13189--13201. Curran Associates, Inc., 2021.

\bibitem[Webb and Domingos(2013)]{webb2013tractable}
W~Austin Webb and Pedro Domingos.
\newblock Tractable probabilistic knowledge bases with existence uncertainty.
\newblock In \emph{Workshops at the Twenty-Seventh AAAI Conference on
  Artificial Intelligence}, 2013.

\bibitem[Xiao et~al.(2016)Xiao, Huang, and Zhu]{xiao2016transg}
Han Xiao, Minlie Huang, and Xiaoyan Zhu.
\newblock Transg : A generative model for knowledge graph embedding.
\newblock In \emph{Annual Meeting of the Association for Computational
  Linguistics}, 2016.

\bibitem[Zhu et~al.(2023)Zhu, Wang, Bundy, Li, Nuamah, Xu, Mauceri, and
  Pan]{zhu2023closer-look-calibration}
Ruiqi Zhu, Fangrong Wang, Alan Bundy, Xue Li, Kwabena Nuamah, Lei Xu, Stefano
  Mauceri, and Jeff~Z. Pan.
\newblock A closer look at probability calibration of knowledge graph
  embedding.
\newblock In \emph{11th International Joint Conference on Knowledge Graphs},
  page 104–109, 2023.

\end{thebibliography}

\cleardoublepage
\appendix

\counterwithin{table}{section}
\counterwithin{figure}{section}
\renewcommand{\thetable}{\thesection.\arabic{table}}
\renewcommand{\thefigure}{\thesection.\arabic{figure}}

\section{Proofs}
\label{sec:proofs}

\subsection{KGE Models as Circuits}
\label{app:kge-as-circuits}

\begin{figure}[!t]
    \centering
    \begin{subfigure}[t]{0.3\linewidth}
        \centering
        \includegraphics[scale=0.36]{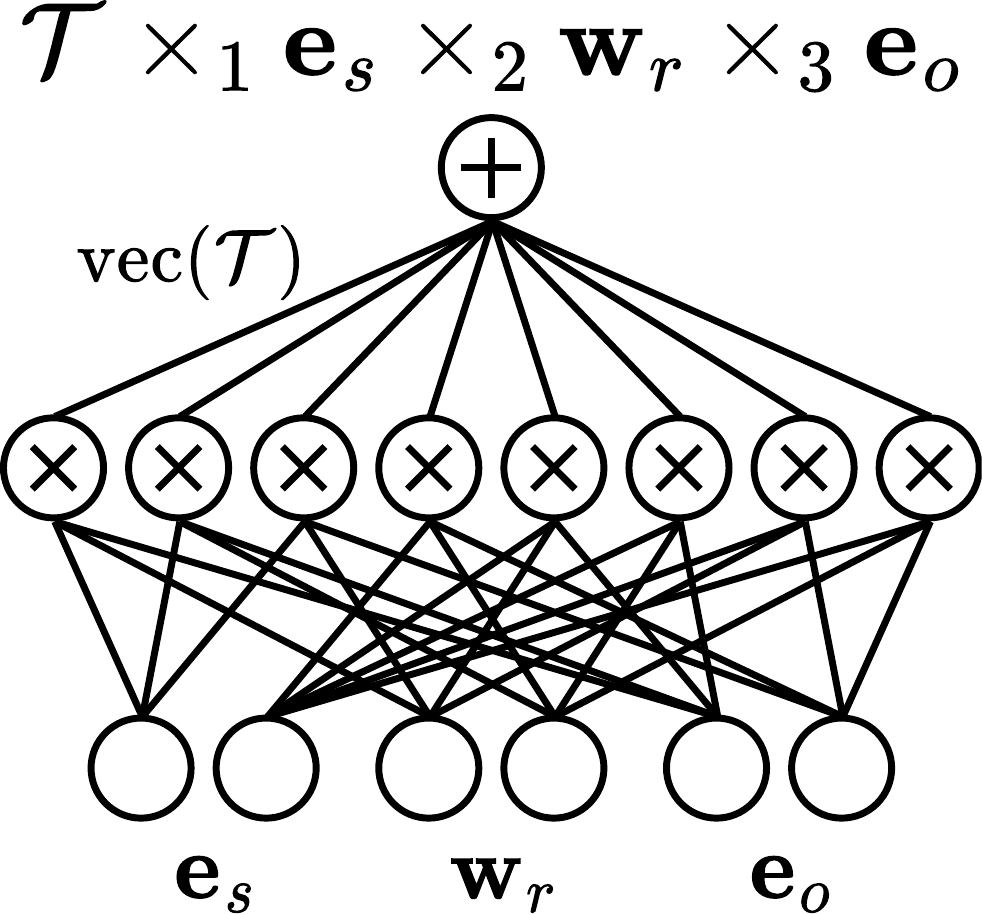}
        \caption{}
        \label{fig:tucker-eval-plain}
    \end{subfigure}
    \hfill
    \begin{subfigure}[t]{0.65\linewidth}
        \centering
        \setlength\arraycolsep{3pt}
        \begin{tikzpicture}[inner sep=2pt, align=left]
            \node[text width=64pt, text height=6pt] (e-caption)
            {\scriptsize $\calE$ embeddings};
            \node[draw, very thick, white!20!violet, text=black, text width=80pt, text height=6pt, below=6pt of e-caption.west, anchor=north west] (e1)
            {\scriptsize $\ve_\mathsf{loxoprofen} \hfill =  \begin{bmatrix}-0.5 & 0.3 \end{bmatrix}$};
            \node[draw, very thick, white!20!violet, text=black, text width=80pt, text height=6pt, below=1pt of e1] (e2)
            {\scriptsize $\ve_\mathsf{COX2} \hfill =  \begin{bmatrix} 0.1 & -0.4 \end{bmatrix}$};
            \node[text width=80pt, text height=6pt, below=1pt of e2] (en)
            {\scriptsize $\ve_\mathsf{phos\text{-}acid} \hfill = \begin{bmatrix}
                -0.9 & 0.4
            \end{bmatrix}$};

            \node[text width=64pt, text height=6pt, below=4pt of en.south west, anchor=north west] (r-caption)
            {\scriptsize $\calR$ embeddings};
            \node[text width=80pt, text height=6pt, below=4pt of r-caption.west, anchor=north west] (r1)
            {\scriptsize $\vw_\mathsf{inhibits} \hfill = \begin{bmatrix} 0.9 & -0.6 \end{bmatrix}$};
            \node[draw, very thick, white!20!violet, text=black, text width=80pt, text height=6pt, below=1pt of r1] (r2)
            {\scriptsize $\vw_\mathsf{interacts} \hfill = \begin{bmatrix} 0.8 &\quad 0.2 \end{bmatrix}$};
            \node[text width=136pt, text height=6pt, right=24pt of e-caption.north east, anchor=north west, align=center] (score-function)
            {\scriptsize $\phi_\TuckER(\mathsf{loxprofen}, \mathsf{interacts}, \mathsf{COX2})$};
            \node[below=-1pt of score-function.south, anchor=north] (eval-circuit)
            {\includegraphics[scale=0.36]{./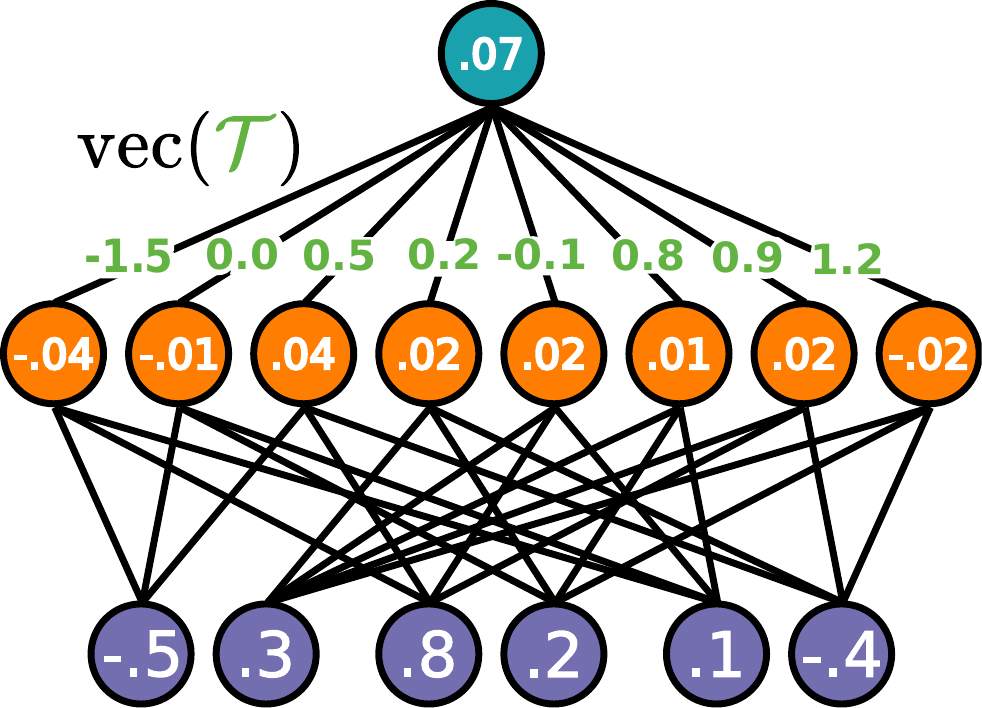}};

            \node[text width=34pt, text height=6pt, anchor=north, align=center, below=-4pt of eval-circuit.south] (p-embedding)
            {\scriptsize $\vw_\mathsf{interacts}$};

            \node[text width=34pt, text height=6pt, anchor=west, align=left, right=2pt of p-embedding] (o-embedding)
            {\scriptsize $\ve_\mathsf{COX2}$};

            \node[text width=34pt, text height=6pt, anchor=east, align=right, left=0pt of p-embedding] (s-embedding)
            {\scriptsize $\ve_\mathsf{loxoprofen}$};
        \end{tikzpicture}
        \caption{}
        \label{fig:tucker-eval-output}
    \end{subfigure}
    \caption{\textbf{Evaluation of circuit representations of score functions as in neural networks.}
    Feed-forward evaluation of the \TuckER score function as a circuit over 2-dimensional embeddings and parameterised by the core tensor $\calT$ (see proof of \cref{prop:kge-models-as-circuits} below) (a).
    Given a triple $(\mathsf{loxoprofen},\mathsf{interacts},\mathsf{COX2})$, the input units (\cref{defn:circuit}) map subject, predicate and object to their embedding entries (in violet boxes).
    Then, the circuit is evaluated similarly to neural networks: the products (in orange) are evaluated before the weighted sum (in blue), which is parameterised by the core tensor values (in green) (b).
    The output of the circuit is the score of the input triple.
    }
    \label{fig:tucker-eval}
\end{figure}

\begin{reprop}[Score functions of KGE models as circuits]{prop:kge-models-as-circuits}
    The computational graphs of the score functions $\phi$ of \CP, \RESCAL, \TuckER and \ComplEx are smooth and decomposable circuits over $\vX=\{S,R,O\}$, whose evaluation cost is $\cost(\phi)\in\Theta(|\phi|)$, where $|\phi|$ denotes the number of edges in the circuit, also called its size. For example, the size of the circuit for \CP is $|\phi_\CP|\in\calO(d)$.
\end{reprop}
\begin{proof}
    We present the proof by construction for \TuckER \citep{balazevic2019tucker},
    as \CP \citep{lacroix2018cp-kbc}, \RESCAL \citep{nickel2011rescal} and \ComplEx \citep{trouillon2016complex} define score functions that are a specialisation of it \cite{balazevic2019tucker} (see below).
    Given a triple $(s,r,o)\in\calE\times\calR\times\calE$, the \TuckER score function computes
    \begin{equation}
        \label{eq:tucker-score-function}
        \phi_\TuckER(s,r,o) = \calT\times_1 \ve_s\times_2 \vw_r\times_3 \ve_o = \sum\nolimits_{i=1}^{d_e} \sum\nolimits_{j=1}^{d_r} \sum\nolimits_{k=1}^{d_e} \tau_{ijk}e_{si}w_{rj}e_{ok}
    \end{equation}
    where $\calT\in\bbR^{d_e\times d_r\times d_e}$ denotes the core tensor, $\times_n$ denotes the tensor product along the $n$-th mode, and $d_e,d_r$ denote respectively the embedding sizes of entities and predicates (which might not be equal).
    To see how this parametrization generalises that of \CP, \RESCAL and \ComplEx, consider for example the score function of \CP on $d$-dimensional embeddings.
    It can be obtained by (i) setting the core tensor $\calT$ to be a \emph{diagonal tensor} having ones on the superdiagonal, and (ii) having two distinct embedding instances for each entity that are used depending on their role (either subject or object) in a triple.
    The embeddings $\ve_s,\ve_o\in\bbR^{d_e}$ (resp. $\vw_r\in\bbR^{d_r}$) are rows of the matrix $\vE\in\bbR^{|\calE|\times d_e}$ (resp. $\vW\in\bbR^{|\calR|\times d_r}$), which associates an embedding to each entity (resp. predicate).

    \shortparagraph{Constructing the circuit.}
    For the construction of the equivalent circuit it suffices to (i) create an input unit for each $i$-th entry of an embedding for subjects, predicates and objects, as to implement a look-up table that computes the corresponding embedding value for an entity or predicate, and (ii) transform the tensor multiplications into corresponding sum and product units.
    We start by introducing the input units $l^S_i$, $l^R_j$ and $l^O_k$ for $1\leq i,k\leq d_e$ and $1\leq j\leq d_r$ as parametric mappers over variables $S$, $R$ and $O$, respectively.
    The input units $l^S_i$ and $l^O_k$ (resp. $l^R_j$) are parameterised by the matrix $\vE$ (resp. $\vW$) such that $l^S_i(s;\vE) = e_{si}$ and $l^O_i(o;\vE) = e_{oi}$ for some $s,o\in\calE$ (resp. $l^R_j(r;\vW) = w_{ri}$ for some $r\in\calR$).
    To encode the tensor products in \cref{eq:tucker-score-function}
    we introduce $d_e^2\cdot d_r$ product units $\phi_{ijk}$, each of them computing the product of a combination of the outputs of input units.
    \begin{equation*}
        \phi_{ijk}(s,r,o) = l^S_i(s)\cdot l^R_j(r)\cdot l^O_k(o)
    \end{equation*}
    Finally, a sum unit $\phi_\text{out}$ parameterised by the core tensor $\calT\in\bbR^{d_e\times d_r\times d_r}$ computes a weighted summation of the outputs given by the product units, i.e.,
    \begin{equation*}
        \phi_\text{out}(s,r,o) = \!\!\!\!\!\!\sum_{\substack{(i,j,k)\in \\ [d_e]\times[d_r]\times [d_e]}}\!\!\!\!\!\! \tau_{ijk}\cdot \phi_{ijk}(s,r,o)
    \end{equation*}
    where $[d]$ denotes the set $\{1,\ldots,d\}$ and $\tau_{ijk}$ is the $(i,j,k)$-th entry of $\calT$.
    We now observe that the constructed circuit $\phi_\text{out}$ encodes the \TuckER score function (\cref{eq:tucker-score-function}), as $\phi_\TuckER(s,r,o) = \phi_\text{out}(s,r,o)$ for any input triple $(s,r,o)\in\calE\times\calR\times\calE$.

    \shortparagraph{Circuit evaluation and properties.}
    Evaluating the score function of \TuckER corresponds to performing a feed-forward pass of its circuit representation, where each computational unit is evaluated once, as we illustrate in \cref{fig:tucker-eval}.
    As such, the cost of evaluating the score function is proportional to the size of its circuit representation, i.e., $\cost(\phi) \in \Theta(|\phi|)$ where $|\phi| \in \calO(d_e^2\cdot d_r)$ is the number of edges.
    In \cref{tab:base-circuit-size} we show how the sizes of the circuit representation of the other score functions increases with respect to the embedding size.
    Finally, since each product unit $\phi_{ijk}$ is defined on the same scope (see \cref{defn:circuit}) $\{S,R,O\}$ and fully decompose it into its inputs (i.e., into $\{S\},\{R\},\{O\}$), and the inputs of the sum unit $\phi_\text{out}$ are all defined over the same scope, we have that the circuit satisfies smoothness and decomposability (\cref{defn:smoothness-decomposability}).
\end{proof}

\begin{table}[t!]
    \centering
    \tablesize
    \caption{\textbf{Score functions as compact circuits.} Asymptotic size of circuits encoding the score functions of \CP, \RESCAL, \ComplEx and \TuckER, with respect to the embedding size. For \TuckER, $d_e$ and $d_r$ denote the embedding sizes for entities and predicates, respectively.\\}
    \label{tab:base-circuit-size}
    \begin{tabular}{cc@{\qquad\qquad}cc}
        \toprule
        \textbf{KGE Model} & \textbf{Circuit Size} & \textbf{KGE Model} & \textbf{Circuit Size} \\
        \midrule
        \CP      & $\calO(d)$ & \RESCAL & $\calO(d^2)$ \\
        \ComplEx & $\calO(d)$ & \TuckER & $\calO(d_e^2\cdot d_r)$ \\
        \bottomrule
    \end{tabular}
\end{table}

Furthermore, in \cref{lem:omni-compatibility-kge-gecks} we show that the circuit representations of \CP, \RESCAL, \TuckER and \ComplEx and the proposed \OurModels (\cref{sec:casting-kge-models-to-pcs}) satisfy a structural property known as \emph{omni-compatibility} (see \cref{defn:omni-compatibility}).
In a nutshell, the score functions of the aforementioned KGE models and \OurModels are circuits that fully decompose their scope $\{S,R,O\}$ into $(\{S\}, \{R\}, \{O\})$.
The satisfaction of this property will be useful to prove both \cref{thm:marginalisation-squared-circuits} and \cref{thm:tractable-integration-knowledge} later in this appendix.

\begin{alem}[KGE models and derived \OurModels are omni-compatible]
    \label{lem:omni-compatibility-kge-gecks}
    The circuit representation of the score functions of \CP, \RESCAL, \TuckER, \ComplEx and their \OurModels counterparts obtained by non-negative restriction (\cref{sec:non-negative-restriction}) or squaring (\cref{sec:non-monotonic-squaring}) are omni-compatible (see \cref{defn:omni-compatibility}).
\end{alem}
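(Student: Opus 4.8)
The plan is to reduce omni-compatibility to a single structural condition and then verify that this condition is preserved by both of our constructions. Recall from \cref{defn:omni-compatibility} that a smooth and decomposable circuit over $\vX=\{S,R,O\}$ is omni-compatible exactly when every product unit fully factorises its scope into the singleton partition $(\{S\},\{R\},\{O\})$; such a circuit is then compatible with, and hence tractably multiplicable against, any smooth and decomposable circuit over the same variables. Since this condition depends only on the \emph{scopes} attached to the units and not on their numerical parameters, the strategy is: (i) read off the condition for the base KGE circuits directly from the construction in \cref{prop:kge-models-as-circuits}; (ii) observe that it is untouched by the non-negative restriction; and (iii) show that it is preserved under squaring.

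First I would invoke the proof of \cref{prop:kge-models-as-circuits}. There the \TuckER circuit---of which \CP, \RESCAL and \ComplEx are specialisations---is built so that every product unit has the form $\phi_{ijk}(s,r,o)=l^S_i(s)\cdot l^R_j(r)\cdot l^O_k(o)$, i.e., a product of three univariate input units with scopes $\{S\}$, $\{R\}$ and $\{O\}$. Thus each product unit decomposes its scope $\{S,R,O\}$ into exactly the singleton partition required by \cref{defn:omni-compatibility}, and all sum units are smooth (\cref{defn:smoothness-decomposability}). This already certifies omni-compatibility for the circuit representations of all four base KGE models.

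Next I would handle the non-negative restriction (\cref{sec:non-negative-restriction}). For \NNegCP, \NNegRESCAL and \NNegTuckER this conversion only restricts embedding entries and core-tensor parameters to $\bbR_+$, leaving the connectivity of the computational graph---and therefore the scope attached to every unit---unchanged; for \NNegComplEx the reparametrisation of \cref{app:realising-monocomplex} likewise alters values but not structure. Because omni-compatibility is the purely structural condition isolated above, it is inherited verbatim by every non-negative \OurModel.

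Finally, for the squared \OurModels I would argue that the tractable product used to form $\phi^2=\phi\cdot\phi$ preserves full factorisation, and this is the step I expect to be the crux. Squaring is realised by multiplying the circuit with a copy of itself using the product algorithm for compatible circuits (\cref{app:tractable-product}, following \citet{vergari2021compositional}). I would proceed by structural induction on this algorithm: the base case multiplies two univariate input units over the same singleton scope, yielding a unit still over that singleton; the inductive step multiplies two product units, each factorising $(\{S\},\{R\},\{O\})$, so their product regroups into a single product unit over the same singleton partition, while products of sum units distribute into sums of such products and keep the scopes intact. Hence the squared circuit is again smooth, decomposable, and has every product unit factorising the scope into $(\{S\},\{R\},\{O\})$, i.e., it is omni-compatible. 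The only subtlety to check carefully is that the algorithm never introduces a product unit whose inputs share a variable---which cannot happen precisely because both factors already decompose the scope in the same (singleton) way---so that decomposability, and with it omni-compatibility, is maintained throughout.
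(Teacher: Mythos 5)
Your proposal is correct and follows essentially the same route as the paper's proof: reduce omni-compatibility to the structural condition that every product unit factorises $\{S,R,O\}$ into $(\{S\},\{R\},\{O\})$, read this off from the construction in \cref{prop:kge-models-as-circuits}, note that the non-negative restriction changes only parameters and not structure, and show squaring preserves the form. The only (cosmetic) difference is in the last step, where you argue by structural induction over the product algorithm while the paper directly expands $\bigl(\sum_{i}\theta_{i}\prod_{k}l_{ik}(X_k)\bigr)^2$ into a sum of fully-factorised products with new input units $l_{ik}(X_k)l_{jk}(X_k)$; for these shallow circuits your induction unrolls to exactly that identity.
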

\begin{proof}
To begin,
we note that to comply with \cref{defn:omni-compatibility}
every omni-compatible circuit shall contain product units that fully factorise over their scope.
In other words, for every product unit $n$ with scope $\scope(n)=\vX$, its scope shall decompose as $(\{X_1\},\{X_2\},\ldots,\{X_{|\scope(n)|}\})$.
To see why, consider a circuit $\phi$ with a product unit $n$ whose scope is decomposed as $\scope(n)=(\vX, \vY)$.
It is easy to construct another circuit $\phi^{\prime}$ that is not compatible with $\phi$ by having a product unit $m$ with scope $\scope(m)=\scope(n)$ decomposed in a way that it cannot be rearranged by introducing additional decomposable product units (\cref{defn:smoothness-decomposability}), e.g., $\scope(m)=(\vZ, \vW)$ with $\vZ\cap\vX\neq\emptyset$ and $\vW\cap\vX\neq\emptyset$.
As such, every omni-compatible circuit over $\vX$ must be representable in the
form $\sum_{i=1}^{N}\theta_{i}\prod_{k=1}^{|\vX|}l_{ik}(X_k)$ without any increase in its size.

Now, it is easy to verify that the circuit representations of \CP, \RESCAL, \TuckER and \ComplEx follow the above form, with a different number $N$ of product units feeding the single sum unit,
but each one decomposing its scope $\{S,R,O\}$ into $(\{S\}, \{R\}, \{O\})$ (see \cref{fig:kge-circuits}).
From this it
follows that \NNegCP, \NNegRESCAL, \NNegTuckER and \NNegComplEx are omni-compatible as well, as they share the same structure of their energy-based counterpart, while just enforcing non-negative activations
via reparametrisation
(see \cref{sec:non-negative-restriction}).

Finally, we note that \SquaredCP, \SquaredRESCAL, \SquaredTuckER and \SquaredComplEx are still omni-compatible because the square operation yields the following fully-factorised representation: $(\sum_{i=1}^{N}\theta_{i}\prod_{k=1}^{|\vX|}l_{ik}(X_k))^2$ $=\sum_{i=1}^{N}\sum_{j=1}^{N}\theta_{i}\theta_{j}\prod_{k=1}^{|\vX|}l_{ik}(X_k)\prod_{k=1}^{|\vX|}l_{jk}(X_k)$
which can be easily rewritten as $\sum_{h=1}^{N^2}\omega_{h}\prod_{k=1}^{|\vX|}l_{hk}(X_k)$
where now $h$ ranges over the Cartesian product of $i\in[N]$ and $ j\in[N]$, $\omega_h$ is the product of $\theta_i\theta_j$ and  $l_{hk}$ is a new input unit that encodes $l_{ik}(X_k)l_{jk}(X_k)$ for a certain variable index $k$.
\end{proof}

\subsection{Efficient Summations over Circuits}
\label{app:efficient-summations}

\begin{reprop}[Efficient Summations]{prop:efficient-summations}
    Let $\phi$ be a smooth and decomposable circuit over $\vX=\{S,R,O\}$ that encodes the score function of a KGE model.
    The sum $\sum_{s\in\calE} \sum_{r\in\calR} \sum_{o\in\calE} \phi(s,r,o)$ or any other summation over subjects, predicates or objects can be computed in time $\calO((|\calE| + |\calR|) \cdot |\phi|)$.
\end{reprop}
\begin{proof}
    A proof for the computation of marginal probabilities in smooth and decomposable \emph{probabilistic circuits} (PCs) defined over discrete variables in linear time with respect to their size can be found in \citep{choi2020pc}.
    This proof also applies for computing summations in smooth and decomposable circuits that do not necessarily corresponds to marginal probabilities \citep{vergari2021compositional}.
    The satisfaction of smoothness and decomposability (\cref{defn:smoothness-decomposability}) in a circuit $\phi$ permits to push outer summations inside the computational graph until input units are reached, where summations are actually performed independently and on smaller sets of variables (i.e., $\{S\}$, $\{R\}$, $\{O\}$ in our case), and then to evaluate the circuit only once.

    Here we take into account the computational cost of summing over each input unit (see proof of \cref{prop:kge-models-as-circuits}), which is $\calO(|\calE|)$ (resp. $\calO(|\calR|)$) for those defined on variables $S,O$ (resp. $R$).
    Since the size of the circuit $|\phi|$ must be at least the number of input units, we retrieve that the overall complexity for computing summations as stated in the proposition is $\calO((|\calE| + |\calR|) \cdot |\phi|)$.

    As an example, consider the \CP score function computing $\phi_\CP(s,r,o) = \langle \ve_s,\vw_r,\ve_o \rangle$ for some triple $(s,r,o)$ and embeddings $\ve_s,\vw_r,\ve_o\in\bbR^d$.
    We can compute $\sum_{s\in\calE} \sum_{r\in\calR} \sum_{o\in\calE} \phi_\CP(s,r,o)$ by pushing the outer summations inside the trilinear product, i.e., by computing it as $\langle \sum_{s\in\calE} \ve_s, \sum_{r\in\calR} \vw_r, \sum_{o\in\calE} \ve_o \rangle$, which requires time $\calO((|\calE| + |\calR|)\cdot d)$.
\end{proof}

\subsection{Efficient Summations over Squared Circuits}
\label{app:marginalisation-squared-circuits}

\begin{rethm}[Efficient summations of squared \OurModels]{thm:marginalisation-squared-circuits}
    Performing summations as stated in \cref{prop:efficient-summations} on \SquaredCP, \SquaredRESCAL, \SquaredTuckER and \SquaredComplEx can be done in time $\calO((|\calE| + |\calR|)\cdot |\phi|^2)$.
\end{rethm}
\begin{proof}
    In \cref{lem:omni-compatibility-kge-gecks} we showed that the circuit representations $\phi$ of \CP, \RESCAL, \TuckER and \ComplEx are omni-compatible (see \cref{defn:omni-compatibility}).
    As a consequence, $\phi$ is compatible (see \cref{defn:compatibility}) with itself.
    Therefore, \cref{prop:tractable-product} ensures that we can construct the product circuit $\phi \cdot \phi$ (i.e., $\phi^2$) as a smooth and decomposable circuit having size $\calO(|\phi|^2)$ in time $\calO(|\phi|^2)$.
    Since $\phi^2$ is still smooth and decomposable, \cref{prop:efficient-summations} guarantees that we can perform summations in time $\calO((|\calE| + |\calR|)\cdot |\phi|^2)$.
\end{proof}

\subsection{Circuits encoding Domain Constraints}
\label{app:circuit-domain-constraints}

In \cref{defn:support-determinism} we introduce the concepts of \emph{support} and \emph{determinism}, whose definition is useful to describe \emph{constraint circuits} in \cref{defn:constraint-circuit}.

\begin{adefn}[Support and Determinism \citep{choi2020pc,vergari2021compositional}]
    \label{defn:support-determinism}
    In a circuit the \emph{support} of a computational unit $n$ over variables $\vX$ computing $\phi_n(\vX)$ is defined as the set of value assignments to variables in $\vX$ such that the output of $n$ is non-zero, i.e., $\supp(n) = \{\vx\in\val(\vX) \mid \phi_n(\vX) \neq 0\}$.
    A sum unit $n$ is \emph{deterministic} if its inputs have disjoint \emph{supports}, i.e., $\forall i,j\in\inscope(n), i \neq j\colon \supp(i)\cap\supp(j) = \emptyset$.
\end{adefn}

\begin{adefn}[Constraint Circuit \citep{ahmed2022semantic}]
    \label{defn:constraint-circuit}
    Given a propositional logic formula $K$, a \emph{constraint circuit} $c_K$ is a smooth and decomposable PC over variables $\vX$ with \emph{deterministic} sum units (\cref{defn:support-determinism}) and indicator functions as input units, such that $c_K(\vx)=\Ind{\vx\models K}$ for any $\vx\in\val(\vX)$.
\end{adefn}

In general, we can compile any propositional logic formula into a constraint circuit (\cref{defn:constraint-circuit}) by leveraging knowledge compilation techniques \citep{darwiche2002knowledge,darwiche2011sdd,oztok2015top-down-sdds}.
For domain constraints (\cref{defn:domain-constraint}) this compilation process is straightforward, as we detail in the following proposition and proof.

\begin{aprop}[Circuit encoding domain constraints]
    \label{prop:domain-constraints-circuit}
    Let $K=K_{r_1}\lor\ldots\lor K_{r_m}$ be a disjunction of domain constraints defined over a set of predicates $\calR=\{r_1,\ldots, r_m\}$ and a set of entities $\calE$ (\cref{defn:domain-constraint}).
    We can compile $K$ into a constraint circuit $c_K$ (\cref{defn:constraint-circuit}) defined over variables $\vX=\{S,R,O\}$ having size $\calO(|\calE| \cdot |\calR|)$ in the worst case and $\calO(|\calE| + |\calR|)$ in the best case.
\end{aprop}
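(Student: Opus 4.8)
The plan is to construct the constraint circuit $c_K$ directly and by hand, exploiting the special product-of-disjunctions structure of a domain constraint (\cref{defn:domain-constraint}), rather than invoking a generic knowledge compiler. Each individual domain constraint $K_r \equiv (\lor_{u\in\kappa_S(r)} S=u) \land R=r \land (\lor_{v\in\kappa_O(r)} O=v)$ is already in a conjunctive form that decomposes cleanly over the three variables $\{S,R,O\}$, so it suggests a product unit whose three inputs encode the three conjuncts separately.

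\textbf{Step 1: compile a single $K_r$.} For a fixed predicate $r$, I would build an input unit over $R$ that is the indicator $\Ind{R=r}$, an input-level structure over $S$ that computes $\Ind{S\in\kappa_S(r)} = \lor_{u\in\kappa_S(r)} \Ind{S=u}$, and symmetrically one over $O$. The disjunction over $S$ can be realised as a smooth, \emph{deterministic} sum unit whose inputs are the mutually-exclusive indicators $\{\Ind{S=u}\}_{u\in\kappa_S(r)}$ with unit weights; determinism (\cref{defn:support-determinism}) holds because distinct assignments $S=u$ have disjoint supports. A single product unit multiplying these three sub-circuits then computes $\Ind{(s,r,o)\models K_r}$ and fully factorises its scope into $(\{S\},\{R\},\{O\})$, hence is smooth and decomposable. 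Its size is $\calO(|\kappa_S(r)| + |\kappa_O(r)|)$.

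\textbf{Step 2: combine into $K$.} Since the $K_{r_i}$ involve mutually exclusive values of $R$ (each pins $R=r_i$), their supports are pairwise disjoint, so I can combine the $m$ product units from Step 1 into a single top \emph{deterministic} sum unit with unit weights, yielding $c_K(s,r,o)=\Ind{(s,r,o)\models K}$. Smoothness of this sum requires all inputs to share scope $\{S,R,O\}$, which they do by construction. Determinism follows from the disjointness of supports across distinct predicates.

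\textbf{Step 3: size analysis.} The total size is $\calO(\sum_{r\in\calR}(|\kappa_S(r)|+|\kappa_O(r)|))$. In the worst case every entity is coherent with every predicate as both subject and object, giving $\calO(|\calE|\cdot|\calR|)$; in the best case the coherent sets are of constant size (or the indicator sub-circuits over $S$ and $O$ can be shared across predicates), giving $\calO(|\calE|+|\calR|)$. The main subtlety I would watch is ensuring the resulting sum units are genuinely deterministic and the product units genuinely decomposable so that $c_K$ meets every requirement of \cref{defn:constraint-circuit} — this is the part where an ad-hoc construction can silently violate a structural property, so I would verify disjointness of supports explicitly at both the disjunction-over-entities level and the top-level disjunction-over-predicates level. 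Everything else is bookkeeping on the circuit size.
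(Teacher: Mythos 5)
Your construction is correct and is essentially the paper's own proof: both compile $K$ directly by replacing conjunctions with product units and the (deterministic) disjunctions with sum units, verify smoothness and decomposability from the full factorisation of each product's scope into $(\{S\},\{R\},\{O\})$, and count $|\calR|$ conjuncts of $\calO(|\calE|)$ terms to get the $\calO(|\calE|\cdot|\calR|)$ worst case. The only cosmetic difference is the best case: the paper assumes all predicates share the same domains $\kappa_S,\kappa_O$ and collapses $K$ into a single conjunction of three disjunctions of size $\calO(|\calE|+|\calR|)$, which is exactly your shared-sub-circuit variant (your constant-size-domain scenario achieves the bound as well).
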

\begin{proof}
    Let $K=K_{r_1}\lor\ldots\lor K_{r_m}$ be a disjunction of domain constraints (\cref{defn:domain-constraint}) where
    \begin{equation*}
        K_r \equiv S \in \kappa_S(r) \land R = r \land O \in \kappa_O(r) \equiv (\lor_{u\in\kappa_S(r)} S = u) \land R = r \land (\lor_{v\in\kappa_O(r)} O = v).
    \end{equation*}
    Note that the disjunctions in $K$ are deterministic, i.e., only one of their argument can be true at the same time.
    This enables us to construct the constraint circuit $c_K$ such that $c_K(s,r,o) = \Ind{(s,r,o)\models K}$ for any triple
    by simply replacing conjunctions and disjunctions with product and sum units, respectively.
    Note that $c_K$ is indeed smooth and decomposable (\cref{defn:smoothness-decomposability}), as the inputs of the sum units are product units having scope $\{S,R,O\}$ that are fully factorised into $(\{S\}, \{R\}, \{O\})$.
    Moreover, $K$ is a disjunction of $|\calR|$ conjunctive formulae having $\calO(|\calE|)$ terms, and therefore $|c_K|=\calO(|\calE| \cdot |\calR|)$ in the worst case.
    In the best case of every predicate sharing the same subject and object domains $\kappa_S,\kappa_O\subseteq\calE$, we can simplify $K$ into a conjunction of three disjunctive expressions, i.e.,
    \begin{equation*}
        K \equiv (\lor_{u\in\kappa_S} S=u) \land (\lor_{r\in\calR} R=r) \land (\lor_{v\in\kappa_O} O=v)
    \end{equation*}
    that can be easily compiled into a constraint circuit $c_K$ having size $\calO(|\calE|+|\calR|)$, by again noticing that disjunctions are deterministic.
    In real-world KGs like \textsf{ogbl-biokg} \citep{hu2020ogb} several predicates share the same subject and object domains, and this permits to have much smaller constraint circuits.
\end{proof}

\subsection{Efficient Integration of Domain Knowledge in \OurModels}
\label{app:tractable-integration-knowledge}

\begin{rethm}[Tractable integration of constraints in \OurModels]{thm:tractable-integration-knowledge}
    Let $c_K$ be a constraint circuit encoding a logical constraint $K$ over variables $\{S,R,O\}$.
    Then exactly computing
    the partition function $Z_K$
    of
    the product
    $\phi_\mathsf{pc}(s,r,o) \cdot c_K(s,r,o) \propto p_K(s,r,o)$
    for any \OurModel $\phi_\mathsf{pc}$ derived from \CP, \RESCAL, \TuckER or \ComplEx (\cref{sec:casting-kge-models-to-pcs})
    can be done in time $\calO((|\calE| + |\calR|)\cdot |\phi_\mathsf{pc}|\cdot |c_K|)$.
\end{rethm}
\begin{proof}
    In \cref{lem:omni-compatibility-kge-gecks} we showed that the \OurModels $\phi_\mathsf{pc}$ derived from \CP, \RESCAL, \TuckER and \ComplEx via non-negative restriction (\cref{sec:non-negative-restriction}) or squaring (\cref{sec:non-monotonic-squaring}) are omni-compatible (see \cref{defn:omni-compatibility}).
    As a consequence, $\phi_\mathsf{pc}$ is always compatible with $c_K$ regardless of the encoded logical constraint $K$, since constraint circuits are by definition smooth and decomposable (\cref{defn:constraint-circuit}).
    By applying \cref{prop:tractable-product}, we retrieve that we can construct $\phi_\mathsf{pc}\cdot c_K$ as a smooth and decomposable circuit of size $\calO(|\phi_\mathsf{pc}|\cdot |c_K|)$ and in time $\calO(|\phi_\mathsf{pc}|\cdot |c_K|)$.
    As the resulting product circuit is smooth and decomposable, \cref{prop:efficient-summations} guarantees that we can compute its partition function $Z_K = \sum_{s\in\calE} \sum_{r\in\calR} \sum_{o\in\calE} (\phi_\mathsf{pc}(s,r,o) \cdot c_K(s,r,o))$ in time $\calO((|\calE| + |\calR|)\cdot |\phi_\mathsf{pc}|\cdot |c_K|)$.
\end{proof}

\section{Circuits}
\label{app:circuits}

\subsection{Tractable Product of Circuits}
\label{app:tractable-product}

In this section, we provide the formal definition of \emph{compatibility} (\cref{defn:compatibility}) and \emph{omni-compatibility} (\cref{defn:omni-compatibility}), as stated by \citet{vergari2021compositional}.
Given two compatible circuits, \cref{prop:tractable-product} guarantees that we can represent their product as a smooth and decomposable circuit efficiently.

\begin{adefn}[Compatibility]
    \label{defn:compatibility}
    Two circuits $\phi,\phi'$ over variables $\vX$ are \emph{compatible} if (1) they are smooth and decomposable, and (2) any pair of product units $n\in\phi,m\in\phi'$ having the same scope can be rearranged into binary products that are mutually compatible and decompose their scope in the same way, i.e., $(\scope(n)=\scope(m)) \implies (\scope(n_i) = \scope(m_i),\ n_i\text{ and }m_i\text{ are compatible})$ for some rearrangements of the inputs of $n$ (resp. $m$) into $n_1,n_2$ (resp. $m_1,m_2$).
\end{adefn}

\begin{adefn}[Omni-compatibility]
    \label{defn:omni-compatibility}
    A circuit $\phi$ over variables $\vX$ is \emph{omni-compatible} if it is compatible with any smooth and decomposable circuit over $\vX$.
\end{adefn}

\begin{aprop}[Tractable product of circuits]
    \label{prop:tractable-product}
    Let $\phi,\phi'$ be two compatible (\cref{defn:compatibility}) circuits.
    We can represent the product circuit $\phi\cdot\phi'$ computing the product of the outputs of $\phi$ and $\phi'$ as a smooth and decomposable circuit having size $\calO(|\phi|\cdot|\phi'|)$ in time $\calO(|\phi|\cdot|\phi'|)$.
    Moreover, if both $\phi$ and $\phi'$ are omni-compatible (\cref{defn:omni-compatibility}), then also the product circuit $\phi\cdot\phi'$ is omni-compatible. 
\end{aprop}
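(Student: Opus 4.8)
The plan is to build $\phi\cdot\phi'$ explicitly via a single bottom-up pass that pairs units of $\phi$ and $\phi'$ sharing the same scope, so that each new unit $(n,m)$ computes exactly $\phi_n\cdot\phi'_m$. Processing both circuits in topological order, whenever a unit $n$ of $\phi$ and a unit $m$ of $\phi'$ satisfy $\scope(n)=\scope(m)$ I would emit a corresponding unit in the product circuit, branching into three cases according to the types of $n$ and $m$. I then argue that each case yields a unit that (i) computes the correct product and (ii) preserves smoothness and decomposability (\cref{defn:smoothness-decomposability}).

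The two straightforward cases are: \textbf{(a)} both input units, where I introduce a new input unit encoding the pointwise product of the two input functions over the shared scope; and \textbf{(b)} both sum units, where I expand by distributivity into $\sum_{i,j}\theta_i\gamma_j\,\phi_{n_i}\phi'_{m_j}$, so the new sum unit has one input per pair $(n_i,m_j)$, each recursively built as the product unit $(n_i,m_j)$, and smoothness of the originals guarantees all these summands share a common scope. The main obstacle is \textbf{(c)}, both product units, which is precisely where \emph{compatibility} (\cref{defn:compatibility}) is indispensable: I must pair the factors of $n$ with those of $m$. Compatibility guarantees that $n$ and $m$ can be rearranged into binary products decomposing their common scope in the same way, so that with $n$ factoring as $n_1,n_2$ and $m$ as $m_1,m_2$ with $\scope(n_1)=\scope(m_1)$ and $\scope(n_2)=\scope(m_2)$, we have $\phi_n\phi'_m=(\phi_{n_1}\phi'_{m_1})(\phi_{n_2}\phi'_{m_2})$. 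The new product unit therefore takes the recursively built products $(n_1,m_1)$ and $(n_2,m_2)$ as inputs, and since these aligned scopes are disjoint (each original product being decomposable) the result remains decomposable.

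For the complexity bound I would note that the pass creates at most one unit per pair of same-scope units, hence at most $|\phi|\cdot|\phi'|$ units and a proportional number of edges, giving both size and construction time $\calO(|\phi|\cdot|\phi'|)$. Correctness, i.e. that the output encodes $\phi\cdot\phi'$, then follows by structural induction over the pass, while smoothness and decomposability hold case by case as above.

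Finally, for the omni-compatibility claim I would invoke the characterisation established in the proof of \cref{lem:omni-compatibility-kge-gecks}: a circuit over $\vX$ is omni-compatible iff each of its product units fully factorises its scope into singletons. If both $\phi$ and $\phi'$ enjoy this property, then in case \textbf{(c)} every paired factor has singleton scope, so each product unit of $\phi\cdot\phi'$ again decomposes into singletons; by induction the whole product circuit does, and hence it is omni-compatible (\cref{defn:omni-compatibility}).
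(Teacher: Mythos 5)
Your construction is correct in its essentials, but note first that the paper itself never proves this proposition: it is imported from \citet{vergari2021compositional}, and the appendix merely restates the definitions of compatibility and omni-compatibility before using the result as a black box in \cref{thm:marginalisation-squared-circuits} and \cref{thm:tractable-integration-knowledge}. What you have written is therefore a reconstruction of the standard product algorithm from that reference, and it has all the right ingredients: memoised bottom-up pairing of same-scope units, pointwise products at input units, distributivity at sum units (with smoothness guaranteeing the new sum unit is smooth), compatibility-driven alignment of factors at product units (with decomposability of the aligned, disjoint scopes giving decomposability of the result), and the $\calO(|\phi|\cdot|\phi'|)$ size/time bound obtained by charging each emitted edge to a pair of edges of the factors. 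Your treatment of the omni-compatibility claim is also sound: you invoke the characterisation argued inside the paper's proof of \cref{lem:omni-compatibility-kge-gecks} -- every product unit fully factorising its scope into singletons -- and observe that this form is preserved by your case (c), which is exactly the closure argument the paper relies on when it squares its circuits.

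One point to tighten: your case analysis silently assumes that two units with equal scope always have equal type (input--input, sum--sum, product--product). For arbitrary compatible circuits this need not hold; a sum unit of $\phi$ may share its scope with a product unit or an input unit of $\phi'$. The standard remedy is to give sums priority: whenever either member of a pair is a sum unit, distribute over it regardless of the partner's type, i.e. $(\sum_i \theta_i \phi_{n_i})\cdot \phi'_m = \sum_i \theta_i\,(\phi_{n_i}\cdot\phi'_m)$, so that the matched-type cases only need to cover input--input and product--product. Relatedly, your case (a) needs the (mild) assumption that the pointwise product of two input functions over the same scope is again representable as a single tractable input unit; this holds for the categorical/look-up-table inputs used throughout the paper, but it is an assumption in general. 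Neither issue changes the structure of your argument or the stated bounds.
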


\cref{prop:tractable-product} allows us to compute the partition function and any other marginal probability in \OurModels obtained via squaring efficiently (see \cref{sec:non-monotonic-squaring} and \cref{thm:marginalisation-squared-circuits}).
In addition, \cref{prop:tractable-product} is a crucial theoretical result that allows us to inject logical constraints in \OurModels in a way that enable computing the partition function exactly and efficiently (see \cref{sec:injection-logical-constraints} and \cref{thm:tractable-integration-knowledge}).

\section{From KGE Models to PCs}
\label{app:from-kge-to-pcs}

\subsection{Interpreting Non-negative Embedding Values}
\label{app:interpreting-non-negative}

In \cref{fig:interpreting-non-negative} we interpret the embedding values of \OurModels obtained via non-negative restriction -- \NNegCP, \NNegRESCAL, \NNegTuckER, \NNegComplEx -- (\cref{sec:non-negative-restriction}) as the parameters of unnormalised categorical distributions over entities (elements in $\calE$) or predicates (elements in $\calR$).

\begin{figure}[H]
    \centering
    \begin{tikzpicture}[inner sep=0pt, align=left]
        \node[text width=34pt, text height=6pt, rotate=45, anchor=south west, align=right] (e1)
        {\footnotesize $\ve_\mathsf{loxoprofen}$};
        \node[text width=34pt, text height=6pt, right=20pt of e1.south west, rotate=45, anchor=south west, align=right] (e2)
        {\footnotesize $\ve_\mathsf{COX2}$};
        \node[text width=17pt, text height=6pt, right=32pt of e2.south, rotate=90, anchor=south west, align=center] (e-dotdotdot)
        {$\vdots$};
        \node[text width=34pt, text height=6pt, right=44pt of e2.south west, rotate=45, anchor=south west, align=right] (en)
        {\footnotesize $\ve_\mathsf{phos\text{-}acid}$};
        \node[text width=80pt, text height=8pt, right=20pt of en.north, align=center] (e-caption)
        {\small $\calE$ embeddings};

        \node[text width=16pt, text height=20pt, draw, black, fill=violet, thick, above=4pt of e1.east, anchor=south] (cat1-e1)
        {};
        \node[text width=12pt, text height=4pt, above=3pt of cat1-e1.north, anchor=south] (e1-emb1)
        {\footnotesize $0.5$};
        \node[text width=16pt, text height=4pt, draw, black, fill=violet, thick, right=0pt of cat1-e1.south east, anchor=south west] (cat1-e2)
        {};
        \node[text width=12pt, text height=4pt, above=3pt of cat1-e2.north, anchor=south] (e2-emb1)
        {\footnotesize $0.1$};
        \node[text width=34pt, text height=5pt, above=12pt of e-dotdotdot.east, rotate=90, anchor=west] (e-dotdotdot-emb1)
        {$\vdots$};
        \node[text width=16pt, text height=36pt, draw, black, fill=violet, thick, right=30pt of cat1-e2.south east, anchor=south west] (cat1-en)
        {};
        \node[text width=12pt, text height=4pt, above=3pt of cat1-en.north, anchor=south] (en-emb1)
        {\footnotesize $0.9$};
        \node[text width=96, text height=6pt, right=4pt of cat1-en.east] (cat1-formula-e)
        {\scriptsize $\mathrm{Cat}_1 \left( p_u = e_{u1} / \sum\limits_{u\in\calE} e_{u1} \right)$};

        \node[text width=16pt, text height=12pt, draw, black, fill=sky, thick, above=56pt of e1.east, anchor=south] (cat2-e1)
        {};
        \node[text width=12pt, text height=4pt, above=3pt of cat2-e1.north, anchor=south] (e1-emb2)
        {\footnotesize $0.3$};
        \node[text width=16pt, text height=16pt, draw, black, fill=sky, thick, right=0pt of cat2-e1.south east, anchor=south west] (cat2-e2)
        {};
        \node[text width=12pt, text height=5pt, above=3pt of cat2-e2.north, anchor=south] (e2-emb2)
        {\footnotesize $0.4$};
        \node[text width=10pt, text height=4pt, above=10pt of e-dotdotdot-emb1.east, rotate=90, anchor=west, align=center] (e-dotdotdot-emb2)
        {$\vdots$};
        \node[text width=16pt, text height=16pt, draw, black, fill=sky, thick, right=30pt of cat2-e2.south east, anchor=south west] (cat2-en)
        {};
        \node[text width=12pt, text height=4pt, above=3pt of cat2-en.north, anchor=south] (en-emb2)
        {\footnotesize $0.4$};
        \node[text width=96, text height=6pt, right=4pt of cat2-en.east] (cat2-formula-e)
        {\scriptsize $\mathrm{Cat}_2 \left( p_u = e_{u2} / \sum\limits_{u\in\calE} e_{u2} \right )$};

        \node[text width=34pt, text height=6pt, right=190pt of e1.south west, rotate=45, anchor=south west, align=right] (r1)
        {\footnotesize $\vw_\mathsf{inhibits}$};
        \node[text width=34pt, text height=6pt, right=20pt of r1.south west, rotate=45, anchor=south west, align=right] (r2)
        {\footnotesize $\vw_\mathsf{interacts}$};
        \node[text width=17pt, text height=6pt, right=32pt of r2.south, rotate=90, anchor=south west, align=center] (r-dotdotdot)
        {$\vdots$};
        \node[text width=34pt, text height=6pt, right=44pt of r2.south west, rotate=45, anchor=south west, align=right] (rn)
        {\footnotesize $\vw_\mathsf{reacts}$};
        \node[text width=80pt, text height=8pt, right=20pt of rn.north, align=center] (r-caption)
        {\small $\calR$ embeddings};

        \node[text width=16pt, text height=12pt, draw, black, fill=purple, thick, above=4pt of r1.east, anchor=south] (cat1-r1)
        {};
        \node[text width=12pt, text height=4pt, above=3pt of cat1-r1.north, anchor=south] (r1-emb1)
        {\footnotesize $0.3$};
        \node[text width=16pt, text height=16pt, draw, black, fill=purple, thick, right=0pt of cat1-r1.south east, anchor=south west] (cat1-r2)
        {};
        \node[text width=12pt, text height=4pt, above=3pt of cat1-r2.north, anchor=south] (r2-emb1)
        {\footnotesize $0.4$};
        \node[text width=34pt, text height=5pt, above=12pt of r-dotdotdot.east, rotate=90, anchor=west] (r-dotdotdot-emb1)
        {$\vdots$};
        \node[text width=16pt, text height=16pt, draw, black, fill=purple, thick, right=30pt of cat1-r2.south east, anchor=south west] (cat1-rn)
        {};
        \node[text width=12pt, text height=4pt, above=3pt of cat1-rn.north, anchor=south] (rn-emb1)
        {\footnotesize $0.4$};
        \node[text width=102, text height=6pt, right=4pt of cat1-rn.east] (cat1-formula-r)
        {\scriptsize $\mathrm{Cat}_1 \left( p_r = w_{r1} / \sum\limits_{r\in\calR} w_{r1} \right)$};

        \node[text width=16pt, text height=24pt, draw, black, fill=gold, thick, above=56pt of r1.east, anchor=south] (cat2-r1)
        {};
        \node[text width=12pt, text height=4pt, above=3pt of cat2-r1.north, anchor=south] (r1-emb2)
        {\footnotesize $0.6$};
        \node[text width=16pt, text height=8pt, draw, black, fill=gold, thick, right=0pt of cat2-r1.south east, anchor=south west] (cat2-r2)
        {};
        \node[text width=12pt, text height=4pt, above=3pt of cat2-r2.north, anchor=south] (r2-emb2)
        {\footnotesize $0.2$};
        \node[text width=10pt, text height=5pt, above=10pt of r-dotdotdot-emb1.east, rotate=90, anchor=west, align=center] (r-dotdotdot-emb2)
        {$\vdots$};
        \node[text width=16pt, text height=12pt, draw, black, fill=gold, thick, right=30pt of cat2-r2.south east, anchor=south west] (cat2-rn)
        {};
        \node[text width=12pt, text height=4pt, above=3pt of cat2-rn.north, anchor=south] (en-emb2)
        {\footnotesize $0.3$};
        \node[text width=102, text height=6pt, right=4pt of cat2-rn.east] (cat2-formula-r)
        {\scriptsize $\mathrm{Cat}_2 \left( p_r = w_{r2} / \sum\limits_{r\in\calR} w_{r2} \right)$};
    \end{tikzpicture}
    \caption{\textbf{Non-negative embeddings parameterise categorical distributions.} 2-dimensional embeddings of \OurModels obtained via non-negative restrictions (\cref{sec:non-negative-restriction}) can be seen as the parameters of two categorical distributions over entities (left) or predicates (right) up to renormalisation.}
    \label{fig:interpreting-non-negative}
\end{figure}
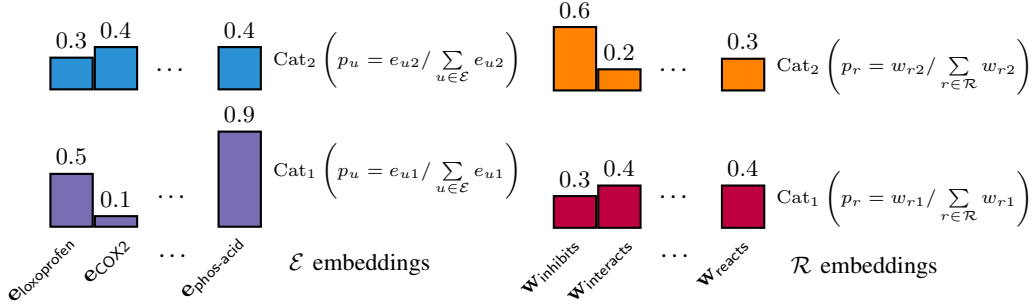

\subsection{Realising the Non-negative Restriction of \ComplEx}
\label{app:realising-monocomplex}

As anticipated in \cref{sec:non-negative-restriction}, for the \ComplEx \citep{trouillon2016complex} score function restricting the real and imaginary parts to be non-negative is not sufficient to obtain a PC due to the presence of a subtraction, as showed in the following equation.
\begin{equation}
    \label{eq:complex-score-function}
    \begin{split}
        \phi_\ComplEx(s,r,o) &= \langle\Re(\ve_s),\Re(\vw_r),\Re(\ve_o)\rangle + \langle\Im(\ve_s),\Re(\vw_r),\Im(\ve_o)\rangle \\
        &+ \langle\Re(\ve_s),\Im(\vw_r),\Im(\ve_o)\rangle - \langle\Im(\ve_s),\Im(\vw_r),\Re(\ve_o)\rangle
    \end{split}
\end{equation}
Here $\ve_s,\vw_r,\ve_o\in\bbC^d$ are the embeddings associated to the subject, predicate and object, respectively.
Under the restriction of embedding values to be non-negative, we ensure that $\phi_\ComplEx(s,r,o)\geq 0$ for any input triple by enforcing the additional constraint
\begin{equation}
    \label{eq:complex-constraint}
    \langle\Re(\ve_s),\Re(\vw_r),\Re(\ve_o)\rangle \geq \langle\Im(\ve_s),\Im(\vw_r),\Re(\ve_o)\rangle,
\end{equation}
which can be simplified into the two distinct inequalities
\begin{equation*}
    \forall u\in\calE\quad \Re(e_{ui})\geq\Im(e_{ui}) \qquad\text{and}\qquad \forall r\in\calR \quad \Re(w_{ri})\geq\Im(w_{ri}).
\end{equation*}
In other words, we want the real part of each embedding value to be always greater or equal than the corresponding imaginary part.
We implement this constraint in practice by reparametrisation of the imaginary part in function of the real part, i.e.,
\begin{align}
    \label{eq:complex-constraint-ent}
    \forall u\in\calE & \quad \Im(e_{ui}) = \Re(e_{ui}) \cdot \sigma(\theta_{ui}) \\
    \label{eq:complex-constraint-rel}
    \forall r\in\calR & \quad \Im(w_{ri}) = \Re(w_{ri}) \cdot \sigma(\gamma_{ri})
\end{align}
where $\sigma(x)=1/(1+\exp(-x))\in[0,1]$ denotes the logistic function and $\theta_{ui},\gamma_{ri}\in\bbR$ are additional parameters associated to entities $u\in\calE$ and predicates $r\in\calR$, respectively.
The reparametrisation of the imaginary parts using \cref{eq:complex-constraint-ent,eq:complex-constraint-rel} is a sufficient condition for the satisfaction of the constraint showed in \cref{eq:complex-constraint}, and also maintains the same number of learnable parameters of \ComplEx.

\subsection{Sampling from \OurModels with Non-negative Parameters}
\label{app:sampling-procedures}

\paragraph{Parameters interpretation.}
Sum units with non-negative parameters in smooth and decomposable PCs can be seen as marginalised discrete latent variables, similarly to the latent variable interpretation in mixture models \citep{poon2011sum,peharz2017latent-spn}.
That is, the non-negative parameters of a sum unit are the parameters of a (possibly unnormalised) categorical distribution over assignments to a latent variable.
For \NNegCP and \NNegRESCAL (\cref{sec:non-negative-restriction}), the non-negative parameters of the sum unit encode a uniform and unnormalised categorical distribution, as they are all fixed to $1$ (see \cref{fig:kge-circuits}).
By contrast, in \NNegTuckER these parameters are the vectorisation of the core tensor $\calT$ (see the proof of \cref{prop:kge-models-as-circuits}), and hence they are learned.
The input units of \NNegCP, \NNegRESCAL and \NNegTuckER can be interpreted as unnormalised categorical distribution over entities or predicates, as detailed in \cref{app:interpreting-non-negative}.

\shortparagraph{Sampling from \NNegCP, \NNegComplEx, \NNegTuckER.}
Thanks to the latent variable interpretation,
ancestral sampling in \NNegCP, \NNegRESCAL and \NNegTuckER can be performed by
(1) sampling an assignment to the latent variable associated to the single sum unit, i.e., one of its input branches,
(2) selecting the corresponding combination of subject-predicate-object input units, and
(3) sampling a subject, predicate and object respectively from each of the indexed unnormalized categorical distributions.

\subsection{Learning Complexity of \OurModels}
\label{app:inference-complexity}

In \cref{tab:summary-complexity} we summarise the complexities of computing the PLL and MLE objectives (\cref{eq:pll-objective,eq:mle-objective}) for KGE models and \OurModels.
Asymptotically, \OurModels manifest better time and space complexities with respect to the number of entities $|\calE|$, batch size $|B|$ and embedding size.
This makes \OurModels more efficient than traditional KGE models during training, both in time and memory (see \cref{sec:training-efficiency} and \cref{fig:pll-scaling}).

\begin{table}[H]
    \tablesize
    \centering
    \caption{\textbf{Summary of complexities for exactly computing the PLL and MLE objectives.} Time and space complexity of computing $\log p(o\mid s,r)$ and the partition function $Z$. These complexities are respectively lower bounds of the complexities of computing the PLL and MLE objectives, as we have that $|\calE|\gg |\calR|$ for large real-world KGs. For \CP, \RESCAL and \ComplEx and \OurModels derived from them, $d$ denotes the size of both entity and predicate embeddings. For \TuckER and \OurModels derived from it, $d_e$ and $d_r$ denote the embedding sizes for entities and predicates, respectively.\\
    }
    \label{tab:summary-complexity}
    \setlength{\tabcolsep}{4pt}
    \begin{tabular}{lllll}
    \toprule
    \multirow{2}{*}{\textbf{Model}} & \multicolumn{2}{c}{\textbf{Complexity of} $\log p(o\mid s,r)$} & \multicolumn{2}{c}{\textbf{Complexity of} $Z$} \\
    \cmidrule(lr){2-3}
    \cmidrule(lr){4-5}
    & \multicolumn{1}{c}{\textbf{Time}} & \multicolumn{1}{c}{\textbf{Space}} & \multicolumn{1}{c}{\textbf{Time}} & \multicolumn{1}{c}{\textbf{Space}} \\
    \cmidrule(lr){2-2}\cmidrule(lr){3-3}
    \cmidrule(lr){4-4}\cmidrule(lr){5-5}
    \CP           & $\calO(|\calE|\cdot |B|\cdot d)$   & $\calO(|\calE|\cdot |B|)$ & $\calO(|\calE|^2\cdot |\calR|\cdot d)$   & $\calO(d)$ \\
    \RESCAL       & $\calO(|\calE|\cdot |B| \cdot d + |B|\cdot d^2)$   & $\calO(|\calE|\cdot |B|)$ & $\calO(|\calE|^2\cdot |\calR|\cdot d^2)$ & $\calO(d^2)$ \\
    \TuckER       & $\calO(|\calE|\cdot |B|\cdot d_e + |B|\cdot d_e^2\cdot d_r)$ & $\calO(|\calE|\cdot |B|)$ & $\calO(|\calE|^2\cdot |\calR|\cdot d_e^2\cdot d_r)$ & $\calO(d_e^2\cdot d_r)$ \\
    \ComplEx      & $\calO(|\calE|\cdot |B|\cdot d)$   & $\calO(|\calE|\cdot |B|)$ & $\calO(|\calE|^2\cdot |\calR|\cdot d)$   & $\calO(d)$ \\
    \midrule
    \NNegCP       & $\calO((|\calE| + |B|)\cdot d)$ & $\calO(|B|\cdot d)$ & $\calO((|\calE| + |\calR|)\cdot d)$ & $\calO(d)$ \\
    \NNegRESCAL   & $\calO((|\calE| + |B|\cdot d)\cdot d)$ & $\calO(|B|\cdot d^2)$ & $\calO((|\calE| + |\calR|\cdot d) \cdot d)$ & $\calO(d^2)$ \\
    \NNegTuckER   & $\calO((|\calE| + |B|\cdot d_e\cdot d_r)\cdot d_e)$ & $\calO(|B|\cdot d_e\cdot d_r)$ & $\calO(|\calE|\cdot d_e + |\calR|\cdot d_r + d_e^2\cdot d_r)$ & $\calO(d_e^2\cdot d_r)$ \\
    \NNegComplEx  & $\calO((|\calE| + |B|)\cdot d)$    & $\calO(|B|\cdot d)$   & $\calO((|\calE| + |\calR|)\cdot d)$ & $\calO(d)$ \\
    \midrule
    \SquaredCP       & $\calO((|\calE| + |B|)\cdot d^2)$ & $\calO(|B|\cdot d)$ & $\calO((|\calE| + |\calR|)\cdot d^2)$ & $\calO(d^2)$ \\
    \SquaredRESCAL   & $\calO((|\calE| + |B|)\cdot d^2)$ & $\calO(|B|\cdot d^2)$ & $\calO((|\calE| + |\calR|\cdot d)\cdot d^2)$ & $\calO(|\calR|\cdot d^2)$ \\
    \SquaredTuckER   & $\calO((|\calE| + |B|\cdot d_r)\cdot d_e^2)$ & $\calO(|B|\cdot d_e\cdot d_r)$ & $\calO(|\calE|\cdot d_e^2 + |\calR|\cdot d_r^2 + d_e^2\cdot d_r)$ & $\calO(d_e^2\cdot d_r)$ \\
    \SquaredComplEx  & $\calO((|\calE| + |B|)\cdot d^2)$ & $\calO(|B|\cdot d)$ & $\calO((|\calE| + |\calR|)\cdot d^2)$ & $\calO(d^2)$ \\
    \bottomrule
    \end{tabular}
\end{table}

\subsubsection{Computing the Partition Function}
\label{app:partition-function-squared-circuits}

In this section we derive the computational complexity of computing the partition function for \OurModels obtained via squaring (\cref{sec:non-monotonic-squaring}).
For a summary of these complexities, see \cref{tab:summary-complexity}.

\shortparagraph{\SquaredCP and \SquaredComplEx.}
Here we derive the partition function of \SquaredCP.
For \SquaredComplEx the derivation is similar, as the score function of \ComplEx can be written in terms of trilinear products just like \CP (see \cref{eq:complex-score-function}).
The score function $\phi_\SquaredCP$ encoded by \SquaredCP can be written as
\begin{equation*}
    \phi_\SquaredCP(s,r,o) = \langle\ve_s,\vw_r,\ve_o \rangle^2 = \sum_{i=1}^d \sum_{j=1}^d e_{si}e_{sj}w_{ri}w_{rj}e_{oi}e_{oj}
\end{equation*}
where $\ve_s,\ve_o\in\bbR^d$ (resp. $\vw_r\in\bbR^d$) are rows of the matrices $\vU,\vV\in\bbR^{|\calE|\times d}$ (resp. $\vW\in\bbR^{|\calR|\times d}$), which associate to each entity (resp. predicate) a vector.
By leveraging the \href{https://en.wikipedia.org/wiki/Einstein_notation}{\emph{einsum notation}} for brevity, 
the partition function of $\phi_\SquaredCP$ can be written as
\begin{align*}
    Z &= \sum_{s\in\calE}\sum_{r\in\calR}\sum_{o\in\calE} \phi_\SquaredCP(s,r,o) = \sum_{i=1}^d \sum_{j=1}^d \left( \sum_{s\in\calE} e_{si}e_{sj} \right) \left( \sum_{r\in\calR} w_{ri}w_{rj} \right) \left( \sum_{o\in\calE} e_{oi}e_{oj} \right) \\
    &= \vU'_{ij} \vW'_{ij} \vV'_{ij}
\end{align*}
where $\vU' = \vU^\top\vU$, $\vW' = \vW^\top\vW$ and $\vV' = \vV^\top\vV$ are $d\times d$ matrices.
With the simplest algorithm for matrix multiplication, we recover that computing $Z$ requires time $\calO(|\calE|\cdot d^2 + |\calR|\cdot d^2)$ and additional space $\calO(d^2)$.

\shortparagraph{\SquaredRESCAL.}
The score function $\phi_\SquaredRESCAL$ encoded by \SquaredRESCAL can be written as
\begin{equation*}
    \phi_\SquaredRESCAL(s,r,o) = \left( \ve_s^\top\vW_r\ve_o \right)^2 = \!\!\!\! \sum_{(i,j,k,l)\in [d]^4} \!\!\!\! e_{si}e_{sk}w_{rij}w_{rkl}e_{oj}e_{ol}
\end{equation*}
where $[d]$ denotes the set $\{1,\ldots,d\}$, $\ve_s,\ve_o\in\bbR^d$ are rows of the matrix $\vE\in\bbR^{|\calE|\times d}$ and $\vW_r\in\bbR^{d\times d}$ are slices along the first mode of the tensor $\calW\in\bbR^{|\calR|\times d\times d}$, which consists of stacked matrix embeddings associated to predicates.
The partition function of $\phi_\SquaredRESCAL$ can be written as
\begin{align*}
    Z &= \sum_{s\in\calE}\sum_{r\in\calR}\sum_{o\in\calE} \phi_\SquaredRESCAL(s,r,o) = \!\!\!\! \sum_{(i,j,k,l)\in [d]^4} \!\! \left( \sum_{s\in\calE} e_{si}e_{sk} \right) \left( \sum_{r\in\calR} w_{rij}w_{rkl} \right) \left( \sum_{o\in\calE} e_{oj}e_{ol} \right) \\
    &= \vE'_{ik} \calW_{rij} \calW_{rkl} \vE'_{jl} \addtocounter{equation}{1}\tag{\theequation} \label{eq:pf-sqrescal-einsum}
\end{align*}
where $\vE' = \vE^\top \vE \in \bbR^{d\times d}$.
The complexity of computing $Z$ depends on the order of tensor contractions in the einsum operation showed in \cref{eq:pf-sqrescal-einsum}.
By optimising the order of tensor contractions (e.g., by using software libraries like \href{https://optimized-einsum.readthedocs.io/en/stable/}{opt\_einsum}),
we retrieve that computing $Z$ requires time $\calO(|\calE|\cdot d^2 + |\calR|\cdot d^3)$ and additional space $\calO(|\calR|\cdot d^2)$.
Notice that the time complexity here is slightly lower than the theoretical upper bound given in \cref{thm:marginalisation-squared-circuits}, which would be $\calO(|\calE|\cdot d^2 + |\calR|\cdot d^4)$.

\shortparagraph{\SquaredTuckER.}
Lastly, we present the derivation of the partition function for \SquaredTuckER.
The score function $\phi_\SquaredTuckER$ encoded by \SquaredTuckER can be written as
\begin{align*}
    \phi_\SquaredTuckER(s,r,o) &= \left( \calT \times_1 \ve_s \times_2 \vw_r \times_3 \ve_o \right)^2 \\
    &= \!\!\!\!\sum_{\substack{(i,j,k)\in\\ [d_e]\times [d_r]\times [d_e]}} \sum_{\substack{(l,m,n)\in\\ [d_e]\times [d_r]\times [d_e]}}\!\!\!\! \tau_{ijk}\tau_{lmn} e_{si}e_{sl} w_{rj}w_{rm} e_{ok}e_{on}
\end{align*}
where $\ve_s,\ve_o\in\bbR^{d_e}$ are rows of the matrix $\vE\in\bbR^{|\calE|\times d_e}$, $\vw_r$ is a row of the matrix $\vW\in\bbR^{|\calR|\times d_r}$, and $\calT\in\bbR^{d_e\times d_r\times d_e}$ denotes the core tensor.
The partition function of $\phi_\SquaredTuckER$ can be written as
\begin{align*}
    Z &= \sum_{s\in\calE} \sum_{r\in\calR} \sum_{o\in\calE} \phi_\SquaredTuckER(s,r,o) \\
    &= \!\!\!\!\sum_{\substack{(i,j,k)\in\\ [d_e]\times [d_r]\times [d_e]}} \sum_{\substack{(l,m,n)\in\\ [d_e]\times [d_r]\times [d_e]}}\!\!\!\! \tau_{ijk}\tau_{lmn} \left( \sum_{s\in\calE} e_{si}e_{sl} \right) \left( \sum_{r\in\calR} w_{rj}w_{rm} \right) \left( \sum_{o\in\calE} e_{ok}e_{on} \right) \\
    &= \calT_{ijk} \calT_{lmn} \vE'_{il} \vW'_{jm} \vE'_{kn} \addtocounter{equation}{1}\tag{\theequation} \label{eq:pf-sqtucker-einsum}
\end{align*}
where $\vE' = \vE^\top \vE \in \bbR^{d_e\times d_e}$ and $\vW' = \vW^\top \vW \in \bbR^{d_r\times d_r}$.
Similarly to \SquaredRESCAL, %
by optimising the order of tensor contractions in the einsum operation showed in \cref{eq:pf-sqtucker-einsum},
we retrieve that computing $Z$ requires time $\calO(|\calE|\cdot d_e^2 + |\calR|\cdot d_r^2 + d_e^2\cdot d_r)$ and additional space $\calO(d_e^2\cdot d_r)$.
Similarly to \SquaredRESCAL, the time complexity is lower than the theoretical upper bound given in \cref{thm:marginalisation-squared-circuits}, which would be $\calO(|\calE|\cdot d_e^2 + |\calR|\cdot d_r^2 + d_e^4\cdot d_r^2)$.

\subsubsection{Complexity of Computing the PLL Objective}
\label{app:complexity-pll-objective}

In this section, we show that \OurModels enable to better scale the computation of the PLL objective (\cref{eq:pll-objective}) with respect to energy-based KGE models (see \cref{sec:background}).
We present this concept for $\CP$ and \OurModels derived from it (\cref{sec:casting-kge-models-to-pcs}), as for the other score functions it is similar.

\shortparagraph{Complexity of the PLL objective on \CP.}
Let $\phi_\CP(s,r,o) = \langle \ve_s,\vw_r,\ve_o \rangle = \sum_{i=1}^d e_{si}w_{ri}e_{oi}$ be the score function of \CP \citep{lacroix2018cp-kbc},
where $\ve_s,\ve_o\in\bbR^d$ (resp. $\vw_r\in\bbR^d$) are rows of the matrices $\vU,\vV\in\bbR^{|\calE|\times d}$ (resp. $\vW\in\bbR^{|\calR|\times d}$), which associate to each entity (resp. predicate) a vector.
Given a training triple $(s,r,o)$, the computation of the term $\log p(o\mid s,r) = \phi(s,r,o) - \log \sum_{o'\in\calE} \exp \phi(s,r,o')$ requires evaluating $\phi_\CP(s,r,o')$ for all objects $o'\in\calE$.
In order to fully exploit GPU parallelism \citep{jain2020knowledge},
this is usually done with the matrix-vector multiplication $\vV(\ve_s\odot \vw_r) \in \bbR^{|\calE|}$, where $\odot$ denotes the Hadamard product \citep{lacroix2018cp-kbc,chen2021relational-auxiliary-objective}.
Therefore, computing $\log p(o\mid s,r)$ for each triple $(s,r,o)$ in a mini-batch $B\subset\calE\times\calR\times\calE$ such that $|\calE|\gg|\calB|$ requires time $\calO(|\calE|\cdot |B|\cdot d)$ and space $\calO(|\calE|\cdot |B|)$.
For the other terms of the PLL objective (i.e., $\log p(s\mid r,o)$ and $\log p(r\mid s,o)$) the derivation is similar.
Moreover, for real-world large KGs it is reasonable to assume that $|\calE|\gg|\calR|$ and therefore the cost of computing $\log p(r\mid s,o)$ is negligible.

\shortparagraph{Complexity of the PLL objective on \OurModels.}
\OurModels obtained from \CP either by non-negative restriction (\cref{sec:non-negative-restriction}) or by squaring (\cref{sec:non-monotonic-squaring}) encode $\phi_\mathsf{pc}(s,r,o) \propto p(s,r,o)$ for any input triple.
As such, the component $\log p(o\mid s,r)$ of the PLL objective can be written as
\begin{equation}
    \label{eq:pll-for-pcs}
    \log p(o\mid s,r) = \log \phi_\mathsf{pc}(s,r,o) - \log \sum_{o'\in\calE} \phi_\mathsf{pc}(s,r,o').
\end{equation}
The absence of the exponential function in the summed terms in \cref{eq:pll-for-pcs} allows us to push the outer summation inside the circuit computing $\phi_\mathsf{pc}(s,r,o)$, and to sum over the input units relative to objects.
For instance, for \NNegCP we can write
\begin{align*}
    \sum_{o'\in\calE} \phi_\NNegCP(s,r,o') &= \sum_{o'\in\calE} \sum_{i=1}^d e_{si} w_{ri} e_{oi} = \sum_{i=1}^d e_{si} w_{ri}  \left( \sum_{o\in\calE} e_{o'i} \right) = (\ve_s \odot \vw_r)^\top (\vV^\top \mathbf{1}_\calE)
\end{align*}
where $\ve_s,\vw_r,\ve_o \in \bbR^d_+$, $\vV\in\bbR^{|\calE|\times d}_+$ denotes the matrix whose rows are object embeddings, and $\mathbf{1}_\calE = [1\ldots 1]^{|\calE|}$ is a vector of ones.
Note that $\vV^\top \mathbf{1}_\calE \in\bbR^d_+$ does not depend on the input triple.
Therefore, given a mini-batch of triples $B$, computing $\log p(o\mid s,r)$ requires time $\calO((|\calE|+|B|) \cdot d)$ and space $\calO(|B|\cdot d)$, which is much lower than the complexity on \CP showed above, and we can still leverage GPU parallelism.
For \SquaredCP, the complexity is similar to the derivation of the partition function complexity showed in \cref{app:partition-function-squared-circuits}.
That is, for \SquaredCP we can write
\begin{align*}
    \sum_{o'\in\calE} \phi_\SquaredCP(s,r,o') &= \sum_{o'\in\calE} \left( \sum_{i=1}^d e_{si} w_{ri} e_{oi} \right)^2 = \sum_{i=1}^d \sum_{j=1}^d e_{si}e_{sj} w_{ri}w_{rj} \left( \sum_{o'\in\calE} e_{o'i}e_{o'j} \right) \\
    &= (\ve_s \odot \vw_r)^\top (\vV^\top\vV) (\ve_s \odot \vw_r)
\end{align*}
where $\ve_s,\vw_r,\ve_o \in \bbR^d$, $\vV\in\bbR^{|\calE|\times d}$.
Note that the matrix $\vV^\top\vV\in\bbR^{d\times d}$ does not depend on the input triple.
Therefore, given a mini-batch of triples $B$, computing $\log p(o\mid s,r)$ requires time $\calO((|\calE|+|B|) \cdot d^2)$ and space $\calO(|B|\cdot d)$.
While the time complexity is quadratic in the embedding size $d$, it is still much lower than the time complexity on \CP.
A similar discussion can also be carried out for the other KGE models and the corresponding \OurModels, which retrieves the complexities showed in \cref{tab:summary-complexity}.

\subsubsection{Training Speed-up Benchmark Details}
\label{app:training-benchmark-details}

In this section we report the details about the training benchmark on \ComplEx, \NNegComplEx and \SquaredComplEx, whose results are showed in \cref{fig:pll-scaling}.
We measure time and peak GPU memory usage required for computing the PLL objective (\cref{eq:pll-objective}) \emph{and} to do an optimisation step for a single batch on \textsf{ogbl-wikikg2} \citep{hu2020ogb}, a large knowledge graph with millions of entities (see \cref{tab:datasets}).
We fix the embedding size to $d=100$ for the three models.
For the benchmark with increasing batch size, we keep all the entities and increase the batch size from $100$ to $5000$.
For the benchmark with increasing number of entities, we keep the batch size fixed to $500$ (the maximum allowed for \ComplEx by our GPUs) and progressively increase the number of entities, from about $3\cdot 10^5$ to $2.5\cdot 10^6$.
We report the average time over 25 independent runs on a single Nvidia RTX A6000 with 48 GiB of memory.

\section{Distribution of Scores}
\label{app:distribution-scores}

\begin{figure}[H]
    \centering
    \includegraphics{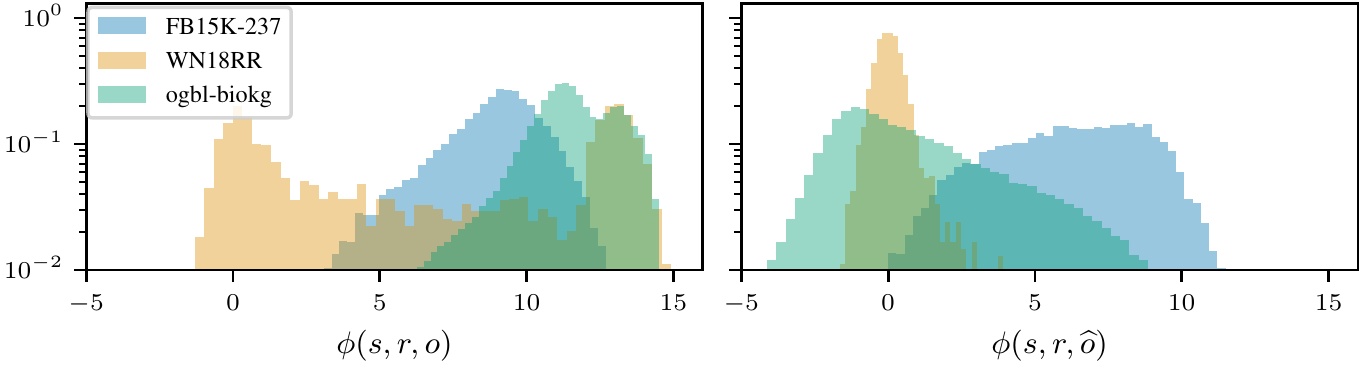}
    \caption{\textbf{Scores are mostly non-negative.} Histograms of the scores assigned by \ComplEx to existing validation triples (left) and their perturbation (right) on three data sets. The vast majority of triple scores are non-negative, suggesting that squaring them has minimal effect on the rankings.}
    \label{fig:hist-scores}
\end{figure}

In \cref{fig:hist-scores} we show the histograms of the scores assigned to the validation triples and their perturbations of three data sets (see \cref{app:datasets-statistics}).
Following \citet{socher2013reasoning}, we generate triple perturbations that are challenging for link prediction.
That is, for each validation triple $(s,r,o)$, the corresponding perturbation $(s,r,\widehat{o})$ is obtained by replacing the object with a random entity that has appeared at least once as an object in a training triple with predicate $r$.
The bottom line is that scores are mostly non-negative, and hence can be used as a heuristic to effectively initialise \OurModels or quickly distil them (e.g., on \textsf{FB15K-237}),
as we further discuss in \cref{app:additional-link-prediction}.

\section{Reconciling Knowledge Graph Embeddings Interpretations}
\label{app:interpreting-density-triples}

\paragraph{Triples as boolean variables.}
KGE models such as \CP, \RESCAL, \TuckER and \ComplEx have been historically introduced as factorizations of a tensor-representation of a KG, which we discuss next.
In fact, a KG $\calG$ can be represented as a 3-way binary tensor $\vY\in\{0,1\}^{|\calE|\times|\calR|\times|\calE|}$ in which every entry  $Y_{sro}$ is 1 if $(s, r, o)\in\calG$ and 0 otherwise \citep{nickel2016kgreview}.
Under this light, a KGE model like \RESCAL factorises every slice $\vY_r\in\{0,1\}^{|\calE|\times|\calE|}$, corresponding to the predicate $r$ as $\vE\vW_r\vE^{\top}$ where 
$\vE$ is an $|\calE|\times d$ matrix comprising the entity embeddings and $\vW_r$ is an $d\times d$ matrix containing the embeddings for the predicate $r$.
To deal with uncertainty and incomplete KGs, $Y_{sro}$ can be interpreted as a Bernoulli random variable.
As such, its distribution becomes $p(Y_{sro}=1\mid s,r,o)$ which is usually modelled as $\sigma(\phi(s,r,o))$, where $\sigma$ denotes the logistic function and $\phi$ is the score function of a KGE model.
Note that this distribution of triple introduces $|\calE\times\calR\times\calE|$ random variables, one for each possible triple.

\shortparagraph{KGE models as estimators of a distribution over KGs.}
At the same time,
the interpretation of triples as boolean variables
\textit{induces a distribution over possible KGs}, $q(\calG)$, which is the distribution over all possible binary tensors $p(\vY)$.
The probability of a KG $\calG$ can therefore be computed as the product of the likelihoods of all variables $Y_{sro}$, i.e., $q(\calG) = \prod_{(s,r,o)\in\calG} p(Y_{sro}=1\mid s,r,o) \cdot \prod_{(s,r,o)\not\in\calG} p(Y_{sro} = 0\mid s,r,o)$.
Note that (re-)normalising this distribution is intractable in general, as it would require summing over all possible $2^{|\calE\times\calR\times\calE|}$ binary tensors.
This is why historically KGE models have been interpreted as energy-based models, by directly optimising for $\phi(s,r,o)$, interpreted as the negated energy associated to every triple, and not $p(Y_{sro}=1\mid s,r,o)$ (see \cref{sec:background}).
This has been done via negative sampling or other contrastive learning objectives~\citep{bordes2011se,bordes2013transe}.
We point out that this very same interpretation can be found in the literature of probabilistic logic programming~\citep{fierens2015inference}, probabilistic databases (PDBs)~\citep{dalvi2007efficient} and statistical relational learning (see \cref{sec:related-works})
where the distribution over possible ``worlds'' is over sets of boolean assignments to ground atoms or facts, or tuples in a PDB, each interpreted as Bernoulli random variables.

\shortparagraph{Estimating a distribution over triples.}
In this work, instead, we interpret existing KGE models and our \OurModels as models that encode a possibly unnormalised probability distribution over three random variables, $S,R,O$, which induces a distribution over triples that is tractable to renormalise.\footnote{The polynomial cost of renormalising an energy-based KGE is unfortunately infeasible for real-world KGs, see \cref{sec:background}.}
To reconcile these two perspectives, we interpret the probability of a triple $p(s,r,o)$ to be proportional to the probability of all KGs $\calG$ where $(s,r,o)$ holds, i.e., those $\calG$ such that $(s,r,o)\in\calG$.
Intuitively, a triple will be more probable to exist if it does exist in highly probable KGs.
More formally, given $q$ a probability distribution over KGs, we define $p$ as an unnormalised probability distribution over triples, i.e., $\mu(s,r,o)\propto p(s,r,o)$, where
\begin{equation}
    \label{eq:density-triples}
    \mu(s,r,o) = \!\!\! \sum_{\substack{\calG\in\calH\\ (s,r,o)\in\calG}} \!\!\! q(\calG) = \sum_{\calG\in\calH} q(\calG) \cdot \Ind{(s,r,o)\in\calG} = \bbE_{\calG\sim q} [\Ind{(s,r,o)\in\calG}]
\end{equation}
and $\calH=2^{\calE\times\calR\times\calE}$ denotes the set of all possible KGs.
Computing the expectation in \cref{eq:density-triples} exactly is equivalent to  solving a \emph{weighted model counting} (WMC) problem~\citep{chavira2008wmc}, where we sum the probabilities of all possible KGs containing $(s,r,o)$.
Alternatively, it is equivalent to computing the probability of the simplest possible query in a PDB (i.e., asking for a single tuple), where each stored tuple is interpreted as an independent Bernoulli random variable $Y_{sro}$.
Therefore, we have that $\mu(s,r,o)$ is simply the likelihood that $Y_{sro}$ is true, i.e., $p(Y_{sro}=1\mid s,r,o)$.
Furthermore, the normalisation constant of $\mu$ (\cref{eq:density-triples}) can be written as
\begin{equation*}
    Z = \!\! \sum_{\substack{(s,r,o)\in\\ \calE\times\calR\times\calE}} \mu(s,r,o) = \sum_{\calG\in\calH} q(\calG) \cdot \!\!\!\! \sum_{\substack{(s,r,o)\in\\ \calE\times\calR\times\calE}} \!\! \Ind{(s,r,o)\in\calG} = \sum_{\calG\in\calH} q(\calG) \cdot |\calG| = \bbE_{\calG\sim q} [|\calG|]
\end{equation*}
which is the expected size of a KG according to the probability distribution $q$.
Written in this way, however, computing $Z$ through $q(\calG)$ is intractable.
For this reason, we directly encode $\mu$ with \OurModels and compute $Z$ by summing over all triples, and therefore without modelling $q(\calG)$.

\shortparagraph{Further interpretations in related works.}
Under the interpretation of a KG as a PDB, 
\citet{friedman2020symbolic-pdb} further decompose the likelihood that $Y_{sro}$ is true as
\begin{equation*}
    p(Y_{sro}=1\mid s,r,o) = p(E_s=1\mid s)\cdot p(T_r=1\mid r)\cdot p(E_o=1\mid o)
\end{equation*}
where $E_s,E_o,T_r$ are new Bernoulli variables that are assumed to be conditionally independent given the parameters of the PDB.
That is, instead of introducing one random variable per triple, they introduce one random variable per entity and predicate.
In this framework, they reinterpret the score function of \DistMult, a simplified variant of \CP, as an implicit circuit that models an unnormalized distribution over the collection of variables $\vZ=\{E_u\}_{u\in\calE}\cup\{T_r\}_{r\in\calR}$, trained by negative sampling.
This decomposition permits to compute the probability of any database query efficiently, which otherwise is known to be either a PTIME or a \#P-hard problem, depending on the query type \citep{dalvi2012dichotomy-ucq}.
If we were to interpret our distribution $\mu(S=s, R=r, O=o)$ as the unnormalized marginal distribution $p(E_s=1, T_r=1, E_o=1)=\sum_{\vz^{\prime}}p(E_s=1, T_r=1, E_o=1, \vZ^{\prime}=\vz^{\prime})$, where $\vZ^{\prime}=\vZ\setminus\{E_s, T_r, E_o\}$, we could equivalently compute any probabilistic query efficiently. 
Note that under this interpretation, training our \OurModels by MLE over $S, R, O$ would be equivalent to maximise a composite marginal log-likelihood \citep{varin2011composite} over $\vZ$.

\section{Empirical Evaluation}
\label{app:empirical-evaluation}

\subsection{Datasets Statistics}
\label{app:datasets-statistics}

\cref{tab:datasets} shows statistics of commonly-used datasets to benchmark KGE models for link prediction.
We employ standard benchmark datasets \citep{toutanova2015observed,dettmers2018conv2d-kge,hu2020ogb} whose number of entities (resp. predicates) ranges from $\approx$14k to $\approx$2.5M (resp. from 11 to $\approx$500).

\begin{table}[H]
    \caption{\textbf{Dataset statistics.} Statistics of multi-relational knowledge graphs: number of entities ($|\calE|$), number of predicates ($|\calR|$), number of training/validation/test triples.\\}
    \label{tab:datasets}
    \centering
    \tablesize
    \begin{tabular}{l@{\ }crrrrr}
        \toprule
        \textbf{Dataset} & & $|\calE|$ & $|\calR|$ & \textbf{\# Train} & \textbf{\# Valid} & \textbf{\# Test} \\
        \midrule
        \textsf{\tablesize FB15k-237}    & \citep{toutanova2015observed}  &           14,541 & 237 & 272,115              & 17,535              & 20,466 \\
        \textsf{\tablesize WN18RR}       & \citep{dettmers2018conv2d-kge} &           40,943 &  11 &  86,835              &  3,034              &  3,134 \\
        \textsf{\tablesize ogbl-biokg}   & \citep{hu2020ogb}              &           93,773 &  51 &   4,763 $\cdot 10^3$ &    163 $\cdot 10^3$ &    163 $\cdot 10^3$ \\
        \textsf{\tablesize ogbl-wikikg2} & \cite{hu2020ogb}               & 2.5 $\cdot 10^6$ & 535 &  16,109 $\cdot 10^3$ &    429 $\cdot 10^3$ &    598 $\cdot 10^3$ \\
        \bottomrule
    \end{tabular}
\end{table}

\subsection{Metrics}
\label{app:metrics}

\paragraph{Mean reciprocal rank and hits at $k$.}
Given a test triple $(s,r,o)$, we rank the possible object $o'$ (resp. subject $s'$) completions to link prediction queries $(s,r,?)$ (resp. $(?,r,o)$) based on their scores in descending order.
The position of the test triple $(s,r,o)$ in the ranking of object (resp. subject) completed queries $(s,r,o')$ (resp. $(s',r,o)$) is then used to compute the \emph{mean reciprocal rank} (MRR)
\begin{equation*}
    \operatorname{MRR} = \frac{1}{2|\calG_\text{test}|} \sum_{(s,r,o)\in\calG_\text{test}} \left(\frac{1}{\operatorname{rank}(o\mid s,r)} + \frac{1}{\operatorname{rank}(s\mid r,o)} \right)
\end{equation*}
where $\calG_\text{test}$ denotes the set of test triples, and $\operatorname{rank}(o\mid s,r),\operatorname{rank}(s\mid r,o)$ denote respectively the positions of the true completion $(s,r,o)$ in the rankings of object and subject completed queries.
The fraction of \emph{hits at $k$} (Hits@$k$) for $k > 0$ is computed as
\begin{equation*}
    \operatorname{Hits@}k = \frac{1}{2|\calG_\text{test}|} \sum_{(s,r,o)\in\calG_\text{test}} \left( \Ind{\operatorname{rank}(o\mid s,r) \leq k} + \Ind{\operatorname{rank}(s\mid r,o) \leq k} \right).
\end{equation*}
Consistently with existing works on link prediction \citep{ruffinelli2020teach-dog-tricks,chen2021relational-auxiliary-objective}, the MRRs and Hits@$k$ metrics are computed under the \emph{filtered} setting, i.e., we rank true completed triples against potential ones that do not appear in the union of training, validation and test splits.

\shortparagraph{Semantic consistency score.}
Let $K$ be a logical constraint encoding some background knowledge over variables $S$, $R$ and $O$.
Given a test triple $(s,r,o)$, we first rank the possible completions to link prediction queries in the same way as for computing the MRR.
Then, the \emph{semantic consistency score} (Sem@$k$) \citep{hubert2022new-strategies} for some integer $k>0$ is computed as
\begin{equation*}
    \mathrm{Sem@}k = \frac{1}{2k|\calG_\text{test}|} \sum_{(s,r,o)\in\calG_\text{test}} \left( \sum_{o'\in\calA^k_O(s,r,o)} \!\! \Ind{(s,r,o')\models K} + \!\!\! \sum_{s'\in\calA^k_S(s,r,o)} \!\! \Ind{(s',r,o)\models K} \right)
\end{equation*}
where $\calG_\text{test}$ denotes the set of test triples, $\calA^k_O(s,r,o)$ (resp. $\calA^k_S(s,r,o)$) denotes the list of the top-$k$ candidate object (resp. subject) completions to the link prediction query $(s,r,?)$ (resp. $(?,r,o)$), and $(s,r,o)\models K$ if and only if $(s,r,o)$ satisfies $K$. %

\subsection{Empirical KTD Score}
\label{app:empirical-ktd-score}

Let $\calF=\{x_i\}_{i=1}^m$, $\calG=\{y_j\}_{j=1}^n$ two sets of triples that are drawn i.i.d. from two distributions $\bbP,\bbQ$ over triples.
We compute the empirical KTD score with an \emph{unbiased estimator} \citep{gretton2012mmd} $\operatorname{KTD}_u(\calF,\calG)$ as
\begin{equation*}
    \frac{1}{m(m-1)} \sum_{i\neq j}^m k(\psi(x_i),\psi(x_j)) + \frac{1}{n(n-1)} \sum_{i\neq j}^n k(\psi(y_i),\psi(y_j)) - \frac{2}{mn} \sum_{i=1}^m \sum_{j=1}^n k(\psi(x_i),\psi(y_j)).
\end{equation*}
For each data set, we compute the empirical KTD score between $n=25,000$ triples sampled from \OurModels and $m$ test triples.
In case of $m>n$, we sample $n$ triples randomly, uniformly and without replacement from the set of test triples.
The time complexity of computing the KTD score is $\calO(nmh)$, where $h$ denotes the size of triple latent representations ($h=4000$ in our case, see \cref{sec:empirical-evaluation-sampling}).
For efficiency reasons, we therefore follow \citet{binkowski2018demystifying-mmd} and randomly extract two batches of 1000 triples each from both the generated and the test triples sets and compute the empirical KTD score on them.
We repeat this process 100 times and report the average and standard deviation in \cref{tab:ktd-samples-extended}.

\subsection{Experimental Setting}
\label{app:experimental-setting}

\paragraph{Hyperparameters.}
All models are trained by gradient descent with either the PLL or the MLE objective (\cref{eq:pll-objective,eq:mle-objective}).
We set the weights $\omega_s,\omega_r,\omega_o$ of the PLL objective all to one, as to retrieve a classical pseudo-log-likelihood \citep{varin2011composite}.
Note that \citet{chen2021relational-auxiliary-objective} set $\omega_s,\omega_o$ to one and treat $\omega_r$ as an additional hyperparameter instead that is opportunely tuned.
The models are trained until the MRR computed on the validation set does not improve after three consecutive epochs.
We fix the embedding size $d=1000$ for both \CP and \ComplEx and use Adam \citep{kingma2014adam} as optimiser with $10^{-3}$ as learning rate.
An exception is made for \OurModels obtained via non-negative restriction (\cref{sec:non-negative-restriction}), for which a learning rate of $10^{-2}$ is needed, as we observed very slow convergence rates.
We search for the batch size in $\{5\cdot 10^2, 10^3, 2\cdot 10^3, 5\cdot 10^3\}$ based on the validation MRR.
Finally, we perform 5 repetitions with different seeds and report the average MRR and two standard deviations in \cref{tab:best-results-kbc-extended}.

\shortparagraph{Parameters initialisation.}
Following \citep{lacroix2018cp-kbc,chen2021relational-auxiliary-objective}, the parameters of \CP and \ComplEx are initialised by sampling from a normal distribution $\calN(0,\sigma^2)$ with $\sigma=10^{-3}$.
Since the embedding values in \NNegCP and \NNegComplEx can be interpreted as parameters of categorical distributions over entities and predicates (see \cref{app:interpreting-non-negative}), we initialise them by sampling from a Dirichlet distribution with concentration factors set to $10^3$.
To allow unconstrained optimisation for \NNegCP and \NNegComplEx, we represent the embedding values by their logarithm and perform computations directly in log-space, i.e., summations and log-sum-exp operations instead of multiplications and summations, respectively.
Moreover, the parameters that ensure the non-negativity of \NNegComplEx (see \cref{app:realising-monocomplex}) are initialised by sampling from a normal distribution $\calN(0,\sigma^2)$ with $\sigma=10^{-2}$.
We initialise the parameters of \SquaredCP and \SquaredComplEx such that the logarithm of the scores are approximately normally distributed and centred in zero during the initial optimisation steps, since this applies for the scores given by \CP and \ComplEx.
Such initialisation therefore permits a fairer comparison.
To do so, we initialise the embedding values by sampling from a log-normal distribution $\calL\calN(\mu,\sigma^2)$, where $\mu=-\log(d)/3 - \sigma^2/2$ for \CP and $\mu=-\log(2d)/3 - \sigma^2/2$ for \ComplEx, both with $\sigma=10^{-3}$.
The mentioned values for $\mu$ can be derived via Fenton-Wilkinson approximation~\citep{fenton1960lognormal}.
Even though the parameters of \OurModels obtained via non-monotonic squaring are initialised to be non-negative, they are free of becoming negative during training (as we also confirm in practice).

\shortparagraph{Hardware.}
Experiments on the smaller knowledge graphs \textsf{FB15K-237} and \textsf{WN18RR} were run on a single Nvidia GTX 1060 with 6 GiB of memory, while those on the larger \textsf{ogbl-biokg} and \textsf{ogbl-wikikg2} were run on a single Nvidia RTX A6000 with 48 GiB of memory.

\subsection{Additional Experimental Results}
\label{app:additional-results}

\subsubsection{Link Prediction Results}
\label{app:additional-link-prediction}

In this section, we present the additional results regarding the link prediction experiments showed in \cref{sec:empirical-evaluation-kbc} and analyse different metrics and learning settings.

\shortparagraph{Statistical tests and hits at $k$.}
\cref{tab:best-results-kbc-extended} shows the best test average MRRs (see \cref{app:metrics}) with two standard deviation and average training time across 5 independent runs with different seeds.
We highlight the best results in bold according to a one-sided Mann–Whitney U test with a confidence level of 99\%.
The showed results in terms of MRRs are also confirmed in \cref{tab:best-results-kbc-hits}, which shows the best average Hits@$k$ (see \cref{app:metrics}) with $k\in\{1,3,10\}$.

\shortparagraph{Average log-likelihood.}
For the best \OurModels for link prediction showed in \cref{tab:best-results-kbc-extended},
we report the average log-likelihood of test triples and two standard deviations (across 5 independent runs) in \cref{tab:average-ll}.
We again highlight the best results in bold, according to a one-sided Mann–Whitney U test.

\shortparagraph{Quickly distilling parameters.}
As discussed in \cref{sec:non-monotonic-squaring}, since learned KGE models mostly assign non-negative scores to triples (see \cref{app:distribution-scores}) we can initialise the parameters of \OurModels obtained by squaring with the parameters of already-learned KGE models, without losing much in terms of link prediction performances.
Here, we test this hypothesis and fine-tune \OurModels initialised in this way by using either the PLL or MLE objectives (\cref{eq:pll-objective,eq:mle-objective}).
To do so, we first collect the parameters of the best \CP and \ComplEx found for link prediction (see \cref{sec:empirical-evaluation-kbc}).
Then, we initialise \OurModels derived by squaring with these parameters and fine-tune them until the MRR computed on validation triples does not improve after three consecutive epochs.
We employ Adam \citep{kingma2014adam} as optimiser, and we search for the batch size in $\{5\cdot 10^2, 10^3, 2\cdot 10^3, 5\cdot 10^3\}$ and learning rate in $\{10^{-3}, 10^{-4}\}$, as fine-tuning may require a lower learning rate than the one used in previous experiments (see \cref{app:experimental-setting}).
\cref{tab:parameters-distillation} shows the MRRs achieved by \CP, \ComplEx and the corresponding \OurModels obtained via squaring that are initialised by distilling the parameters from the already-trained \CP and \ComplEx.
On \textsf{FB15K-237} and \textsf{WN18RR}, distilling parameters induces a substantial improvement in terms of MRR with respect to \SquaredCP and \SquaredComplEx whose parameters have been initialised randomly (see \cref{app:experimental-setting}).
Furthermore, for \SquaredComplEx and on \textsf{WN18RR} and \textsf{ogbl-biokg} we achieved similar MRRs with respect to \ComplEx \emph{without} the need of fine-tuning.

\begin{table}[H]
    \tablesize
    \caption{\textbf{\OurModels are competitive with their energy-based counterparts.} Best test MRRs (and two standard deviations) of \CP, \ComplEx and \OurModels trained with the PLL and MLE objectives (\cref{eq:pll-objective,eq:mle-objective}). In parentheses we show the average training time (in minutes).\\
    }
    \label{tab:best-results-kbc-extended}
    \centering
    \setlength{\tabcolsep}{1pt}
    \begin{tabular}{l
            c>{\scriptsize}rc>{\scriptsize}r
            c>{\scriptsize}rc>{\scriptsize}r
            c>{\scriptsize}rc>{\scriptsize}r}
        \toprule
        \multirow{2}{*}{\textbf{Model}}
        & \multicolumn{4}{c}{\textsf{\tablesize FB15k-237}}
        & \multicolumn{4}{c}{\textsf{\tablesize WN18RR}}
        & \multicolumn{4}{c}{\textsf{\tablesize ogbl-biokg}} \\
        \cmidrule(lr){2-5}
        \cmidrule(lr){6-9}
        \cmidrule(lr){10-13}
        & \multicolumn{2}{c}{PLL} & \multicolumn{2}{c}{MLE}
        & \multicolumn{2}{c}{PLL} & \multicolumn{2}{c}{MLE}
        & \multicolumn{2}{c}{PLL} & \multicolumn{2}{c}{MLE} \\
        \midrule
        \CP                 
            & 0.310 \textpm 0.001 &   (8) & \multicolumn{2}{c}{---}
            & \bfseries 0.105 \textpm 0.007 &  (11) & \multicolumn{2}{c}{---}
            & 0.831 \textpm 0.001 & (136) & \multicolumn{2}{c}{---} \\
        \NNegCP
            & 0.237 \textpm 0.003 &   (1) & 0.230 \textpm 0.003 &  (1) %
            & 0.027 \textpm 0.002 &   (1) & 0.026 \textpm 0.001 &  (1) %
            & 0.496 \textpm 0.013 & (172) & 0.501 \textpm 0.010 & (142) \\
        \SquaredCP          
            & \bfseries 0.315 \textpm 0.003 &   (8) & 0.282 \textpm 0.004 &  (7)
            & \bfseries 0.104 \textpm 0.001 &  (23) & 0.091 \textpm 0.004 & (23)
            & \bfseries 0.848 \textpm 0.001 &  (66) & 0.829 \textpm 0.001 &  (61) \\
        \midrule
        \ComplEx            
            & \bfseries 0.342 \textpm 0.005 &  (36) & \multicolumn{2}{c}{---}
            & \bfseries 0.471 \textpm 0.002 &  (16) & \multicolumn{2}{c}{---}
            & 0.829 \textpm 0.001 & (180) & \multicolumn{2}{c}{---} \\
        \NNegComplEx        
            & 0.214 \textpm 0.003 &  (10) & 0.205 \textpm 0.006 &  (5)   %
            & 0.030 \textpm 0.001 &   (6) & 0.029 \textpm 0.001 &  (3) %
            & 0.503 \textpm 0.014 & (245) & 0.516 \textpm 0.009 & (212) \\
        \SquaredComplEx     
            & 0.334 \textpm 0.001 &  (10) & 0.300 \textpm 0.003 & (16)
            & 0.420 \textpm 0.011 &  (37) & 0.391 \textpm 0.004 & (19)
            & \bfseries 0.858 \textpm 0.001 &  (71) & 0.840 \textpm 0.001 & (59) \\
        \bottomrule
    \end{tabular}
    \vspace{-10pt}
\end{table}

\begin{table}[H]
    \tablesize
    \caption{\textbf{Hits@$k$ results.} Average test Hits@$k$ for $k\in\{1,3,10\}$ of \CP, \ComplEx and \SquaredCP and \SquaredComplEx trained with the PLL or MLE objectives.\\}
    \label{tab:best-results-kbc-hits}
    \centering
    \begin{tabular}{l@{\ \ \ }
            r@{\ \ }r@{\ \ }r@{\quad}r@{\ \ }r@{\ \ }r@{\quad}
            r@{\ \ }r@{\ \ }r@{\quad}r@{\ \ }r@{\ \ }r@{\quad}
            r@{\ \ }r@{\ \ }r@{\quad}r@{\ \ }r@{\ \ }r@{\quad}}
        \toprule
        & \multicolumn{6}{c}{\textsf{\tablesize FB15k-237}}
        & \multicolumn{6}{c}{\textsf{\tablesize WN18RR}}
        & \multicolumn{6}{c}{\textsf{\tablesize ogbl-biokg}} \\
        \cmidrule(r){2-7}
        \cmidrule(r){8-13}
        \cmidrule(r){14-19}
        & \multicolumn{3}{c}{PLL} & \multicolumn{3}{c}{MLE}
        & \multicolumn{3}{c}{PLL} & \multicolumn{3}{c}{MLE}
        & \multicolumn{3}{c}{PLL} & \multicolumn{3}{c}{MLE} \\
        \cmidrule(r){2-4}\cmidrule(r){5-7}
        \cmidrule(r){8-10}\cmidrule(r){11-13}
        \cmidrule(r){14-16}\cmidrule(r){17-19} 
        \multicolumn{1}{r}{\textbf{Model} $k=$} & 1 & 3 & 10 & 1 & 3 & 10 & 1 & 3 & 10 & 1 & 3 & 10 & 1 & 3 & 10 & 1 & 3 & 10 \\
        \cmidrule(r){2-4}\cmidrule(r){5-7}
        \cmidrule(r){8-10}\cmidrule(r){11-13}
        \cmidrule(r){14-16}\cmidrule(r){17-19}
        \CP                 
            & 22.4 & 34.1 & 48.2 & --- & --- & ---
            &  7.5 & 12.1 & 16.8 & --- & --- & ---
            & 76.4 & 88.1 & 95.0 & --- & --- & --- \\
        \NNegCP             
            & 17.0 & 25.8 & 36.7 & 16.7 & 24.9 & 35.4
            &  1.7 &  2.7 &  4.5 &  1.6 &  2.5 &  4.4
            & 38.0 & 54.4 & 73.4 & 38.4 & 55.0 & 74.4 \\
        \SquaredCP          
            & 23.1 & 34.8 & 48.2 & 20.5 & 30.8 & 43.5
            &  6.7 & 12.1 & 17.6 &  5.9 & 10.7 & 15.3
            & 78.6 & 89.5 & 95.7 & 76.1 & 88.1 & 95.0 \\
        \midrule
        \ComplEx            
            & 25.2 & 37.5 & 52.5 & --- & --- & ---
            & 43.3 & 48.6 & 54.6 & --- & --- & ---
            & 76.1 & 87.9 & 95.0 & --- & --- & --- \\
        \NNegComplEx        
            & 15.7 & 23.1 & 31.7 & 15.0 & 22.1 & 30.4
            &  1.5 &  2.7 &  4.5 &  1.6 &  2.5 &  4.4
            & 38.8 & 55.1 & 74.1 & 40.0 & 56.7 & 75.9 \\
        \SquaredComplEx     
            & 24.5 & 36.9 & 51.1 & 21.6 & 33.0 & 46.7
            & 36.0 & 45.6 & 52.4 & 34.5 & 42.3 & 46.9
            & 80.0 & 90.1 & 95.8 & 77.5 & 88.8 & 95.4 \\
        \bottomrule
    \end{tabular}
\end{table}

\begin{table}[H]
    \caption{\textbf{Better distribution estimation with \OurModels obtained via squaring.} Average log-likelihood of test triples achieved by baselines and \OurModels trained with the PLL or MLE objectives.\\}
    \label{tab:average-ll}
    \tablesize
    \centering
    \setlength{\tabcolsep}{4pt}
    \begin{tabular}{l
            cccccc
            cccccc}
        \toprule
        \textbf{Model}
        & \multicolumn{2}{c}{\textsf{\tablesize FB15k-237}}
        & \multicolumn{2}{c}{\textsf{\tablesize WN18RR}}
        & \multicolumn{2}{c}{\textsf{\tablesize ogbl-biokg}} \\
        \midrule
        Uniform
            & \multicolumn{2}{c}{-24.638}
            & \multicolumn{2}{c}{-23.638}
            & \multicolumn{2}{c}{-26.829}   \\
        NNMFAug
            & \multicolumn{2}{c}{-19.270}
            & \multicolumn{2}{c}{-22.938}
            & \multicolumn{2}{c}{-17.562}   \\
        \midrule
        & PLL & MLE
        & PLL & MLE
        & PLL & MLE \\
        \cmidrule(lr){2-3} \cmidrule(lr){4-5} \cmidrule(lr){6-7}
        \NNegCP        
            & -16.773 \textpm 0.040 & -16.592 \textpm 0.059
            & \bfseries -21.987 \textpm 0.006 & -22.103 \textpm 0.010
            & -17.900 \textpm 0.048 & -17.416 \textpm 0.049 \\
        \SquaredCP     
            & -17.105 \textpm 0.031 & \bfseries -15.982 \textpm 0.028
            & -24.911 \textpm 0.241 & -26.352 \textpm 0.077 
            & -17.231 \textpm 0.059 & \bfseries -16.533 \textpm 0.013 \\
        \midrule
        \NNegComplEx   
            & -17.507 \textpm 0.035 & -17.592 \textpm 0.039
            & -21.233 \textpm 0.058 & -21.432 \textpm 0.008
            & -18.716 \textpm 0.088 & -17.749 \textpm 0.019 \\
        \SquaredComplEx
            & -17.100 \textpm 0.026 & \bfseries -15.744 \textpm 0.041
            & \bfseries -19.522 \textpm 0.530 & \bfseries -19.739 \textpm 0.214
            & -17.340 \textpm 0.022 & \bfseries -16.518 \textpm 0.003 \\
        \bottomrule
    \end{tabular}
\end{table}

\begin{table}[H]
    \caption{\textbf{Distilling parameters can improve performances.} Test MRRs achieved by \CP, \ComplEx and \OurModels obtained by squaring (\cref{sec:non-monotonic-squaring}). For \SquaredCP and \SquaredComplEx we report the best MRRs achieved by distilling the parameters from the already-learned \CP and \ComplEx (denoted with $\star$), and with \dag\ we denote those results for which further fine-tuning with the PLL or MLE objectives did not bring better results.
    We underline results for which distilling parameters increased the MRR.\\}
    \label{tab:parameters-distillation}
    \centering
    \tablesize
    \begin{tabular}{lllllll}
        \toprule
        \multirow{2}{*}{\textbf{Model}}
        & \multicolumn{2}{c}{\textsf{\tablesize FB15k-237}}
        & \multicolumn{2}{c}{\textsf{\tablesize WN18RR}}
        & \multicolumn{2}{c}{\textsf{\tablesize ogbl-biokg}} \\
        \cmidrule(lr){2-3}
        \cmidrule(lr){4-5}
        \cmidrule(lr){6-7}
        & PLL & MLE
        & PLL & MLE
        & PLL & MLE \\
        \midrule
        \CP                 
            & 0.311 & ---
            & 0.108 & ---
            & 0.831 & --- \\
        \ComplEx            
            & 0.344 & ---
            & 0.470 & ---
            & 0.829 & --- \\
        \midrule
        \SquaredCP          
            & 0.317 & 0.285
            & 0.103 & 0.089
            & 0.849 & 0.830 \\
        \SquaredCP $\star$     
            & \underline{0.327} & \underline{0.315}
            & 0.102 & \underline{0.115}
            & 0.851 & 0.828 \dag \\
        \midrule
        \SquaredComplEx     
            & 0.333 & 0.301
            & 0.416 & 0.390
            & 0.859 & 0.839 \\
        \SquaredComplEx $\star$
            & \underline{0.342}      & \underline{0.340} 
            & \underline{0.462} \dag & \underline{0.463}
            & 0.859 & 0.828 \dag \\
        \bottomrule
    \end{tabular}
\end{table}

\subsubsection{Quality of Sampled Triples Results}
\label{app:sampling-triples-results}

In this section, we provide additional results regarding the evaluation of the quality of triples sampled by \OurModels (see \cref{sec:empirical-evaluation-sampling}).
For these experiments, we search for the same hyperparameters as for the experiments on link prediction (see \cref{app:experimental-setting}), and train \OurModels until the average log-likelihood computed on validation triples does not improve after three consecutive epochs.

\cref{tab:ktd-samples-extended} shows the mean empirical KTD score and one standard deviation (see \cref{app:empirical-ktd-score}).
In addition, we visualise triple embeddings of sampled and test triples in \cref{fig:tsne-triple-embeddings} by leveraging t-SNE~\citep{vandermaaten2008tsne} as a method for visualising high-dimensional data.
In particular, we apply the t-SNE method implemented in \href{https://scikit-learn.org/stable/modules/generated/sklearn.manifold.TSNE.html}{scikit-learn} with perplexity $50$ and number of iterations $5\cdot 10^3$, while other parameters are fixed to their default value.
As showed in \cref{fig:tsne-triple-embeddings-ogbl-biokg}, an empirical KTD score close to zero translates to an high clusters similarity between embeddings of sampled and test triples.

\begin{table}[H]
    \caption{\textbf{\OurModels trained by MLE generate new likely triples.} Empirical KTD scores between test triples and triples generated by baselines and \OurModels trained with the PLL objective or by MLE (\cref{eq:pll-objective,eq:mle-objective}). Lower is better.\\}
    \label{tab:ktd-samples-extended}
    \tablesize
    \centering
    \begin{tabular}{l
            c@{\:}>{\scriptsize}rc@{\:}>{\scriptsize}rc@{\:}>{\scriptsize}r
            c@{\:}>{\scriptsize}rc@{\:}>{\scriptsize}rc@{\:}>{\scriptsize}r}
        \toprule
        \textbf{Model}
        & \multicolumn{4}{c}{\textsf{\tablesize FB15k-237}}
        & \multicolumn{4}{c}{\textsf{\tablesize WN18RR}}
        & \multicolumn{4}{c}{\textsf{\tablesize ogbl-biokg}} \\
                \cmidrule(lr){2-5} \cmidrule(lr){6-9} \cmidrule(lr){10-13}
        Training set
            & \multicolumn{4}{c}{0.055\:{\scriptsize \textpm 0.007}}
            & \multicolumn{4}{c}{0.260\:{\scriptsize \textpm 0.013}}
            & \multicolumn{4}{c}{0.029\:{\scriptsize \textpm 0.010}} \\
        Uniform        
            & \multicolumn{4}{c}{0.589\:{\scriptsize \textpm 0.012}}
            & \multicolumn{4}{c}{0.766\:{\scriptsize \textpm 0.036}}
            & \multicolumn{4}{c}{1.822\:{\scriptsize \textpm 0.044}} \\
        NNMFAug        
            & \multicolumn{4}{c}{0.414\:{\scriptsize \textpm 0.014}}
            & \multicolumn{4}{c}{0.607\:{\scriptsize \textpm 0.028}}
            & \multicolumn{4}{c}{0.518\:{\scriptsize \textpm 0.035}} \\
        \midrule
        & \multicolumn{2}{c}{PLL} & \multicolumn{2}{c}{MLE}
        & \multicolumn{2}{c}{PLL} & \multicolumn{2}{c}{MLE}
        & \multicolumn{2}{c}{PLL} & \multicolumn{2}{c}{MLE} \\
        \cmidrule(lr){2-5} \cmidrule(lr){6-9} \cmidrule(lr){10-13}
        \NNegCP        
            & 0.404 & \textpm 0.016 & 0.433 & \textpm 0.015
            & 0.633 & \textpm 0.033 & \bfseries 0.578 & \bfseries \textpm 0.029
            & 0.966 & \textpm 0.040 & 0.738 & \textpm 0.030 \\
        \SquaredCP     
            & 0.253 & \textpm 0.014 & \bfseries 0.070 & \bfseries \textpm 0.007
            & 0.768 & \textpm 0.036 & 0.768 & \textpm 0.036
            & 0.039 & \textpm 0.009 & \bfseries 0.017 & \bfseries \textpm 0.013 \\
        \cmidrule(lr){1-13}
        \NNegComplEx
            & 0.336 & \textpm 0.016 & 0.323 & \textpm 0.015
            & 0.456 & \textpm 0.018 & 0.478 & \textpm 0.019
            & 0.175 & \textpm 0.019 & 0.097 & \textpm 0.013 \\
        \SquaredComplEx
            & 0.326 & \textpm 0.016 & \bfseries 0.102 & \bfseries \textpm 0.010
            & 0.338 & \textpm 0.020 & \bfseries 0.278 & \bfseries \textpm 0.017
            & 0.104 & \textpm 0.010 & \bfseries 0.034 & \bfseries \textpm 0.007 \\
        \bottomrule
    \end{tabular}
\end{table}

\begin{figure}[H]
    \centering
    \begin{subfigure}[t]{0.32\linewidth}
        \includegraphics[scale=0.49]{./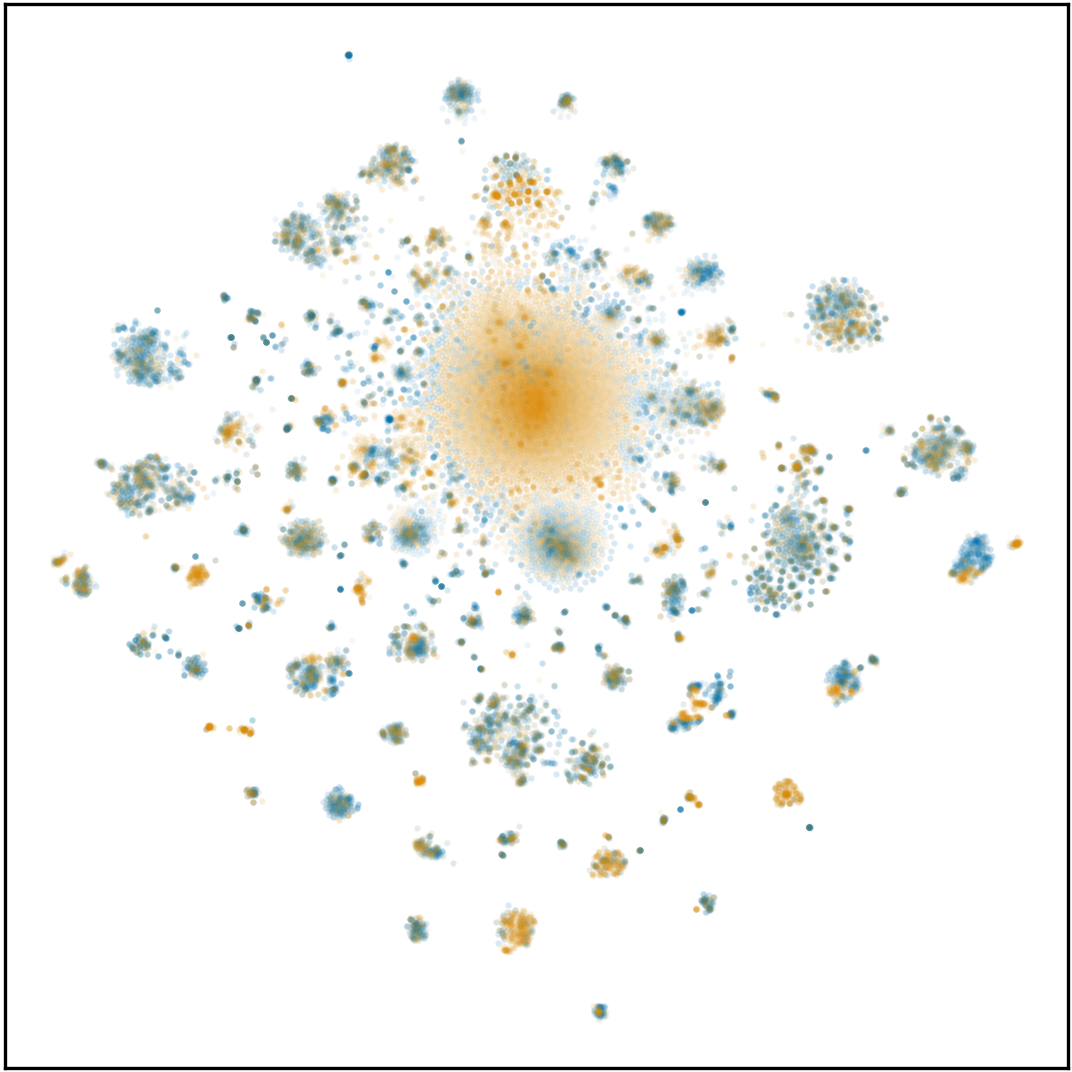}
        \caption{\textsf{FB15k-237}\\ $\operatorname{KTD}:=0.102 \pm 0.010$}
        \label{fig:tsne-triple-embeddings-fb15k-237}
    \end{subfigure}
    \begin{subfigure}[t]{0.32\linewidth}
        \includegraphics[scale=0.49]{./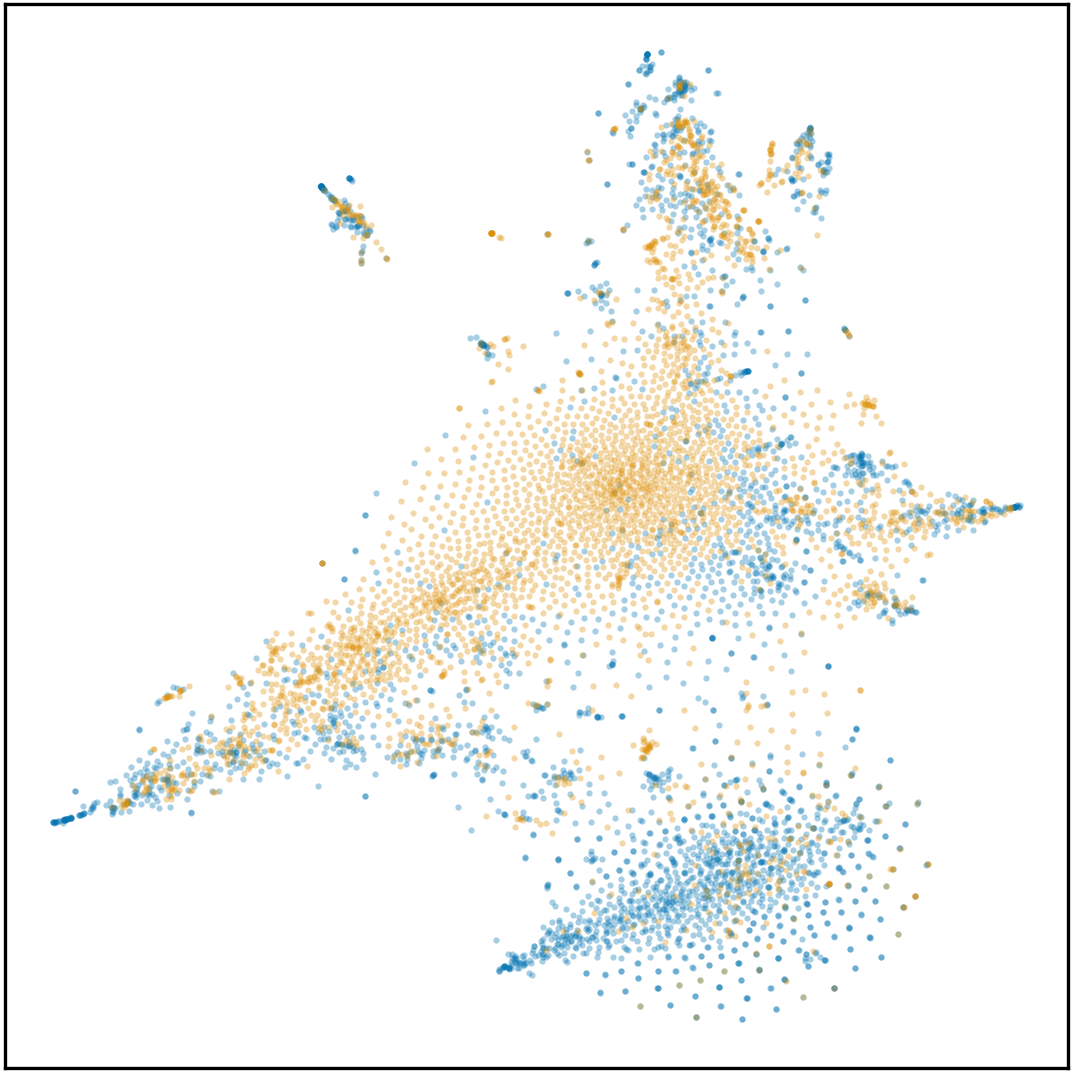}
        \caption{\textsf{WN18RR}\\ $\operatorname{KTD}:=0.278 \pm 0.017$}
        \label{fig:tsne-triple-embeddings-wn18rr}
    \end{subfigure}
    \begin{subfigure}[t]{0.32\linewidth}
        \includegraphics[scale=0.49]{./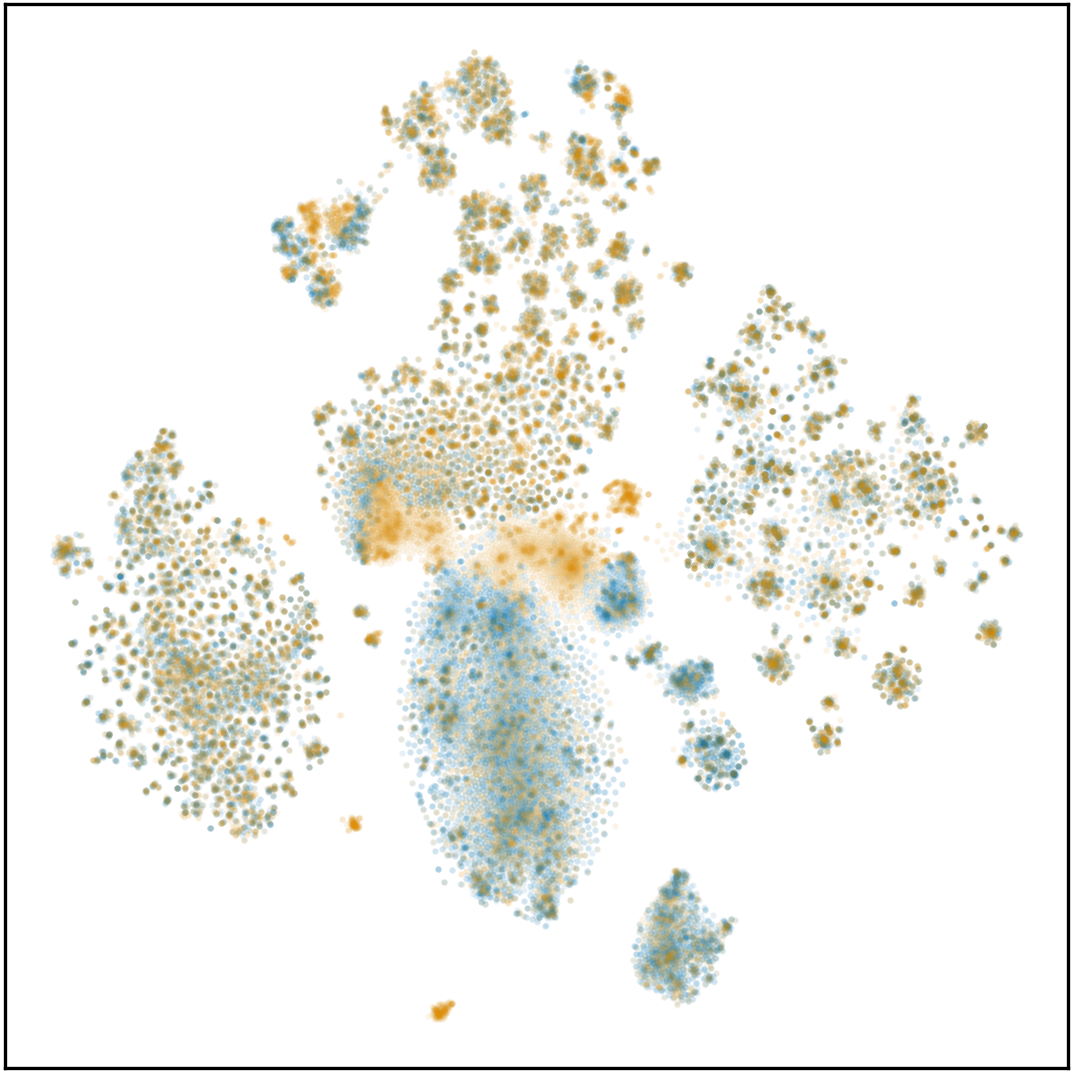}
        \caption{\textsf{ogbl-biokg}\\ $\operatorname{KTD}:=0.034 \pm 0.007$}
        \label{fig:tsne-triple-embeddings-ogbl-biokg}
    \end{subfigure}
    \caption{\textbf{Sampled triples are \textit{close} to test triples.} t-SNE \citep{vandermaaten2008tsne} visualisations of the embeddings of test triples (in blue) and triples sampled by \SquaredComplEx (in orange).
    The distribution shift between training and test triples on \textsf{WN18RR} mentioned in \cref{sec:empirical-evaluation-sampling} is further confirmed in \cref{fig:tsne-triple-embeddings-wn18rr}, as it shows a region of test triples (at the bottom and in blue) that is not covered by many generated triples.
    }
    \label{fig:tsne-triple-embeddings}
\end{figure}

\subsubsection{Calibration Diagrams}
\label{app:calibration-diagrams}

Existing works on studying the calibration of KGE models are based on interpreting each possible triple $(s,r,o)$ as an independent Bernoulli random variable $Y_{sro}$ whose likelihood is determined by the score function $\phi$, i.e., $\Pr(Y_{sro} = 1 \mid s,r,o) = \sigma(\phi(s,r,o))$~\citep{tabacof2019probability,pezeshkpour2020revisiting-evaluation-kbc,zhu2023closer-look-calibration}, where $\sigma$ denotes the logistic function.
While \OurModels encode a probability distribution over all possible triples, this does not impede us to reinterpret them to model the likelihood of each $Y_{sro}$ by still considering scores in log-space as negated energies (see \cref{sec:background}).
Therefore, to evaluate the calibration of \OurModels encoding a non-negative score function $\phi_\mathsf{pc}$ (see \cref{sec:casting-kge-models-to-pcs}) we compute the probability of a triple $(s,r,o)$ being true as $p(Y_{sro}=1\mid s,r,o) := \sigma(\log \phi_\mathsf{pc}(s,r,o))$.
However, the usage of the logistic function might give misleading results in case of scores not being centred around zero on average.
Therefore, we also report calibration diagrams (see paragraph below) where the $p(Y_{sro}=1\mid s,r,o)$ is obtained via min-max normalisation of the scores given by KGE models (the logarithm of the scores given for \OurModels), where the minimum and maximum are computed on the training triples.
Note that several ex-post (re-)calibration techniques are available \citep{tabacof2019probability,zhu2023closer-look-calibration}, but they should benefit \OurModels as they do with existing KGE models.

\shortparagraph{Setting and metrics.}
To plot calibration diagrams, we follow \citet{socher2013reasoning} and sample challenging negative triples, i.e., for each test triple $(s,r,o)$ we sample an unobserved perturbed one $(s,r,\widehat{o})$ by replacing the object with an entity that has appeared at least once with the predicate $r$ in the training data.
We then compute the \emph{empirical calibration error} (ECE) \citep{zhu2023closer-look-calibration} as $\mathrm{ECE} := \frac{1}{b} \sum_{i=1}^b |p_j - f_j|$, where $b$ is the number of uniformly-chosen bins for the interval $[0,1]$ of triple probabilities, and $p_j,f_j$ are respectively the average probability and relative frequency of actually existing triples in the $j$-th bin.
The lower the ECE score, the better calibrated are the predictions, as they are closer to the empirical frequency of triples that do exist in each bin.
The calibration curves are plotted by considering the relative frequency of existing triples in each bin, and curves closer to the main diagonal indicate better calibrated predictions.

\shortparagraph{\OurModels are more calibrated out-of-the-box.}
\cref{fig:calibration-cp} (resp. \cref{fig:calibration-complex}) show calibration diagrams for \OurModels derived from \CP and \ComplEx trained with the MLE objective (\cref{eq:mle-objective}) (resp. PLL objective (\cref{eq:pll-objective})).
In 19 cases over 24, \OurModels obtained via squaring (\cref{sec:non-monotonic-squaring}) achieve lower ECE scores and better calibrated curves than \CP and \ComplEx.
While \OurModels obtained via non-negative restriction (\cref{sec:non-negative-restriction}) are not well calibrated when using the logistic function, on \textsf{ogbl-biokg} \citep{hu2020ogb} they are still better calibrated than \CP and \ComplEx when probabilities are obtained via min-max normalisation.
Furthermore, on \textsf{WN18RR} \OurModels achieved the highest ECE scores (corresponding to poorly-calibrated predictions), which could be explained by the distribution shift between training and test triples that was observed for this KG in \cref{sec:empirical-evaluation-sampling} and further confirmed in \cref{app:sampling-triples-results}.

\captionsetup[sub]{font=large}

\begin{figure}
    \begin{subfigure}{0.95\textwidth}
        \begin{minipage}{0.15\textwidth}
            \subcaption{}
            \label{fig:calibration-cp-pll}
        \end{minipage}
        \begin{minipage}{0.85\textwidth}
            \includegraphics[scale=0.75]{./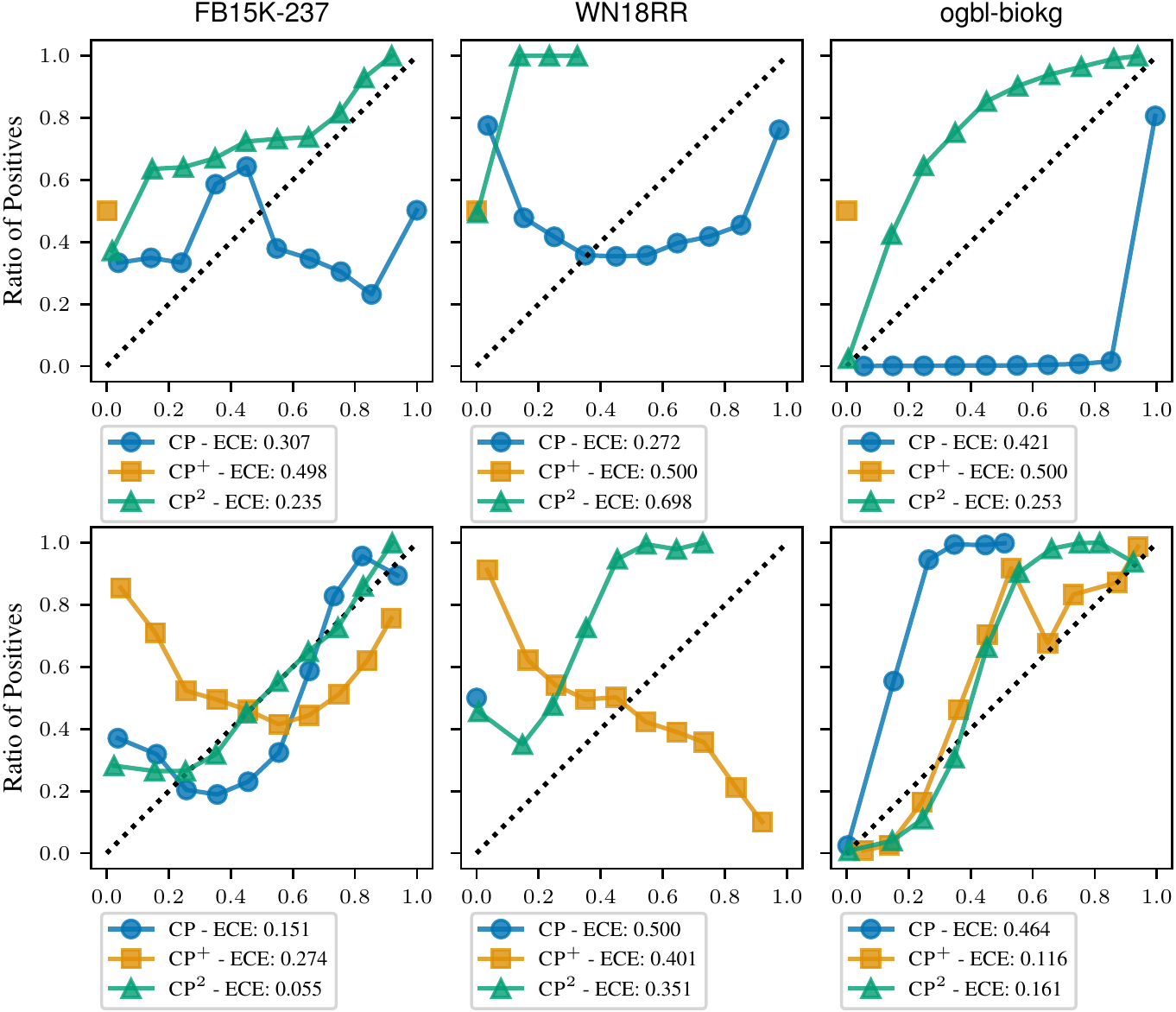}
        \end{minipage}
    \end{subfigure}
    \par
    \vspace{1cm}
    \begin{subfigure}{0.95\textwidth}
        \begin{minipage}{0.15\textwidth}
            \subcaption{}
            \label{fig:calibration-cp-mle}
        \end{minipage}
        \begin{minipage}{0.85\textwidth}
            \includegraphics[scale=0.75]{./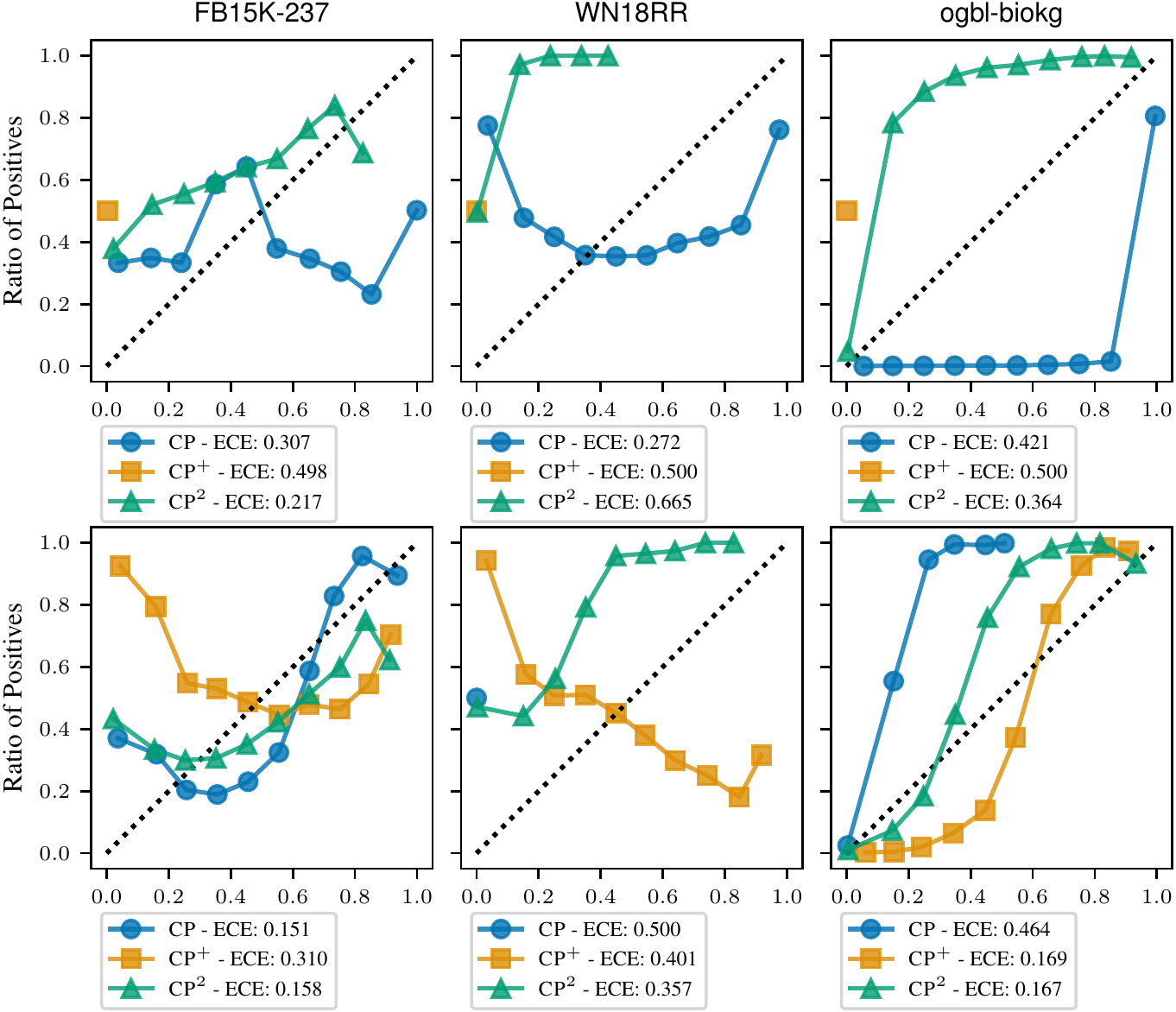}
        \end{minipage}
    \end{subfigure}
    \caption{\textbf{Better calibrated predictions with \SquaredCP.} Calibration diagrams of \CP, \NNegCP and \SquaredCP trained with either the PLL (\cref{fig:calibration-cp-pll}) or MLE (\cref{fig:calibration-cp-mle}) objectives. The probability of triples are obtained via the application of the logistic function (rows above) and min-max normalisation (rows below). See \cref{app:calibration-diagrams} for details.
    The calibration curves for \NNegCP where triple probabilities are obtained with the logistic function do not provide any meaningful information, as the logarithm of their scores are generally distributed over large negative values.
    }
    \label{fig:calibration-cp}
\end{figure}

\begin{figure}
    \begin{subfigure}{0.95\textwidth}
        \begin{minipage}{0.15\textwidth}
            \subcaption{}
            \label{fig:calibration-complex-pll}
        \end{minipage}
        \begin{minipage}{0.85\textwidth}
            \includegraphics[scale=0.75]{./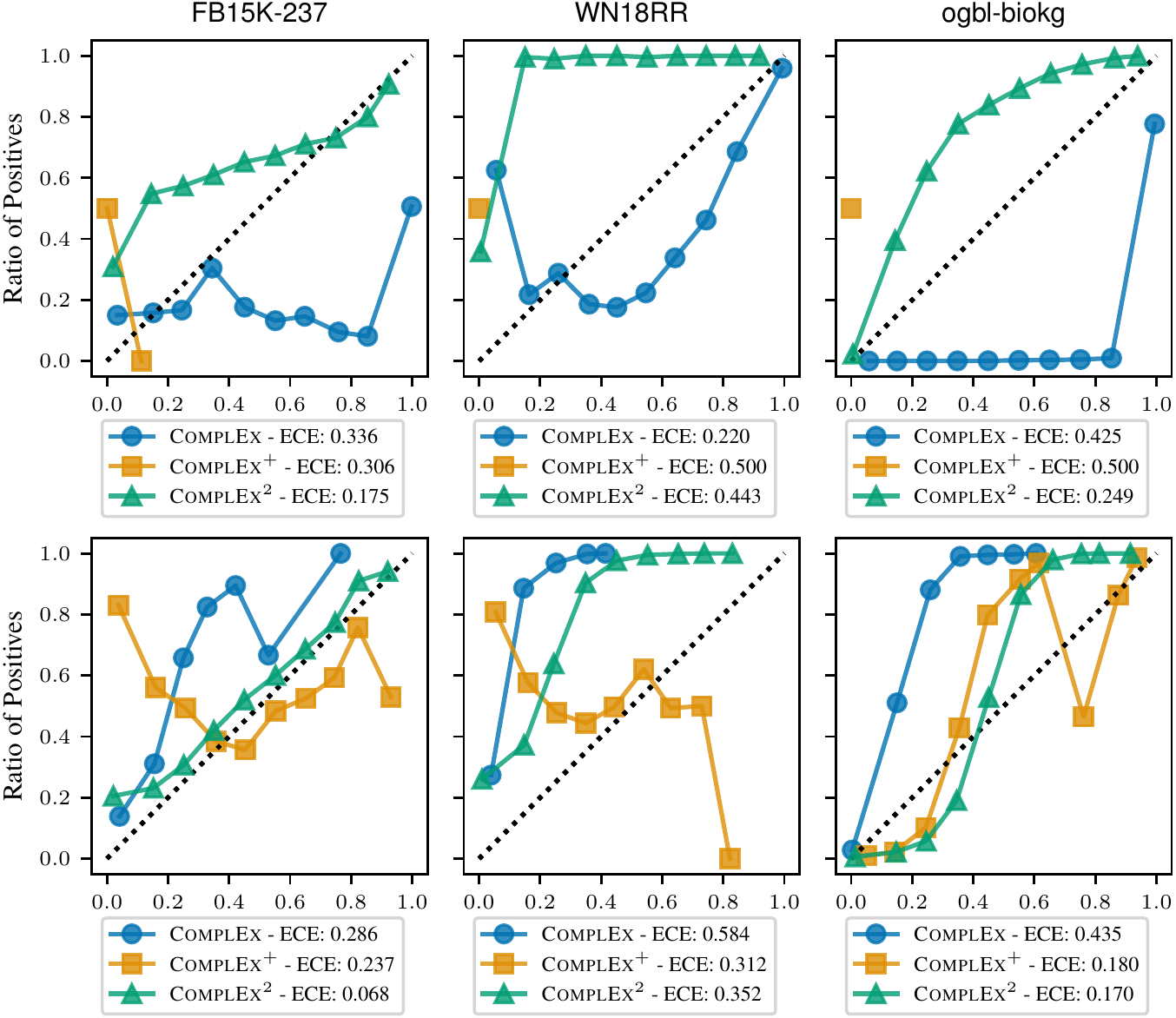}
        \end{minipage}
    \end{subfigure}
    \par
    \vspace{1cm}
    \begin{subfigure}{0.95\textwidth}
        \begin{minipage}{0.15\textwidth}
            \subcaption{}
            \label{fig:calibration-complex-mle}
        \end{minipage}
        \begin{minipage}{0.85\textwidth}
            \includegraphics[scale=0.75]{./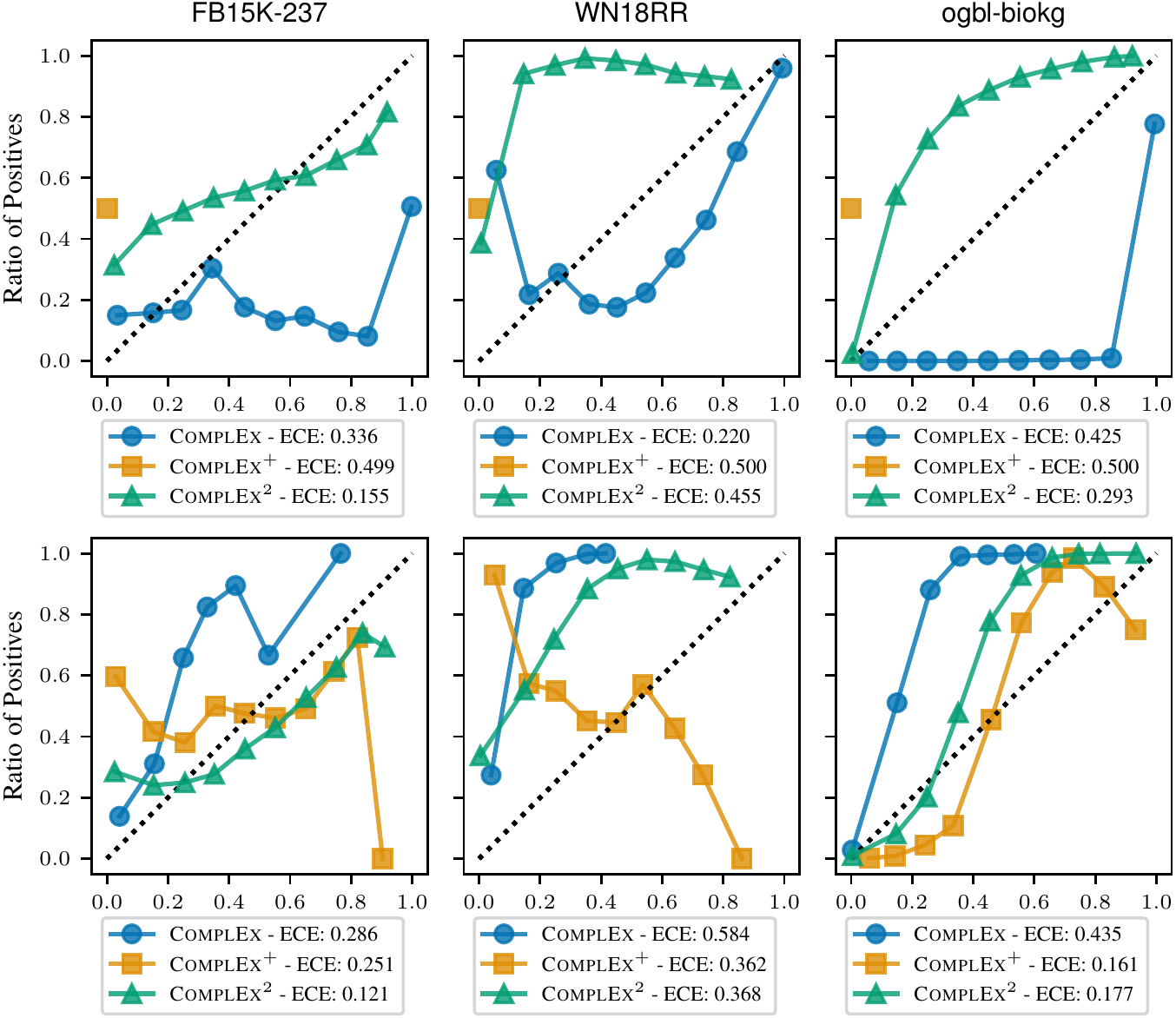}
        \end{minipage}
    \end{subfigure}
    \caption{\textbf{Better calibrated predictions with \SquaredComplEx.} Calibration diagrams of \ComplEx, \NNegComplEx and \SquaredComplEx trained with either the PLL (\cref{fig:calibration-complex-pll}) or MLE (\cref{fig:calibration-complex-mle}) objectives. The probability of triples are obtained via the application of the logistic function (rows above) and min-max normalisation (rows below). See \cref{app:calibration-diagrams} for details. The calibration curves for \NNegComplEx where triple probabilities are obtained with the logistic function do not provide any meaningful information, as the logarithm of their scores are generally distributed over large negative values.
    }
    \label{fig:calibration-complex}
\end{figure}

\end{document}